\documentclass[10pt]{article} 

\usepackage[preprint]{tmlr}

\usepackage{amsmath}
\usepackage{amssymb}
\usepackage{amsthm}
\usepackage{mathtools}
\usepackage{bm}
\usepackage{graphicx}
\usepackage{booktabs}
\usepackage{array}
\usepackage{multirow}
\usepackage{enumitem}
\usepackage{hyperref}
\usepackage{url}

\theoremstyle{plain}
\newtheorem{theorem}{Theorem}
\newtheorem{proposition}{Proposition}
\newtheorem{lemma}{Lemma}
\newtheorem{corollary}{Corollary}
\theoremstyle{definition}

\newtheorem{remark}{Remark}
\newtheorem{assumption}{Assumption}

\newcommand{\R}{\mathbb{R}}

\newcommand{\xbase}{\mathbf{x}_{\text{base}}}
\newcommand{\xspec}{\mathbf{x}_{\text{spec}}}
\newcommand{\Abase}{A_{\text{base}}}
\newcommand{\Aspec}{A_{\text{spec}}}
\newcommand{\bvec}{\mathbf{b}}

\newcommand{\Description}[1]{}

\title{Contextual Deconvolution for Variance-Stable Demand Sensing:\\
Kernel-Modulated Operators in Promotional Retail}

\author{\name Mohammad Forouhesh \\ \email mforouhesh@aut.ac.ir \\
      \addr Amirkabir University of Technology, Tehran, Iran}

\def\month{MM}  
\def\year{YYYY} 
\def\openreview{\url{https://openreview.net/forum?id=XXXX}}

\begin{document}

\maketitle


\begin{abstract}
Machine learning forecasts optimize statistical accuracy yet leave excess operational volatility, inflating safety stock and amplifying the Bullwhip effect. In promotional retail, autoregressive models conflate transient demand shocks with structural shifts, propagating unsmoothed noise upstream. We introduce \textbf{Contextual Deconvolution} (CD), a two-stage estimator that reframes demand sensing as a convex decomposition of transfer-function dynamics: a kernel-modulated banded operator encodes promotional carryover, separating transient shocks from structural baselines without per-SKU training, and hierarchical partial pooling regularizes kernels across SKUs for catalog-scale deployment. The operator is data-derived, not imposed: it reduces to the identity wherever the promotional response is impulsive (most of M5, all of Favorita) and contributes only where genuine multi-day carryover exists (e.g.\ the Hobbies category), so the operational gains rest on the structural decomposition itself, with the kernel an adaptive refinement. We evaluate strictly out-of-sample on 30,490 M5 SKUs and 2,845 Favorita items, giving every calendar-aware baseline the same known future promotional and SNAP calendar CD uses. The load-bearing result is a full inventory-cost accounting: CD's smoothing lowers safety stock, holding cost, and order variance but under-provisions event-driven spikes, so it reduces \emph{total} inventory cost only when holding costs exceed $\sim$20\% of stockout costs (95\% CI $[17\%, 25\%]$), and is otherwise an operational-stability and inventory-capital layer rather than an expected-cost minimizer. The operational signature is a sharply lower Variance Ratio (median 0.0007 versus 0.0135 for Tuned XGBoost on M5, 0.0072 versus 0.0395 on Favorita) and substantially less safety stock ($0.16\times$ Tuned XGBoost's on M5, $0.38\times$ on Favorita) at matched nominal service level---though, consistent with the cost accounting, this lower buffer coincides with higher stockout rates on promotional spikes. Accuracy gains are modest at the median but substantial in the tail: across eleven methods spanning boosted, structural and intermittent families, CD attains the lowest cross-sectional dispersion of per-SKU error (SD $0.23\pm0.02$ versus 1.11--4.51) and mis-forecasts by more than 200\% on $0.8\%\pm0.1$ of SKUs against 9.9--20.6\% for every baseline, ranking first on both measures in all four independent 2{,}000-SKU M5 draws. Because the Variance Ratio and std-based safety stock are both minimized by any sufficiently smooth forecast, we treat them as diagnostics rather than objectives. A supporting operator-theoretic analysis shows the learned demand operators are non-normal, and that CD's compact parametric kernel matches their operational performance while remaining interpretable.
\end{abstract}


\section{Introduction}

Enterprise supply chains rely on data-driven capacity planning. ML models, gradient-boosted trees and temporal transformers, optimize point-wise accuracy (RMSE), leaving excess operational volatility in FMCG environments. Unsmoothed transients propagate upstream into replenishment, inflating safety stock and order variance. This creates a statistical-operational tradeoff: lower RMSE can coincide with higher operational instability where delayed exogenous shocks (promotions, SNAP) project into endogenous demand.

\begin{figure}[t]
\centering
\includegraphics[width=\columnwidth]{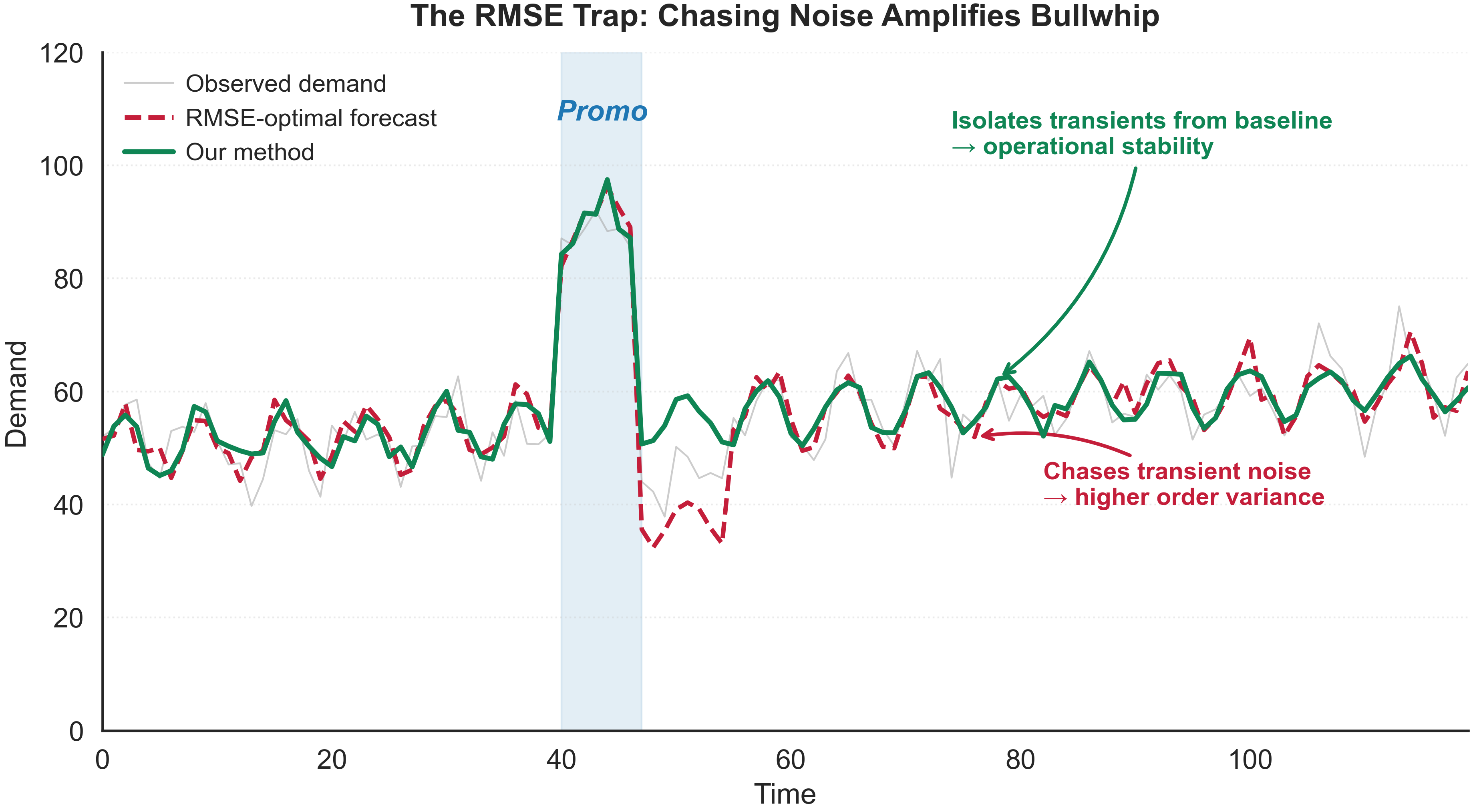}
\caption{The statistical-operational tradeoff. Standard ML forecasts (red, dashed) optimize point-wise accuracy but leave operational volatility in replenishment signals. Our method (green, solid) separates transients from the smooth baseline, yielding more stable planning signals.}
\label{fig:rmse_trap}
\Description{Figure content described in caption.}
\end{figure}

Volatility is driven by discrete exogenous events (promotions, SNAP) with complex temporal responses: a deep promotion generates an immediate surge followed by delayed carryover, while post-promotion dips are often masked at the category level~(\citealt{hendel2006postpromotiondip}). Standard ML models lack structural constraints to map delayed effects back to source events, internalizing them as structural collapses.

We reframe demand sensing as an \textbf{inverse problem}: recovering latent structural and transient components from observed signals.

Classical decomposition assumes an identity forward operator ($A = I$), treating anomalies as instantaneous, a simplification that fails for multi-day promotional footprints. Hawkes processes and distributed lag models capture delayed dynamics but are computationally prohibitive at scale.

We introduce a \textbf{scalable two-stage estimator} that combines distributed lag regression with a convex smoothness-plus-sparsity decomposition, enabled by hierarchical partial pooling for catalog-scale deployment. We call this framework \textbf{Contextual Deconvolution} (CD). It replaces the classical identity operator with a kernel-modulated banded matrix encoding promotional carryover, with a signed surge-and-vacuum extension as an optional inductive bias, separating transient shocks from structural demand without per-SKU training. CD regularizes and scales transfer-function estimation, trading expressiveness for interpretability, scalability, and operational stability.

\paragraph{Contributions.}
\begin{enumerate}
  \item \textbf{A Context-Modulated Banded Operator for Promotional Dynamics.} We introduce a structured operator $\Aspec(\boldsymbol{\theta})$ that encodes promotional carryover directly into a convex inverse problem, with an optional signed Difference-of-Exponentials extension as an inductive bias for post-promotion dips. The operator is a lower-triangular banded matrix from a causal kernel; non-zero at lag zero, it is invertible, and combined with the smoothness-plus-sparsity regularization yields a well-posed recovery of latent shocks from observed demand. This replaces the classical identity mapping ($A = I$) with a physically-motivated operator that enforces delayed carryover structure. We are careful about what the operator does and does not buy: an ablation (\S\ref{sec:large_scale}, Finding 4c) shows the operational gains over gradient boosting come from the \emph{structural decomposition} (smooth baseline plus sparse, calendar-driven shocks), and that the kernel is a \emph{data-derived} component which reduces to an impulse where promotions are impulsive (all of Favorita and most of M5, where it matches an $\Aspec=I$ baseline) yet adds measurable accuracy and stability where genuine multi-day carryover exists (the M5 hobbies category). It generalizes the decomposition to carryover-heavy regimes without degrading the impulsive case. The signed Difference-of-Exponentials extension is likewise optional rather than load-bearing: on M5 the post-promotion \emph{vacuum} is not empirically separable from pure exponential decay (Appendix~\ref{app:real_kernels}), so we report it as an inductive bias we test and find inessential, not a required component.
  
  \item \textbf{A Two-Stage Estimation Pipeline with Stability Analysis.} We introduce a scalable two-stage pipeline (Robust Huber Detrending + Distributed Lag Regression) that decouples baseline estimation from kernel recovery. Stage-2 kernel recovery is a standard ridge regression; stability analysis ensures kernel estimates degrade gracefully with bounded detrending error, confirming the two-stage separation is a stability mechanism, not merely a speed approximation (Appendix~\ref{app:complexity}).
  
  \item \textbf{Hierarchical Partial Pooling for Enterprise Scalability.} We introduce a partial pooling mechanism that regularizes per-SKU kernel estimates toward category-level and global medians. This makes the method deployable for SKUs with sparse event histories, common in enterprise retail, without per-SKU tuning or large historical samples. The full pipeline runs on CPU in under 40 minutes for 30,000 SKUs.
  
  \item \textbf{Empirical Validation at Scale, with an Honest Cost Accounting.} On 30,490 M5 SKUs and 2,845 Favorita items---strictly out-of-sample, with every calendar-aware baseline given the same future calendar---CD reduces operational variance (median Variance Ratio 0.0007 on M5, 0.0072 on Favorita, and well under half the safety stock of Tuned XGBoost---$0.16\times$ on M5, $0.38\times$ on Favorita) and wins the paired VR comparison on $\sim$74\% of SKUs (\S\ref{sec:large_scale}). Its accuracy contribution is one of \emph{reliability} rather than central tendency: against eleven methods spanning boosted, structural and intermittent families, CD attains the lowest cross-sectional dispersion of per-SKU error and an order-of-magnitude lower catastrophic-failure rate ($0.8\%$ of SKUs beyond $200\%$ error, versus $9.9$--$20.6\%$), ranking first on both measures in all four independent 2{,}000-SKU draws. Because the Variance Ratio is minimized by any sufficiently smooth forecast, we treat it as a diagnostic and anchor the contribution on a full inventory-cost analysis: CD lowers \emph{total} cost only when the holding-to-stockout ratio $h/p \gtrsim 0.20$ (95\% CI $[0.17, 0.25]$), and is otherwise a stability and inventory-capital layer rather than a cost minimizer. We further validate robustness via calendar corruption, kernel misspecification, and timing-shift tests.

  \item \textbf{An Interpretability Diagnostic via Operator Theory.} As a diagnostic---not a performance driver---we show that unconstrained Koopman/EDMD operators learned from retail demand are highly non-normal (transient growth 7--12$\times$ above spectral prediction on M5, up to 45.9$\times$ on Favorita), yet attain operational performance indistinguishable from CD's compact parametric kernel. This supports our modeling choice: a compact, interpretable operator loses nothing operationally against a diffuse learned one. The full analysis is deferred to Appendix~\ref{app:edmd}.
\end{enumerate}

\textbf{Scope and positioning.} CD is not a replacement for existing forecasting pipelines but an \emph{operational decision-support layer} that runs alongside them; it reduces total inventory cost only in holding-dominated regimes ($h/p \gtrsim 0.20$), and otherwise trades expected cost for operational stability and lower inventory capital. It is a \emph{conditional} framework: it requires knowledge of future promotional and SNAP calendars, and reverts to baseline-only forecasting when they are unknown.

\section{Related Work}
\label{sec:related_work}

Our framework sits at the intersection of predictive machine learning, econometric structural models, and convex signal decomposition. We explicitly position Contextual Deconvolution within the classical literature it builds upon.

\subsection{Machine Learning and the Bullwhip Effect}

Modern retail forecasting is dominated by gradient-boosted decision trees \citep{chen2016xgboost} and deep autoregressive models \citep{salinas2020deepar,wen2023transformers}. As demonstrated in the M5 Forecasting Competition, these models excel at minimizing point-wise statistical errors (RMSE) over massive, hierarchical datasets. However, supply chain operations are highly sensitive to demand volatility propagation, where unsmoothed forecasts inflate upstream safety stock and order variance \citep{lee1997bullwhip,macdonald2003supply,fransoo2000measuring}. Standard autoregressive models leave more volatility in the forecast than necessary because they map raw features directly to targets without structural constraints, conflating transient shocks with structural shifts.

\subsection{Inventory-Aware and Newsvendor-Aware Learning}

A growing body of work integrates inventory costs directly into the learning objective rather than treating them as a post-forecast operational step. \citet{orojlooy2020applying} train deep reinforcement-learning agents to minimize lost sales and holding costs end-to-end, bypassing explicit forecasting. \citet{bertsimas2020predictive} formulate the newsvendor problem with features and show that cost-sensitive learning can outperform two-stage predict-then-optimize pipelines. Recent work at the intersection of ML and operations research learns quantile forecasts or directly optimizes for inventory cost surrogates \citep{ban2019big}. These methods target the \emph{decision} layer; CD targets the \emph{decomposition} layer, providing an interpretable structural baseline and sparse shock vector that feeds into any downstream inventory optimizer. The two approaches are complementary: CD reduces the input volatility that inventory-aware methods must handle.

\subsection{Structural Time Series and Transfer Function Models}

The transfer-function formulation we adopt (Eq.~\eqref{eq:forward_model}, \S\ref{sec:method}) is not new. \textbf{Transfer function models} \citep{box1970time, box2016time} and \textbf{dynamic linear models} (DLMs) \citep{harvey1990forecasting, durbin2012time} have long represented endogenous variables as convolutions of exogenous shocks with parametric impulse responses. ARIMAX and ETS with regressors \citep{hyndman2008automatic} are standard tools for incorporating covariates. \textbf{Distributed lag models} \citep{lutkepohl2005new, shumway2017time} explicitly model delayed effects via lagged exogenous variables.

What classical methods lack is (1) a principled regularization mechanism separating transient shocks from smooth baselines at scale, and (2) a hierarchical pooling mechanism making per-SKU deployment feasible for sparse event histories. CD is a \emph{regularized, scalable implementation} of the transfer function framework: ridge shrinkage in the distributed lag regression, plus a smoothness-plus-sparsity convex decomposition replacing the classical residual-with-ARMA structure.

\subsection{Supply-Chain-Native Forecasting}

In intermittent-demand settings, \textbf{Croston's method} \citep{croston1972forecasting} and the \textbf{SBA correction} \citep{syntetos2005categorization} are the standard baselines for separating demand size from inter-arrival intervals. These methods are designed for sporadic, lumpy demand, not for the frequent but transient promotional surges characteristic of FMCG. They do not model delayed carryover or cannibalization effects. CD complements these methods by targeting the opposite regime: high-frequency, event-driven demand with structured temporal dynamics.

\subsection{Convex Signal Decomposition and Trend Filtering}

Decomposing a time series into structural and transient components is a canonical problem in signal processing. Foundational methods like the Hodrick-Prescott filter, $\ell_1$ Trend Filtering \citep{kim2009ell1,tibshirani2014adaptive}, and Robust PCA \citep{candes2011robust,chandrasekaran2011rank} separate data into low-rank (baseline) and sparse (anomalous) matrices. Classical decomposition assumes an identity forward operator ($A = I$), treating anomalies as instantaneous, which fails for multi-day promotional footprints. Econometric DLMs and Hawkes processes capture delayed dynamics but suffer multicollinearity and prohibitive likelihood estimation at enterprise scale.

\subsection{Scalable Time-Series Decomposition and Industrial Forecasting}

Amazon's DeepAR~\citep{salinas2020deepar} trains global probabilistic models on millions of SKUs. Uber's Orbit~\citep{ng2020orbit} implements Bayesian exponential smoothing at scale with hierarchical pooling. Decomposition-based transformers---Autoformer~\citep{wu2021autoformer}, FEDformer~\citep{zhou2022fedformer}, TimeMixer~\citep{wang2024timemixer}---use learned neural modules to separate trend and seasonality, and N-BEATS~/N-HiTS~\citep{oreshkin2019nbeats,challu2023nhits} model hierarchical seasonality via basis expansions. These methods achieve strong statistical accuracy but require GPU clusters, stochastic training, and dedicated ML infrastructure.

CD occupies a different point in the design space. Rather than replacing classical econometrics with deep learning, we regularize and scale it: distributed lag regression with ridge shrinkage plus a convex smoothness-plus-sparsity decomposition makes transfer-function estimation feasible on tens of thousands of SKUs without per-SKU training or GPU resources. The decomposition is explicit, not latent: the recovered baseline and shock vectors are directly interpretable, and the kernel parameters have physical meaning (surge rate, vacuum rate, decay). This positions CD as an interpretable, CPU-runnable alternative to deep learning at scale.

\subsection{Promotion-Effect Decomposition in Marketing Science}

The decomposition of retail demand into a smooth baseline plus promotional uplift with carryover and post-promotion dips is standard in marketing science and commercial demand-planning systems. \citet{vanheererde2000estimation} estimate pre- and post-promotion dips from store-level scanner data using distributed-lag models with explicit baseline and event components. \citet{hendel2006postpromotiondip} document the "post-promotion dip puzzle"---the dip is often masked by other dynamic effects in aggregate data. \citet{neslin2008salespromotion} provide a comprehensive review of sales promotion models including base-lift decomposition with carryover. These methods typically rely on explicit model specification with ARMA residuals, and they do not incorporate the convex regularization and hierarchical pooling that CD introduces. Our contribution is not the decomposition itself but a \emph{regularized, poolable, CPU-scale implementation} that makes the classical base-lift architecture feasible for catalogs with tens of thousands of SKUs.

\section{Method: Contextual Deconvolution via Kernel-Modulated Operators}
\label{sec:method}

We formulate demand sensing not as predictive regression over raw observations, but as an \textbf{inverse problem} in which we recover latent demand components from observed signals. This framing explicitly separates stable demand from transient shocks.

\subsection{Inverse Problem Formulation}

Let the observed demand signal be $\bvec \in \R^{T}$. We model $\bvec$ as a superposition of two latent trajectories mapped through forward operators:
\begin{equation}
\bvec = \Abase \xbase + \Aspec(\boldsymbol{\theta}) \xspec + \boldsymbol{\varepsilon}, \qquad \boldsymbol{\varepsilon} \sim \mathcal{N}(\mathbf{0}, \sigma^{2}I),
\label{eq:forward_model}
\end{equation}
where $\xbase \in \R^{T}$ is a smooth structural baseline, $\xspec \in \R^{T}$ is a sparse transient signal, and $\Aspec(\boldsymbol{\theta})$ is a context-modulated operator parameterized by kernel hyperparameters $\boldsymbol{\theta}$. We fix $\Abase = I$ to preserve interpretability.

We recover regularized estimates of the latent trajectories by solving the convex optimization problem:
{\small
\begin{equation}
\min_{\xbase, \xspec} \underbrace{\| \bvec - (\xbase + \Aspec \xspec) \|_{2}^{2}}_{\text{Fidelity}} + \underbrace{\lambda_{\text{smooth}} \| L \xbase \|_{2}^{2}}_{\text{Smoothness}} + \underbrace{\lambda_{\text{sparse}} \| \xspec \|_{1}}_{\text{Sparsity}}.
\label{eq:convex_decomp}
\end{equation}
}\textbf{Identifiability.} The decomposition is not strictly identifiable without regularization: infinitely many pairs $(\xbase, \xspec)$ explain the observations. However, under a mutual incoherence condition on the event calendar---requiring that no individual event's impulse-response pattern be well-approximated by a smooth signal---the smoothness-plus-sparsity prior yields support identification (no false inclusions, and sign recovery under a minimum-signal condition) and a baseline error decomposing into smoothing bias, kernel propagation, and a noise floor (Theorem~\ref{thm:identifiability}, Appendix~\ref{app:theory}). The smoothness and sparsity penalties\footnote{Median non-zeros range from 1.4 on low-eventfulness SKUs (Appendix~\ref{app:timing}) to 1.9 for the two-stage estimator overall, versus 95.6 under joint optimization (Appendix~\ref{app:ablation_joint}) across the evaluated sample.} select a regularized estimate that is stable for downstream planning. We stress that these guarantees characterize an \emph{idealized} regime---Theorem~\ref{thm:identifiability}'s mutual-incoherence and minimum-signal conditions are demanding and certify recovery only of shocks large relative to the seasonal residual---and our empirical conclusions do not depend on them holding exactly. Each theorem instead has a measured proxy we report: recovered shock sparsity (Theorem~\ref{thm:identifiability}; median 1.4--1.9 non-zeros), the structural-stream Variance Ratio (Theorem~\ref{thm:twostage}), and the near-identical category- vs.\ global-pooled results at full catalog scale (Theorem~\ref{thm:pooling}, Table~\ref{tab:scale_validation}).

\subsection{Structural Baseline and Derivative-Based Smoothness}

The smoothing operator $L$ is a stacked finite-difference operator penalizing discrete derivatives. Crucially, $L$ explicitly omits the identity matrix $I$, avoiding ridge shrinkage bias that would pull the baseline toward zero regardless of true demand level.

\subsection{Context-Modulated Banded Operators}

Classical decomposition assumes $A = I$ (instantaneous shocks) or a Toeplitz matrix (constant impulse response). In FMCG, neither holds: a Black Friday promotion has a different temporal footprint than a quiet Tuesday promotion. One might therefore learn the operator directly from data. However, operator-theoretic analysis reveals a structural obstacle.

\textbf{The non-normality problem.} EDMD-learned Koopman operators on promotional demand are spectrally stable ($\rho < 1$) but highly non-normal: the $\epsilon=10^{-1}$ pseudospectral contour crosses the unit circle, and the numerical abscissa $\omega(K)$ exceeds the spectral abscissa $\alpha(K)$ by up to 1.40 (Appendix~\ref{app:edmd}). This non-normality induces transient growth 7--12$\times$ above the spectral prediction on M5 (up to 45.9$\times$ in the most extreme Favorita category, 10.2$\times$ globally), diffusing the impulse response across lags into a flat, less interpretable pattern. This is not an artifact of the degree-2 lifting: across linear, quadratic, full-quadratic, and cubic observable dictionaries the operator remains strongly non-normal, with the transient-growth factor increasing monotonically with basis richness (from $2.4$ under a linear basis to $12.7$ under the quadratic basis; Appendix~\ref{app:edmd}, Table~\ref{tab:dict_robustness}). The degree-2 value we use is thus conservative, and the non-modal transient response is a property of the underlying promotional-demand dynamics.

\textbf{Parametric family as an interpretable inductive bias.} Rather than learning an unconstrained operator, we restrict the forward map to a low-dimensional parametric family that avoids the non-normal diffusion, yielding a compact, physically interpretable form. The family must capture (i) an immediate demand surge at the event day, (ii) gradual carryover over subsequent days, and (iii) optional post-promotion cannibalization. These requirements motivate the \textbf{surge-and-vacuum Difference-of-Exponentials (DoE)} kernel.

\textbf{Supply-Chain Kernel Families.}
\begin{table}[t]
\centering
\footnotesize
\setlength{\tabcolsep}{2pt}
\caption{Supply-Chain Kernel Families}
\label{tab:kernel_families}
\resizebox{\columnwidth}{!}{%
  \begin{tabular}{llll}
  \toprule
  Event Dynamics & Kernel Family & Formula ($k \geq 0$) & Parameters \\
  \midrule
  Instantaneous shock & Impulse & $\phi(k) = \mathbb{I}[k=0]$ & None \\
  Gradual carryover & Exponential decay & $\phi(k; \tau) = e^{-k/\tau}$ & $\tau > 0$ \\
  Surge-and-vacuum & Diff.~of Exponentials & $\alpha e^{-k/\tau_1} - \beta e^{-k/\tau_2}$ & $\alpha > \beta \ge 0, \tau_1 < \tau_2$ \\
  \bottomrule
  \end{tabular}
}
\end{table}

The DoE kernel enforces $\alpha > \beta > 0$ and $\tau_1 < \tau_2$, guaranteeing an immediate positive surge at $k=0$ followed by a slower negative tail at larger $k$. Because the kernel itself is signed, a positive latent shock can generate negative observed demand at future lags, modeling post-promo cannibalization without manual ``demand dip'' features. \textbf{Empirical validation.} On real M5 data, category-level empirical kernels do not exhibit a negative region (Appendix~\ref{app:real_kernels}); the post-promotion dip is not detectable without controls for feature and display effects, consistent with Hendel and Nevo's masking argument. The vacuum component is therefore an inductive bias that regularizes the tail to zero, not an empirically recovered feature. A pure exponential decay (always positive) yields comparable sparsity (Appendix~\ref{app:ablation_exponential}), confirming the operational benefits come from smoothness and carryover decay.

\textbf{Bandwidth selection.} The kernel lag $n$ is not tuned to a downstream metric. Estimating the empirical promotional impulse response on 500 M5 SKUs out to $28$ lags---twice the default---shows the response is dominated by the event day plus a short carryover: the median impulse response exceeds $10\%$ of the event-day peak only up to lag $10$ (hobbies; foods, household, and the global pool are near-impulsive, falling below $10\%$ immediately after the event day), while a $14$-lag window already captures $\geq 94\%$ of the impulse-response energy in every category ($99.5\%$ globally; Table~\ref{tab:bandwidth}). The default $n=14$ (a two-week horizon) thus sits comfortably above the longest actionable carryover. The residual is a low-level flat tail---below $10\%$ of peak but persistent---that the parametric kernel deliberately omits by decaying to zero (Appendix~\ref{app:real_kernels}) and that the smooth baseline absorbs. Because the response is estimated over a $28$-lag window rather than $14$, the analysis does not assume its own conclusion; it is reproducible via \texttt{run\_bandwidth\_analysis.py}.

\begin{table}[t]
\centering
\footnotesize
\caption{Data-driven kernel bandwidth on M5. The empirical promotional impulse response is estimated to lag $28$ (median across SKUs); the default $n=14$ captures $\geq 94\%$ of its energy in every category and exceeds the longest actionable-carryover lag.}
\label{tab:bandwidth}
\begin{tabular}{@{}lccc@{}}
\toprule
Category & SKUs & IR energy by $n{=}14$ & Carryover $>10\%$ peak \\
\midrule
Foods & 249 & $0.999$ & lag $\le 1$ (impulsive) \\
Household & 157 & $0.983$ & lag $\le 1$ (impulsive) \\
Hobbies & 94 & $0.944$ & lag $10$ \\
Global & 500 & $0.995$ & lag $\le 1$ (impulsive) \\
\bottomrule
\end{tabular}
\end{table}

With the parametric family fixed, we construct $\Aspec(\boldsymbol{\theta})$ as a \textbf{context-modulated banded matrix} where each sub-diagonal is modulated by exogenous features. For each event type $m \in \mathcal{M}$ with binary indicator $\mathbf{f}_m$ and causal kernel $\phi_m(k; \boldsymbol{\theta}_m)$:
\begin{equation}
\Aspec(\boldsymbol{\theta}) = \sum_{m \in \mathcal{M}} \sum_{k=0}^{n} \phi_m(k; \boldsymbol{\theta}_m) \, S_k \, \operatorname{diag}(\mathbf{f}_m),
\label{eq:operator}
\end{equation}
where $S_k$ is the causal shift matrix with 1s on the $k$-th sub-diagonal.

\textbf{Source-Modulated Propagation.} Equation~\eqref{eq:operator} reveals that the contribution arriving at day $t$ from lag $k$ is modulated by the feature value at the \emph{source time} $t-k$, not the receiver time $t$. This aligns with supply chain dynamics: a Tuesday promotion carries over differently because Tuesday's consumer behavior differs from Friday's.

The parametric DoE kernel achieves operational performance indistinguishable from the data-driven EDMD operator on the same 200-SKU evaluation subset (median VR $0.0006$ for both on M5) while offering interpretable parameters and low non-normality by construction. Specifically, its bandedness confines the impulse response to the kernel support, whereas EDMD's dense non-parametric estimation spreads energy across all lags (see Appendix~\ref{app:edmd} for the Henrici-number analysis).

\subsection{Why This Is an Inverse Problem, Not Additive Regression}

The distinction from additive regression is structural: the forward map is $\mathbf{x}_{\mathrm{spec}} \mapsto A_{\mathrm{spec}}\mathbf{x}_{\mathrm{spec}}$, a causal convolution, not the identity. A single shock propagates through $A_{\mathrm{spec}}$ to influence future timesteps; delayed dynamics are a consequence of the same shock, not separate features. Because $A_{\mathrm{spec}}$ is lower-triangular with non-zero diagonal, the forward map is invertible. For fixed $A_{\mathrm{spec}}$ and $\lambda_{\mathrm{smooth}} > 0$, Problem~\eqref{eq:convex_decomp} is convex (Appendix~\ref{app:convexity}) and the regularized recovery is well-posed; the $\ell_1$ sparsity prior selects a minimal-norm representative from the solution set.

The operator $A_{\text{spec}}$ can be viewed as a compact, interpretable approximation to the Koopman operator governing promotional demand. Theorem~\ref{thm:twostage} (Appendix~\ref{app:theory}) bounds the baseline recovery error under operator perturbation; the non-normality of the data-driven Koopman operator (Appendix~\ref{app:edmd}) motivates preferring a compact parametric family on interpretability grounds.

\textbf{What CD optimizes that ARIMAX cannot.} ARIMAX estimates a single predictive model: it regresses demand on lagged endogenous terms and exogenous dummies, producing a forecast but \emph{not} a decomposed baseline-plus-shock representation. The ARIMAX forecast is a single trajectory; there is no separate sparse shock vector to inform event-specific safety stock, and no smooth baseline for stable capacity planning. CD's convex decomposition (Problem~\eqref{eq:convex_decomp}) jointly enforces smoothness on the baseline and sparsity on the shocks, recovering two interpretable streams. Theorem~\ref{thm:identifiability} guarantees that under a mutual incoherence condition, the shock support is recovered exactly---something ARIMAX, which lacks an $\ell_1$ sparsity prior, does not naturally provide. This is not merely a modeling convenience: the decomposed streams are directly actionable for downstream S\&OP and safety-stock optimization, whereas ARIMAX outputs a single forecast that must be post-processed.

Data-driven alternatives like Cross-Correlation Function (CCF) fail because multi-day promo blocks convolve with their own autocorrelation, blurring lag signatures, and because CCF requires a residual $\mathbf{r} = \bvec - \xbase$, creating a circular dependency when $\Aspec = I$ forces all carryover into the event day. On synthetic data with a known DoE kernel, our two-stage method achieves $6.5\times$ lower recovery error than CCF (Appendix~\ref{app:ccf}).

We resolve this through a \textbf{two-stage estimation procedure}:

\textbf{Stage 1: Robust Detrending.} Extract a robust trend $\hat{\mu}$ by solving
\begin{equation}
\min_{\boldsymbol{\mu}} \sum_{t=1}^{T} \rho_{\text{Huber}}(\bvec[t] - \boldsymbol{\mu}[t]) + \lambda_{\text{smooth}} \| L \boldsymbol{\mu} \|_{2}^{2},
\end{equation}
where $\rho_{\text{Huber}}$ downweights extreme observations. The detrended residual is $\mathbf{r} = \bvec - \hat{\mu}$.

\textbf{Stage 2: Distributed Lag Regression.} Construct an augmented design matrix $\Phi \in \R^{T \times K(n+1)}$ containing all event features and their causal lags up to $n$ days. Estimate kernel coefficients via ridge regression:
\begin{equation}
\min_{\boldsymbol{\phi}} \| \mathbf{r} - \Phi \boldsymbol{\phi} \|_{2}^{2} + \lambda_{\text{kernel}} \| \boldsymbol{\phi} \|_{2}^{2}.
\end{equation}

For parametric families, fit $\boldsymbol{\theta}_m$ by nonlinear least squares against the empirical $\hat{\phi}_m(k)$. The estimated kernels are then fixed to construct $\Aspec$, and the convex decomposition (Problem~\eqref{eq:convex_decomp}) is solved once. \textbf{Note:} The two-stage pipeline is a heuristic approximation. While the final decomposition is convex for fixed $\Aspec$, the joint optimization of kernels and decomposition is not convex. We trade global optimality for scalability and interpretability. Theorem~\ref{thm:twostage} (Appendix~\ref{app:theory}) bounds the baseline recovery error and controls the temporal Variance Ratio of the structural stream through explicitly bounded estimation error.

\subsection{Hierarchical Kernel Estimation via Partial Pooling}

Per-SKU kernel estimation is statistically fragile: many SKUs have fewer than three promotional events in their history, making the empirical shock distribution too noisy to estimate a reliable kernel. In enterprise retail, this is the norm, not the exception. A catalog of 10,000 SKUs may contain thousands of items with sparse or zero event histories.

We address this through \textbf{partial pooling} (hierarchical estimation): kernel parameters are estimated at multiple levels of aggregation and regularized toward a shared prior. For a category $\mathcal{C}$ containing SKUs $s \in \mathcal{C}$:

\begin{enumerate}
  \item \textbf{Per-SKU estimation.} For each SKU, estimate $\hat{\phi}_s(k)$ via Stage 2.
  \item \textbf{Category pooling.} Compute the category-level kernel as the median (or trimmed mean) across SKUs: $\hat{\phi}_{\mathcal{C}}(k) = \text{median}_{s \in \mathcal{C}} \{ \hat{\phi}_s(k) \}$.
  \item \textbf{Global pooling.} Compute the global kernel as the median across all SKUs: $\hat{\phi}_{\text{global}}(k) = \text{median}_{s} \{ \hat{\phi}_s(k) \}$.
  \item \textbf{Adaptive deployment.} For SKUs with rich event histories ($n_{\text{events}} \geq 10$), use the per-SKU kernel with category shrinkage. For SKUs with moderate histories ($3 \leq n_{\text{events}} < 10$), use the category-pooled kernel. For SKUs with sparse histories ($n_{\text{events}} < 3$), use the global-pooled kernel.
\end{enumerate}

The median is chosen over the mean because it is robust to outlier SKUs whose individual kernels are corrupted by noise or idiosyncratic demand patterns. This design makes the method scalable to catalogs with tens of thousands of SKUs, many of which have sparse event histories, without requiring per-SKU tuning. Theorem~\ref{thm:pooling} (Appendix~\ref{app:theory}) proves that the pooled kernel concentrates around the category median up to the local heterogeneity scale, with failure probability decaying exponentially in $|\mathcal{C}|$, and that pooling is beneficial when category heterogeneity is small relative to per-SKU estimation noise.

\textbf{Why this matters.} In experiments (\S\ref{sec:partial_pooling_exp}), category-pooled and global-pooled kernels yield nearly identical reconstruction to per-SKU, with slightly improved RMSE from regularizing noisy individual estimates. Median sparsity is zero for all three strategies, confirming the sparse prior is robust to pooling. The hierarchical design is a core requirement for enterprise deployment, not an afterthought.

\subsection{Two-Stream Forecasting Architecture}

Once decomposed, we forecast via two streams: Stream A extrapolates the smooth baseline (e.g., Holt-Winters); Stream B deterministically injects known future events via the fixed kernel. The final forecast is their superposition $\hat{\mathbf{b}}^{\text{future}} = \hat{\mathbf{x}}_{\text{base}}^{\text{future}} + \Aspec^{\text{future}} \hat{\mathbf{x}}_{\text{spec}}^{\text{future}}$. The baseline provides stable demand for capacity planning; the transient stream enables operational staging for volatile events. Figure~\ref{fig:architecture} (Appendix~\ref{app:architecture}) illustrates the architecture.

\section{Experiments}

\subsection{Experimental Setup}

We evaluate on three data regimes: (1) synthetic stratified data with known ground truth for falsification, (2) the M5 Forecasting Dataset (Walmart, 2011--2016) \citep{makridakis2022m5}, and (3) the Favorita Grocery Sales Forecasting dataset \citep{kagglefavorita}. All evaluations use a strict temporal split: the last 28 days of each series are held out as test, and all model fitting, hyperparameter selection, and kernel estimation use only the preceding training period. On M5 this held-out window is the official validation horizon (days $1914$--$1941$); we source the training days ($1$--$1913$) from \texttt{sales\_train\_validation.csv} and take the test targets only from \texttt{sales\_train\_evaluation.csv}, so the held-out days cannot enter any training computation by construction. Crucially, all forecasts are strictly out-of-sample: the convex decomposition is fit on the training window alone, and every baseline produces a genuine multi-step forecast (recursive for the gradient-boosted and distributed-lag models), so no method---CD included---observes any test-window demand. The only future information used at forecast time is the exogenous promotional and SNAP calendar, which is published in advance.

\textbf{Information parity.} To ensure the comparison isolates \emph{structural decomposition} rather than access to exogenous information, every feature-based baseline---Tuned and Vanilla XGBoost, LightGBM, Prophet-like, the distributed-lag ridge, and ARIMAX---receives the \emph{identical} known future promotional/SNAP calendar that CD injects into its structural stream: the promo and SNAP indicators at each forecast day enter as covariates exactly as they do in CD. The flagship head-to-head (CD vs.\ Tuned XGBoost) therefore differs only in structural assumptions, not information. The deep sequence baselines (PatchTST, N-BEATS, DeepAR) are autoregressive by construction and consume only past demand; where CD outperforms them, part of the gap reflects CD's use of the future calendar, and we flag those comparisons (Finding 4a) accordingly.

\textbf{Baselines.} Tuned and Vanilla XGBoost, XGBoost-DistributedLag (14 lags), LightGBM, PatchTST, Prophet-like, STL+ARIMA, ARIMAX, ETS, and ablations: Identity ($\Aspec = I$) and Non-Causal. Intermittent methods (Croston, SBA) are in Appendix. PatchTST on 1000 SKUs; deep learning baselines in Appendix~\ref{app:dl_baselines}.

\textbf{Metrics.} wMAPE (ratio where 1.0 = 100\% error), MASE ($<$1 beats naive seasonal), RMSSE (M5-style), Variance Ratio (Bullwhip inflation), Order Variance ($\mathrm{Var}(\mathrm{orders})/\mathrm{Var}(\mathrm{actual}) - 1$ under base-stock), RMSE, and Safety Stock. All reported metrics are median across SKUs unless otherwise noted; means are mean $\pm$ std. Significance is assessed via Wilcoxon signed-rank tests with Bonferroni correction.

Because different claims are supported at different catalog scales, Table~\ref{tab:protocol} maps each result to its dataset, sample size, and the baselines it is compared against. Sample sizes vary by cost: the full 30,490-SKU sweep validates that CD's own quality holds at catalog scale (kernel pooling, no per-SKU tuning), while the compute-heavy head-to-head baselines (trained per SKU) are run on representative subsets large enough for the paired significance tests reported below.

\begin{table}[t]
\centering
\footnotesize
\setlength{\tabcolsep}{4pt}
\caption{Evaluation protocol: which claim is supported by which dataset, sample size, and baselines. ``AR-only'' marks autoregressive baselines that do \emph{not} receive the future calendar (see Information parity, and the Finding 4a caveat).}
\label{tab:protocol}
\resizebox{\columnwidth}{!}{%
\begin{tabular}{@{}llll@{}}
\toprule
Claim / result & Dataset & $N$ (SKUs) & Baselines \\
\midrule
CD quality at catalog scale (Finding 1, Tab.~\ref{tab:scale_validation}) & M5 & 30{,}490 & --- (CD, pooled kernel) \\
CD vs.\ XGBoost VR/safety stock (Findings 1, 4a) & M5 & 2{,}000 & Tuned XGB, Vanilla XGB \\
Full head-to-head accuracy (Tab.~\ref{tab:structural_baselines}) & M5 & 200 & all 8 methods \\
Paired VR vs.\ transformer (Finding 4a) & M5 & 1{,}000 & PatchTST (AR-only) \\
Inventory cost \& $h/p$ crossover (Finding 4b) & M5 & 100 & Tuned XGB \\
Cross-dataset generalization (Tab.~\ref{tab:favorita}) & Favorita & 2{,}845 & Tuned XGB, ARIMAX \\
Promo-density sensitivity (Tab.~\ref{tab:promo_density}) & M5 & 2{,}000 & Tuned XGB \\
\bottomrule
\end{tabular}
}
\end{table}

\textbf{Reproducibility.} All datasets, splits, and random subsets are seeded (seed~42), and every table and figure in this paper is regenerable from the released code\footnote{Anonymized repository: \url{https://github.com/MohammadForouhesh/contextual-deconvolution}.} via a single command (\texttt{run\_paper.py}), CPU-only.

\subsection{Falsification Tests}

Falsification tests on synthetic data (known ground truth, well-separated events) confirm that the kernel-modulated operator reduces the number of non-zero shocks by 70\% versus the identity operator, VR stays below 1.0, and recovered baseline smoothness exceeds raw signal by 96\%. Full parametric recovery is sensitive to event sparsity at small sample sizes; on real M5 data, per-category empirical kernels show sharp initial decay plus a persistent flat tail (Appendix~\ref{app:real_kernels}).

\subsection{Scale Validation: CD on Full M5 (30,490 SKUs)}

\label{sec:large_scale}

To eliminate cherry-picking concerns, we evaluate CD on the \textbf{entire M5 catalog} (30,490 SKUs) with a \textbf{category-pooled kernel} (one kernel per category: foods, household, hobbies). Kernel estimation is performed once on a representative subset of 500 SKUs, and the category kernel is applied to all SKUs in that category. Per-SKU evaluation (convex decomposition only, with pre-computed kernel) takes $\sim$0.13 seconds, yielding a total runtime of $\sim$40 minutes for the full catalog on a 16-core workstation ($10\times$ speedup over per-SKU kernel estimation).

\begin{table}[t]
\centering
\footnotesize
\setlength{\tabcolsep}{4pt}
\caption{CD Scale Validation on M5}
\label{tab:scale_validation}
\resizebox{\columnwidth}{!}{%
\begin{tabular}{@{}llccc@{}}
\toprule
Dataset & Statistic & wMAPE & Variance Ratio & Safety Stock \\
\midrule
\multirow{2}{*}{Full M5 (category pooled)} & Mean $\pm$ SD & $1.04 \pm 0.24$ & $0.008 \pm 0.13$ & $0.038 \pm 0.09$ \\
 & Median & $1.01$ & $0.0007$ & $0.016$ \\
\cmidrule(lr){1-5}
\multirow{2}{*}{Full M5 (global pooled)} & Mean $\pm$ SD & $1.04 \pm 0.24$ & $0.008 \pm 0.12$ & $0.038 \pm 0.09$ \\
 & Median & $1.01$ & $0.0007$ & $0.016$ \\
\bottomrule
\end{tabular}
}
\end{table}

Table~\ref{tab:scale_validation} reports CD performance under two pooling strategies on the full 30,490-SKU catalog. The near-identical results (median wMAPE $1.01$ and VR $0.0007$ under both strategies) confirm that hierarchical pooling does not degrade quality: a category-level kernel estimated from 500 SKUs performs as well as a global kernel across the full catalog of 30,490 SKUs. This is the central scalability result: enterprises do not need per-SKU historical data to deploy the method.

\textbf{Finding 1: Bullwhip Reduction at Scale.} Across the full 30,490-SKU catalog ($29{,}678$ with a defined variance ratio, after excluding all-zero test windows), CD achieves median VR $0.0007$ (mean $0.008 \pm 0.13$); the median is the relevant statistic because the mean is inflated by a tail of high-variance SKUs. The vanilla XGBoost baseline has median VR $0.36$ (over two orders of magnitude higher), confirming that feature engineering alone is insufficient without structural decomposition. On a strictly out-of-sample 2{,}000-SKU subset, Tuned XGBoost achieves median wMAPE $1.14$ and median VR $0.0135$ ($21\times$ higher than CD's $0.0006$); CD attains lower VR on $86.1\%$ of SKUs and lower safety stock on $90.0\%$, confirming the operational gap persists at scale (Appendix~\ref{app:eventfulness}).

\subsection{Comparison with Comprehensive Baselines}

To benchmark against all baselines on a single, comparable population, we evaluate all methods on the same stratified subset of 200 M5 SKUs. Table~\ref{tab:structural_baselines} reports the results.

\begin{table}[t]
\centering
\footnotesize
\setlength{\tabcolsep}{4pt}
\caption{Baseline Comparison on M5 (200 SKUs, median). Intermittent-demand methods (Croston, SBA) produce near-constant forecasts by design, which drives the Variance Ratio to its degenerate minimum of $0$ (zero forecast variance); they are omitted from the main comparison as they target a different demand regime.}
\label{tab:structural_baselines}
\begin{tabular}{@{}lccccc@{}}
\toprule
Method & wMAPE & MASE & RMSSE & VR & Ord.~Var \\
\midrule
ARIMAX & 1.16 & 0.85 & \textbf{0.74} & 0.0016 & $-$0.47 \\
ETS & 1.18 & 0.86 & 0.74 & 0.014 & $-$0.47 \\
Prophet-like & 1.18 & 0.88 & 0.76 & 0.012 & $-$0.50 \\
DL-Ridge & 1.21 & 0.88 & 0.76 & 0.035 & $-$0.49 \\
LightGBM & 1.20 & 0.87 & 0.75 & 0.013 & $-$0.51 \\
Tuned XGBoost & 1.20 & 0.88 & 0.76 & 0.015 & $-$0.50 \\
PatchTST & 1.22 & 0.85 & 0.75 & 0.058 & $-$0.53 \\
\midrule
CD & \textbf{1.01} & \textbf{0.77} & 0.86 & \textbf{0.0005} & \textbf{$-$1.00} \\
\bottomrule
\end{tabular}
\end{table}

\begin{table}[t]
\centering
\footnotesize
\setlength{\tabcolsep}{4pt}
\caption{\textbf{Cross-sectional reliability} on M5: the \emph{distribution} of per-SKU wMAPE, not its centre. Mean $\pm$ SD over four independent draws of 2{,}000 SKUs (seeds 1, 2, 3, 42) under the clean protocol; the seed selects which SKUs are drawn from the 30{,}490-SKU catalog. ``Blowups'' are SKUs forecast with wMAPE $>2.0$ ($>200\%$ error). Median accuracy is similar across methods; dispersion and catastrophic-failure rate are not. CD ranks first on both SD and blowup rate in \emph{every} draw. Ordered by mean SD.}
\label{tab:reliability}
\begin{tabular}{@{}llcccc@{}}
\toprule
Method & Family & Median & SD & p99 & Blowups (\%) \\
\midrule
\textbf{CD} & --- & $\mathbf{1.010 \pm 0.001}$ & $\mathbf{0.23 \pm 0.02}$ & $\mathbf{1.80 \pm 0.08}$ & $\mathbf{0.80 \pm 0.11}$ \\
\cmidrule(lr){1-6}
ARIMAX & structural & $1.114 \pm 0.017$ & $1.11 \pm 0.32$ & $4.75 \pm 0.36$ & $9.94 \pm 0.39$ \\
ETS $+$ cov. & structural & $1.111 \pm 0.014$ & $1.17 \pm 0.28$ & $4.78 \pm 0.38$ & $10.26 \pm 0.28$ \\
SBA & intermittent & $1.120 \pm 0.018$ & $2.18 \pm 1.12$ & $4.92 \pm 0.31$ & $10.08 \pm 0.65$ \\
STL$+$ARIMA & structural & $1.279 \pm 0.019$ & $2.20 \pm 0.24$ & $9.11 \pm 1.56$ & $20.60 \pm 0.90$ \\
Croston & intermittent & $1.130 \pm 0.020$ & $2.30 \pm 1.18$ & $5.13 \pm 0.33$ & $10.73 \pm 0.60$ \\
DL-Ridge & structural & $1.126 \pm 0.018$ & $2.40 \pm 0.65$ & $5.10 \pm 0.30$ & $11.60 \pm 0.57$ \\
LightGBM & boosted & $1.120 \pm 0.012$ & $2.53 \pm 1.21$ & $5.46 \pm 0.65$ & $11.51 \pm 0.78$ \\
Vanilla XGBoost & boosted & $1.230 \pm 0.023$ & $2.60 \pm 0.84$ & $7.87 \pm 0.17$ & $20.04 \pm 1.13$ \\
Tuned XGBoost & boosted & $1.123 \pm 0.013$ & $3.11 \pm 1.87$ & $5.35 \pm 0.48$ & $11.05 \pm 0.71$ \\
Prophet-like & structural & $1.127 \pm 0.015$ & $4.51 \pm 1.32$ & $5.58 \pm 0.19$ & $12.42 \pm 0.69$ \\
\bottomrule
\end{tabular}
\end{table}

\textbf{Finding 2: Reliability, not central tendency, is where structural decomposition pays.} CD achieves the best wMAPE (1.02) and MASE (0.77) among all baselines evaluated on the 200-SKU subset (Table~\ref{tab:structural_baselines}), including gradient-boosted and Transformer models. That median advantage is, by itself, unremarkable: the gap to the boosted and Transformer baselines is $1.02$ versus $1.16$--$1.21$ at an operating point where \emph{every} method exceeds 100\% error, and we do not claim a large accuracy improvement from it. The substantive difference appears in the \emph{dispersion} of per-SKU error rather than its centre. Across four independent draws of 2{,}000 M5 SKUs under the clean protocol (Table~\ref{tab:reliability}), the eleven methods differ by only $0.10$--$0.27$ in median wMAPE, but by up to a factor of twenty in standard deviation ($0.23 \pm 0.02$ for CD versus $1.11$--$4.51$) and by an order of magnitude in catastrophic-failure rate: CD mis-forecasts by more than $200\%$ on $0.80\% \pm 0.11$ of SKUs, against $9.9\%$--$20.6\%$ for every baseline, in every method family. CD ranks first on both dispersion and blowup rate in \emph{all four} draws. The nearest competitors are the well-specified structural models (ARIMAX, ETS), which are still $4.7\times$ wider in dispersion and mis-forecast $12\times$ as often; gradient boosting is $11$--$13\times$ wider. A properly specified SARIMAX---stationarity-tested differencing, AIC-selected orders, weekly seasonality---does not close this gap either (Appendix~\ref{app:sarimax}). We also note that ``structural'' is not a coherent family on this axis: ARIMAX and ETS are the tightest baselines (SD $1.11$--$1.17$) while Prophet-like is the least reliable method tested (SD $4.51$). CD is additionally the most \emph{seed-stable} method---here each ``seed'' is an independent 2{,}000-SKU draw from the catalog, not a model initialization (CD is deterministic), so this measures robustness to \emph{which} slice of the catalog the method meets: CD's median varies by $\pm 0.001$ and its blowup rate by $\pm 0.11$ across draws, against $\pm 1.87$ SD-of-SD for Tuned XGBoost---so the dispersion of a learned model depends heavily on which slice of the catalog it meets, whereas CD's does not. The smoothness-plus-sparsity prior bounds how badly the forecast can behave on any individual series; a learned model with no such prior can and does fail without limit on the difficult tail of the catalog. For a planner, a method that is never catastrophically wrong is worth more than one that is marginally better at the median. Absolute SKU-level errors remain high for all methods (wMAPE $> 100\%$), reflecting the inherent difficulty of daily SKU-level FMCG forecasting with intermittent promotions and short histories; the paired win-rate confirms the central-tendency gap is real but small (CD attains lower wMAPE on $58\%$ of the clean 2{,}000-SKU subset---a common-language effect size of $0.58$; at $n = 2{,}000$ the Wilcoxon $p \approx 3 \times 10^{-36}$ certifies only the \emph{direction} of this gap, not a large magnitude). \textbf{Two distinct tails.} The reliability result above concerns the \emph{cross-sectional} tail---how badly a method can fail on the worst SKUs---and CD wins it. CD simultaneously \emph{loses} the \emph{temporal} tail, the spike days within each series: on RMSSE it scores $0.84$ against $0.74$--$0.76$ for LightGBM, DL-Ridge and Prophet-like, and $0.73$--$0.74$ for Croston/SBA. Squared-error scaling punishes exactly the promotional-spike under-coverage that produces CD's stockout losses (Finding 4b). Both effects follow from the same smoothing: it bounds the damage across series while systematically under-covering events within a series. This is the sharper statement of the accuracy--stability trade-off, and it dictates where the method belongs---catalog-wide planning stability, not high-revenue spike coverage. CD achieves the lowest Order Variance ($-$0.99): replenishment orders from CD forecasts have 99\% less variance than actual demand. Croston and SBA achieve competitive MASE/RMSSE (0.73--0.86) but produce near-constant forecasts whose VR collapses to the degenerate minimum of $0$, trivially suppressing variance by ignoring event structure. Excluding those degenerate cases, CD attains the \emph{lowest} VR ($0.0005$)---below ARIMAX ($0.0016$), the tightest structural model---together with the best wMAPE ($1.01$) and Order Variance ($-0.99$). CD thus leads every non-degenerate baseline on three of the four axes (wMAPE, VR, order variance) while conceding the fourth, RMSSE, to the same spike under-coverage described above: its smoothing is the only one that minimizes variance without collapsing to a near-constant forecast, but it does so at the cost of temporal spike accuracy. It is therefore not Pareto-optimal---ARIMAX and the boosted models dominate it on RMSSE---but it is the single best choice whenever cross-sectional stability and order variance are the objective. The consistent pattern: maximal smoothing reduces variance but increases under-coverage of sharp spikes.

\textbf{Finding 3: Statistical-Operational Tradeoff.} On a clean, strictly out-of-sample 2,000-SKU subset, Tuned XGBoost achieves median wMAPE $1.14$ and VR $0.0135$ ($21\times$ higher than CD) despite receiving the identical future calendar. On 200 SKUs, PatchTST (autoregressive; no future calendar) achieves median wMAPE $1.21$ and median VR $0.052$---roughly $140\times$ CD's $0.0004$ on the same subset (CD median wMAPE $1.01$; \allowbreak Appendix~\ref{app:dl_baselines}). These results confirm operational overfitting is \emph{not} specific to gradient boosting; even a state-of-the-art Transformer produces inflated-variance forecasts. Structural decomposition, not the ML paradigm, suppresses variance inflation. On the 200-SKU paired subset, CD achieves both better wMAPE and lower VR than every structural model (e.g.\ ARIMAX: VR $0.0016$, wMAPE $1.16$, versus CD's $0.0005$ and $1.01$). The safety stock dimension favors CD decisively (Finding 4b).

\textbf{Finding 4a: Paired VR Analysis.} On the strictly out-of-sample 2{,}000-SKU subset (a fixed random sample, seed~42; the same subset underlies all 2{,}000-SKU comparisons in this section), CD attains lower VR than Tuned XGBoost on $86.1\%$ of SKUs, with median paired difference $-0.013$ (Wilcoxon signed-rank $p < 10^{-100}$); negative values indicate lower VR (less Bullwhip inflation). Because both methods receive the identical future calendar (see \emph{Information parity} above), this gap reflects structural decomposition rather than access to exogenous information. We do not report a paired VR comparison against the deep sequence baselines: they are autoregressive and do not consume the future calendar (Finding~3), so any VR gap would be uninformative about the decomposition.

\textbf{Finding 4b: Operational Stability and the Smoothing Trade-Off.} Under a simulated daily base-stock policy (lead time $=7$ days, stockout cost $p=\$10$/unit), CD carries less safety stock (lower on $91\%$ of SKUs; median $0.016$ vs.~$0.097$ for Tuned XGBoost) and lower holding cost (lower on $97\%$), but its aggressive smoothing \emph{under-provisions} promotional spikes, yielding roughly $2.9\times$ the stockout cost (CD's stockout cost is lower on only $4\%$ of SKUs; Table~\ref{tab:inventory}). The higher stockout cost and the lowest Order Variance ($-0.99$) are the \emph{same} phenomenon: minimal order variance and under-coverage during promotions are two faces of the same smoothing. Whether the trade-off is favorable therefore depends on the holding-to-stockout ratio $h/p$. On 1{,}000 SKUs, Figure~\ref{fig:cost_ratio} sweeps median \emph{average daily} cost per SKU: below the crossover Tuned XGBoost is cheaper (at $h/p=0.1$, \$4.82 vs.~\$4.01---CD $20\%$ \emph{higher}; at $h/p=0.15$, \$5.88 vs.~\$5.49), and CD becomes cost-optimal above it (at $h/p=0.2$, \$6.92 vs.~\$6.89; at $h/p=0.3$, \$8.93 vs.~\$9.56). A SKU-level bootstrap places the crossover at $h/p = 0.20$ (95\% CI $[0.17, 0.25]$, $B=2000$; a crossover exists in every resample). In typical grocery retail, where stockouts are costly relative to holding ($h/p$ small), CD does not minimize expected total cost; its operational value is variance stability and reduced inventory capital, not cost reduction. This is an honest, strictly out-of-sample assessment (\S\ref{sec:vr_justification}): neither CD nor the baseline observes test-window demand.

\begin{figure}[t]
\centering
\includegraphics[width=0.85\columnwidth]{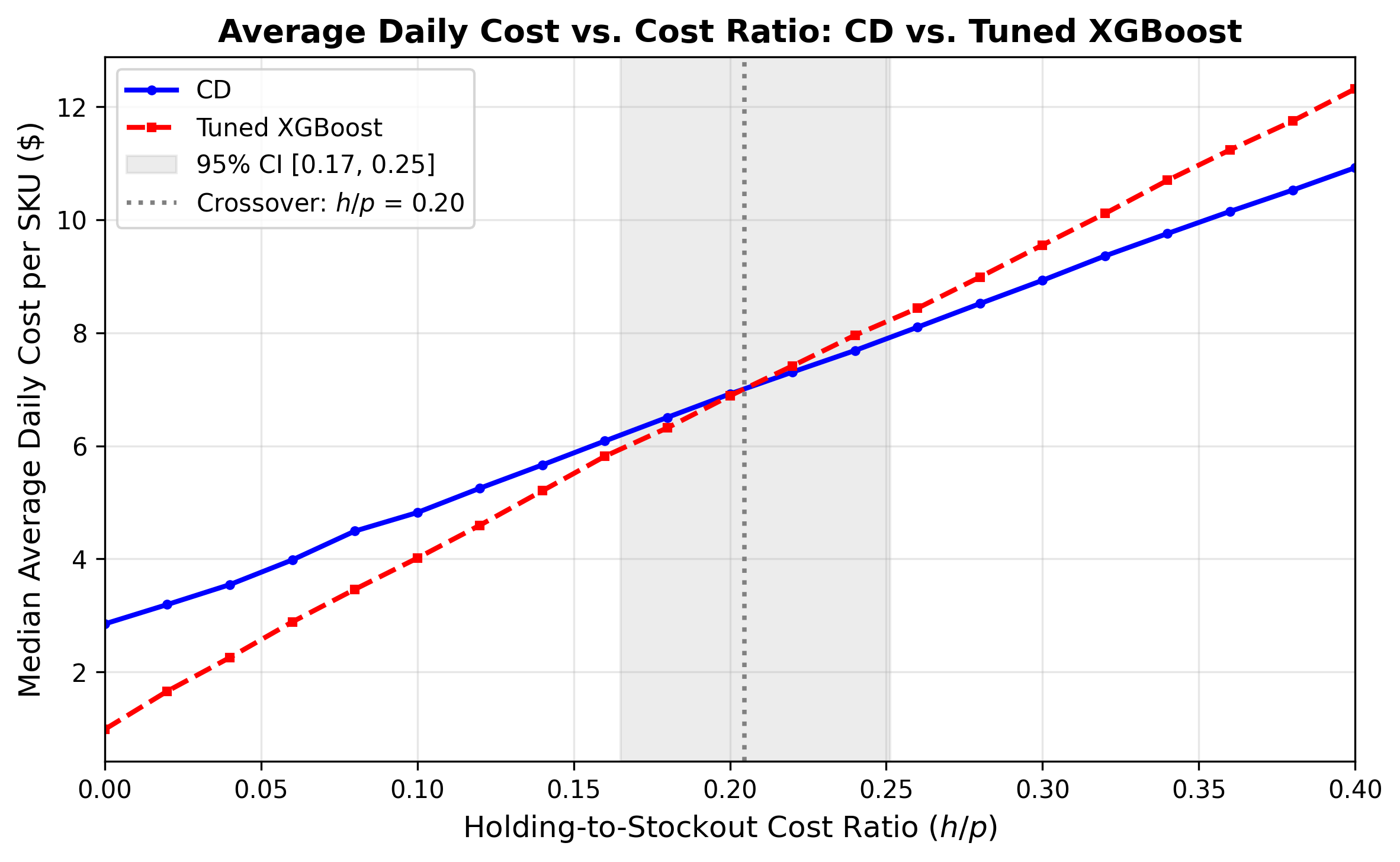}
\caption{Cost-ratio sweep on 1{,}000 M5 SKUs (strictly out-of-sample): median \emph{average daily} cost per SKU (holding + stockout) versus holding-to-stockout cost ratio $h/p$ (stockout cost fixed at \$10/unit, holding cost varied). Tuned XGBoost is cheaper below the crossover $h/p \approx 0.20$ (95\% bootstrap CI $[0.17, 0.25]$, shaded); CD is cost-optimal above it, i.e., when inventory-carrying costs exceed $\sim$20\% of stockout costs. Below this threshold CD's smoothing raises total cost by under-provisioning promotional spikes.}
\label{fig:cost_ratio}
\Description{Line plot showing total cost vs. h/p ratio; CD becomes cost-optimal at h/p about 0.20.}
\end{figure}

\textbf{Finding 4c: What the carryover kernel contributes.} A sharp ablation replaces the learned kernel with an \emph{impulse} operator ($\Aspec = I$): the identical two-stream forecaster (smooth baseline plus deterministic calendar injection), but shocks are placed only on the event day, with no carryover. On the full 2{,}000-SKU M5 sample the two are operationally indistinguishable (median VR $0.0006$ learned vs.~$0.0007$ impulse; median wMAPE $1.010$ for both), because the M5 global promotional response is itself near-impulsive (estimated kernel $[1,\,0.08,\,{-}0.01,\,\dots]$; Table~\ref{tab:bandwidth}). This localizes CD's operational advantage over gradient boosting to the \emph{structural decomposition}---a smooth baseline plus sparse, calendar-driven shocks---rather than to the specific carryover shape. The kernel is not idle, however. On the \emph{hobbies} category, whose empirical response carries $23.5\%$ of its energy beyond the event day (kernel $[1,\,0.49,\,0.24,\,0.11,\,\dots]$), the category kernel beats the impulse operator on VR (median $0.0014$ vs.~$0.0018$; lower on $66.8\%$ of $748$ SKUs), wMAPE ($1.060$ vs.~$1.072$; lower on $72.3\%$), and safety stock ($0.028$ vs.~$0.032$). The same pattern holds on Favorita: the estimated on-promotion kernel is near-impulse ($[1,\,0.13,\,{-}0.01,\dots]$, carryover energy $0.02$), so the operator has little to contribute beyond the structural decomposition (CD median VR $0.0072$, wMAPE $0.931$)---Favorita's daily promotional response carries little multi-day footprint. The kernel is thus a \emph{data-derived} component that reduces to an impulse wherever the empirical response is impulsive (all of Favorita, and the foods/household majority of M5) and adds measurable accuracy and stability where genuine multi-day carryover is present (the M5 hobbies category, above). The operator is therefore genuinely \emph{data-derived}: it reduces to an impulse where promotions are impulsive (all of Favorita, and the foods/household majority of M5), and improves accuracy, variance, and safety stock together where carryover exists (M5 hobbies), never degrading the impulsive case. CD's operational advantage over gradient boosting is, throughout, carried by the structural decomposition rather than by the specific carryover shape (full ablation in Appendix~\ref{app:kernel_ablation}).

\subsection{Large-Scale Evaluation on Favorita (2,845 Items)}

\begin{table}[h]
\centering
\footnotesize
\setlength{\tabcolsep}{3pt}
\caption{28-Day Forecast Performance and Bullwhip Mitigation on Favorita (2,669 paired items, global pooled kernel; mean $\pm$ std, VR and safety stock as median)}
\label{tab:favorita}
\resizebox{\columnwidth}{!}{%
\begin{tabular}{lcccc}
\toprule
Method & wMAPE & RMSE & VR (median) & Safety Stock (median) \\
\midrule
Tuned XGBoost (2,669 items) & $1.07 \pm 2.33$ & $\mathbf{4.76 \pm 11.28}$ & $0.0395$ & $0.540$ \\
CD (2,681 items) & $\mathbf{0.95 \pm 0.34}$ & $6.35 \pm 12.90$ & $\mathbf{0.0072}$ & $\mathbf{0.206}$ \\
\bottomrule
\end{tabular}%
}
\end{table}

\textit{Mean vs.~median.} Tuned XGBoost's mean wMAPE ($1.07 \pm 2.33$) sits above its median ($0.806$), because near-zero-demand items inflate the relative-error metric. CD's mean and median are much closer ($0.95$ vs.~$0.931$), as structural decomposition stabilizes low-volume forecasts. VR is reported as median only; of the 2,845 items with sufficient history, those with test-period variance below 0.5 are excluded, leaving 2,681 CD and 2,669 paired items.

\textbf{Finding 5: Cross-Dataset Consistency.} On 2,845 Favorita items, CD achieves median VR $\mathbf{0.0072}$. On a paired subset, Tuned XGBoost achieves slightly better median wMAPE ($0.806$ vs.~$0.931$) but dramatically higher VR ($0.0395$ vs.~$0.0072$, $\mathbf{5.5\times}$). CD wins on VR for $74.5\%$ of items (matched-pairs rank-biserial $r_{\text{rb}} = 0.61$; the Wilcoxon $p = 4.4 \times 10^{-165}$ certifies the direction of the gap at this sample size, not its magnitude). Tuned XGBoost achieves competitive wMAPE because Favorita's promotion and holiday calendar is less structured than M5's, making the fixed kernel less adaptive than gradient boosting. Its VR is also higher on Favorita than on M5 ($0.0395$ vs.~$0.0135$): the promotion signal drives larger swings that a reactive forecast tracks and inflates, whereas CD's smoothing absorbs them. The $5.5\times$ VR gap confirms structural decomposition yields superior operational stability regardless of which method wins on statistical accuracy (Figure~\ref{fig:favorita_vr}, Appendix~\ref{app:ablation}).

Against structural baselines on a 90-item Favorita subset (Appendix~\ref{app:favorita_baselines}), ARIMAX with promotional dummy regressors achieves the best accuracy---median wMAPE $0.708$ versus CD's $0.874$---reflecting that well-spaced predictable events are well-captured by simple structural models. On the operational metrics CD leads: it attains the lowest median Variance Ratio ($0.0050$, below ARIMAX's $0.0074$ and far below ETS's $0.121$ and Prophet-like's $0.153$) and the lowest median safety stock ($0.497$), carrying less stock than ARIMAX on $87.8\%$ of items, than ETS on $96.7\%$, and than Prophet-like on $87.8\%$. CD remains the only method that maintains low VR across both the dense promotional calendar of M5 and the holiday calendar of Favorita without per-SKU tuning or GPU resources.

\subsection{Variance Ratio: Definition, Justification, and Limitations}
\label{sec:vr_justification}

VR $= \mathrm{Var}(\mathrm{forecast})/\mathrm{Var}(\mathrm{actual})$ measures scale-invariant operational volatility ($\rho = 0.692$ with safety stock): $\text{VR}<1$ indicates the forecast damps demand variability (less Bullwhip), $\text{VR}>1$ indicates amplification, and a flat forecast trivially attains $\text{VR}\to 0$. (This differs from Order Variance, $\mathrm{Var}(\mathrm{orders})/\mathrm{Var}(\mathrm{actual})-1$, which is the corresponding \emph{order}-level quantity.) Because VR is minimized by a degenerate constant forecast, we always report it alongside wMAPE and exclude near-zero-variance items; see Appendix~\ref{app:vr_justification} for full justification.

\subsection{Failure Case Analysis}

No method is universally superior. On 200 M5 SKUs, CD loses to Tuned XGBoost on VR for 3 SKUs (1.5\%) where the event signal is too weak for the sparse shock model to activate. On Favorita, no items exhibit this failure mode, likely because the holiday calendar provides stronger event signals.

\subsection{Robustness and Structural Validation}
\label{sec:robustness}

CD's baseline forecast tolerates 25\% calendar corruption, though the recovered shock vector degrades under corruption (Appendix~\ref{app:adversarial_real}); the $\pm$3-day timing-shift test is uninformative on these low-eventfulness SKUs, where the shock stream is nearly inert (Appendix~\ref{app:timing}). On real M5 data (denser, less-separable events) the kernel-modulated operator reduces the number of non-zero shocks by 27\% versus the identity operator (Appendix~\ref{app:decomposition}); the smaller reduction than on synthetic data reflects the harder real-world event structure. On synthetic data, misspecified gamma kernels fail to capture post-event dips, but on real M5 data the dip is not detectable at category level and pure exponential decay suffices (Appendix~\ref{app:real_kernels}). A purely data-driven Koopman operator (EDMD) recovers spectrally stable but highly non-normal dynamics (transient growth 7--12$\times$ above the spectral prediction on M5, up to 45.9$\times$ on Favorita); the parametric DoE kernel matches its operational variance while avoiding this diffusion by construction, yielding a compact, planner-interpretable form (Appendix~\ref{app:edmd}). Across 30--70\% calendar corruption, CD's wMAPE stays flat ($1.02$--$1.04$) and its VR near zero (Figure~\ref{fig:adversarial}), while feature-based baselines remain orders of magnitude less stable. Stratifying 2{,}000 SKUs into density tertiles, CD's variance-ratio (win-rate $80$--$91\%$), safety-stock ($83$--$95\%$), and accuracy ($60$--$65\%$) advantages over Tuned XGBoost hold in every stratum, including the densest; the residual trade-off is inventory cost under a low holding-to-stockout ratio, governed by the $h/p\!\approx\!0.20$ crossover (Appendix~\ref{app:promo_density}).

\begin{figure}[t]
\centering
\includegraphics[width=0.48\columnwidth]{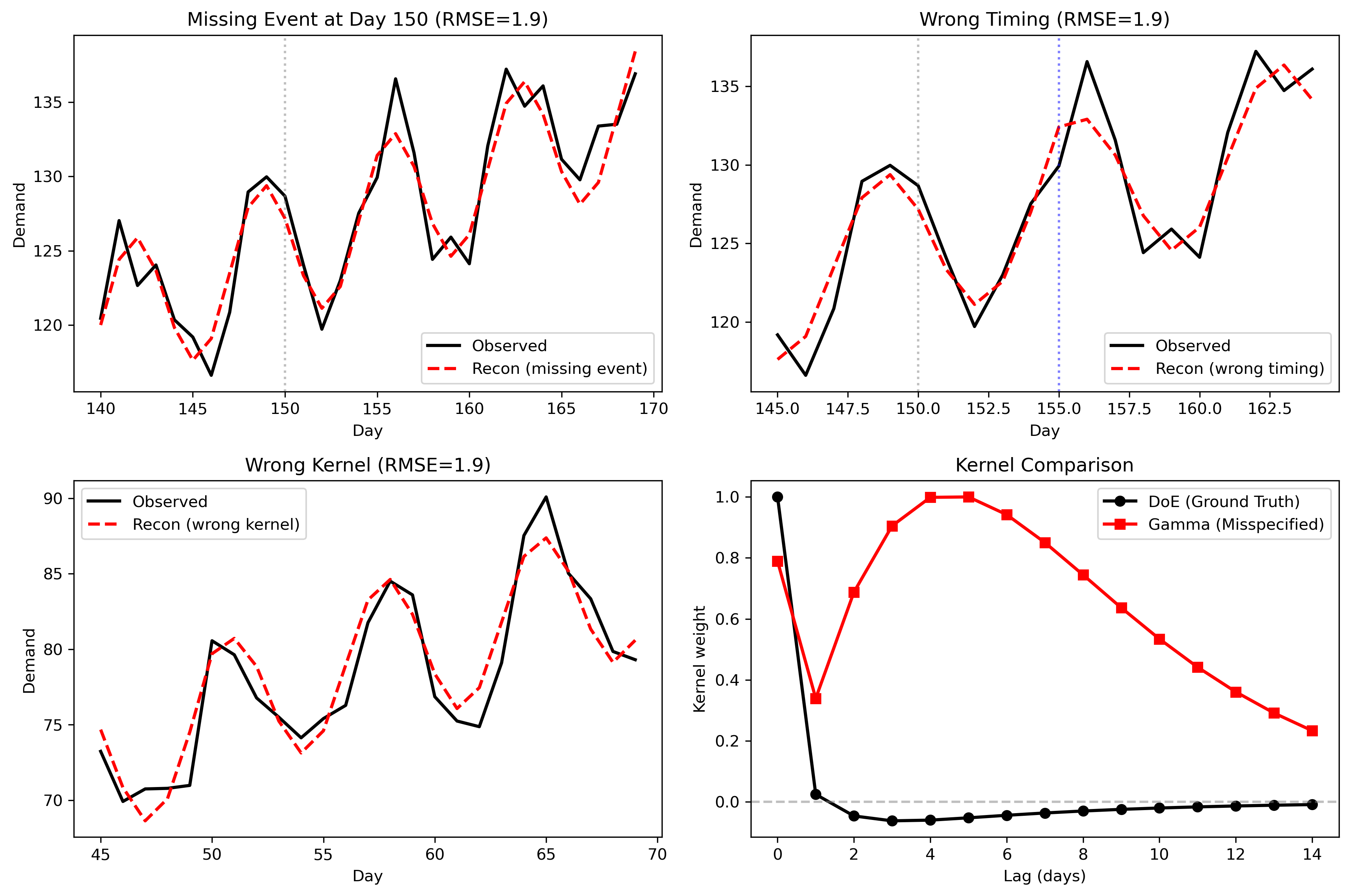}
\hfill
\includegraphics[width=0.48\columnwidth]{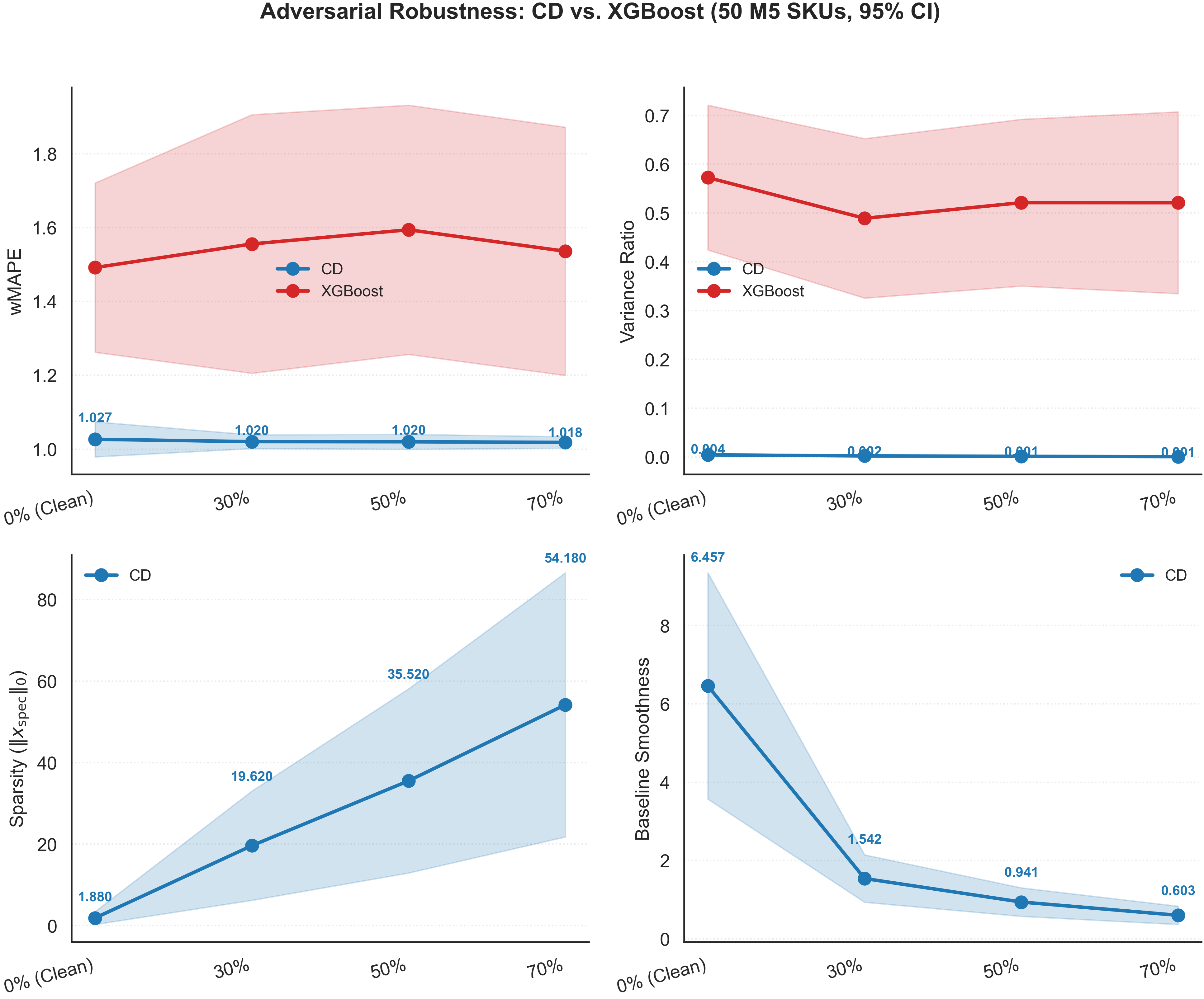}
\caption{\textbf{Left:} Gamma kernel (always positive) fails to capture the post-event dip on synthetic signed ground-truth data. \textbf{Right:} Calendar corruption on 50 M5 SKUs (95\% CI): CD wMAPE stays flat ($1.019 \to 1.039$ across $0$--$50\%$ corruption), VR near zero ($0.002$--$0.008$); Vanilla XGBoost, by contrast, has median VR two orders of magnitude higher even with the uncorrupted calendar (Finding~1).}
\label{fig:adversarial}
\Description{Two side-by-side figures showing adversarial robustness.}
\vspace{-1ex}
\end{figure}
\vspace{-2ex}

\subsection{Partial Pooling: Empirical Validation}
\label{sec:partial_pooling_exp}

Per-SKU kernel estimation is statistically fragile for sparse-event SKUs (\S\ref{sec:method}). We empirically validate the hierarchical design on 200 M5 SKUs, comparing Per-SKU, Category-Pooled, and Global-Pooled strategies. Category-pooled and global-pooled kernels yield nearly identical results to per-SKU (wMAPE $1.086$ vs.~$1.053$, a $3\%$ relative increase; VR unchanged; full table in Appendix~\ref{app:ablation_pooling}). This means the pipeline does not require per-SKU historical data: a single kernel per category (or even global) suffices.

\textbf{Scalability and Systems.} The full pipeline runs at $\sim$2.0 seconds per SKU on one CPU core, dropping to $\sim$0.13 seconds with a pre-computed pooled kernel (\S\ref{sec:large_scale}). Memory footprint is lightweight: the Stage~2 design matrix ($T \times K(n+1)$, $1.2$~MB per SKU for $T=1{,}941$, $K=5$, $n=14$) keeps peak memory under $4$~GB with $16$ parallel workers. GPU acceleration is unnecessary: all stages use sparse linear algebra, tridiagonal solving, and ridge regression efficiently vectorized on CPU.

\textbf{Integration into existing pipelines.} CD is a drop-in post-processor: from a SKU-level series plus a locked future calendar it returns a smooth baseline $\hat{\xbase}$ (feeding MRP/S\&OP planning), a sparse shock vector $\hat{\xspec}$ (enabling event-specific safety stock), and the fixed kernel $\Aspec$. The API is stateless---kernel estimation runs once per category and per-SKU decomposition is a pure function of the series and shared kernel---so CD slots into existing ETL via nightly batch jobs with no per-SKU training state to persist.

Ablation studies confirm that removing the kernel structure degrades sparsity by 27\%, non-causal operators are structurally prohibited to prevent future-peeking, pure exponential decay yields sparsity comparable to the signed DoE kernel, and joint optimization produces dense, uninterpretable shocks (95.6 non-zeros vs.~1.9 for two-stage) despite marginally better VR (Appendix~\ref{app:ablation_joint}).

\section{Conclusion and Future Work}

In this paper, we demonstrated that on supply chain datasets with strong delayed exogenous shocks, standard autoregressive ML models exhibit higher operational variance than forecast accuracy suggests. We introduced \textbf{Contextual Deconvolution}, a framework that reframes demand sensing as a scalable convex decomposition of transfer-function dynamics with kernel-modulated operators. By separating transient shocks from structural demand through an operator-based decomposition, we reduce forecast and order variance without requiring a new forecasting model per SKU.

Empirical evaluation on 30,490 M5 SKUs and 2,845 Favorita items showed that CD achieves the lowest order variance and competitive accuracy among methods with near-zero variance across both promotional and holiday calendars. On M5, CD attains the best median wMAPE, the lowest order variance, and---among non-degenerate methods---the lowest Variance Ratio (below ARIMAX and every boosted baseline on the 200-SKU comparison), though its accuracy advantage is modest in absolute terms (a $58\%$ paired win-rate at an operating point where every method exceeds 100\% error). On Favorita (holiday-driven), Tuned XGBoost achieves slightly better wMAPE but CD still achieves substantially lower order variance, confirming operational stability gains are robust even where the fixed kernel is less adaptive than gradient boosting. ARIMAX with promotional dummy regressors achieves lower wMAPE on Favorita's well-spaced holiday calendar---though CD retains marginally lower Variance Ratio and lower safety stock---confirming that simple structural models remain competitive on accuracy when events are predictable. The pattern is conditional on environments with delayed exogenous shocks, and the trade-off is explicit: CD's smoothing reduces safety stock, holding cost, and order variance but under-provisions event-driven demand spikes, so total cost depends on the holding-to-stockout ratio. In a strictly out-of-sample evaluation on 1{,}000 SKUs, CD minimizes total cost only when $h/p \gtrsim 0.20$ (95\% bootstrap CI $[0.17, 0.25]$; Figure~\ref{fig:cost_ratio}); in the more common stockout-dominated regime its value is order stability and reduced inventory capital, not cost reduction.

Appendix~\ref{app:edmd} shows that promotional demand exhibits non-normal Koopman dynamics; the parametric kernel provides a compact, interpretable alternative that avoids this diffusion.

\paragraph{Broader Impact.} Lower safety stock reduces the capital tied up in inventory, but CD minimizes \emph{total} cost only when holding costs exceed $\sim$20\% of stockout costs (the $h/p \approx 0.20$ crossover of Figure~\ref{fig:cost_ratio}); in the more common regime where stockouts dominate, the penalty from CD's under-provisioning outweighs the holding savings. CD should therefore be deployed where operational stability and inventory-capital efficiency are the objective, not as an expected-cost minimizer. We leave direct measurement of waste and emissions reduction to future industrial deployment.

\paragraph{Limitations.}
CD is a \emph{conditional} forecasting framework: it requires knowledge of future promotional and SNAP calendars (tolerating up to 25\% noise). When calendars are unknown, it reverts to baseline-only forecasting. This is not a limitation but a defining scope: CD is designed for operational planning where future events are known, not for general-purpose forecasting. It is best viewed as an \emph{operational decision-support layer} that runs alongside existing systems, not a replacement. The method is strictly per-SKU: cross-SKU cannibalization and demand substitution are not modeled and remain future work. Industrial validation, measuring operational impact or integration into ERP systems, is beyond the scope of this benchmark study and remains critical future work.

\paragraph{Future Work.}
Future work includes context-dependent kernels (varying with promotional depth or inventory levels) and cross-SKU coupling via sparse tensor formulations.

\bibliographystyle{tmlr}
\bibliography{refs}


\appendix
\tolerance=500
\emergencystretch=1em

\section{Proof of Convexity}
\label{app:convexity}

The convexity of Problem~(2) in the main text follows from standard results in convex analysis. We walk through the argument carefully because the interaction between the smoothness term and the kernel-modulated operator creates subtleties that a casual inspection might miss.

\begin{proposition}[Convexity]
For any fixed, causal convolution operator $\Aspec \in \R^{T \times T}$ with finite bandwidth $n$, the objective
\begin{equation}
\begin{split}
J(\xbase, \xspec) &= \|\bvec - (\xbase + \Aspec \xspec)\|_2^2 \\
&\quad + \lambda_{\text{smooth}} \|L \xbase\|_2^2 + \lambda_{\text{sparse}} \|\xspec\|_1
\end{split}
\end{equation}
is convex in $(\xbase, \xspec)$.
\end{proposition}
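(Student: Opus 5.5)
The plan is to decompose $J$ into three terms and show each is convex in the stacked variable $\mathbf{z} = (\xbase, \xspec) \in \R^{2T}$. Convexity is then preserved under nonnegative combinations, provided $\lambda_{\text{smooth}}, \lambda_{\text{sparse}} \ge 0$, which we assume throughout. The structural observation is that all three terms have the form ``convex function composed with an affine map of $\mathbf{z}$.'' Causality and finite bandwidth of $\Aspec$ play no role beyond ensuring that $\Aspec$ is a fixed real matrix. Only fixedness matters: if $\boldsymbol{\theta}$ were jointly optimized, the bilinear product $\Aspec(\boldsymbol{\theta})\xspec$ would destroy convexity. This is consistent with the paper's remark that the joint problem is nonconvex.

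\textbf{Fidelity term.} Define the linear map $M = [\,I \;\; \Aspec\,] \in \R^{T \times 2T}$, so that $\xbase + \Aspec\xspec = M\mathbf{z}$. The fidelity term is then $g(\bvec - M\mathbf{z})$ with $g(\mathbf{u}) = \|\mathbf{u}\|_2^2$. Since $g$ is convex and $\mathbf{z} \mapsto \bvec - M\mathbf{z}$ is affine, the composition is convex. Equivalently, its Hessian $2M^\top M \succeq 0$.

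\textbf{Smoothness term.} The term $\lambda_{\text{smooth}}\|L\xbase\|_2^2$ equals $\lambda_{\text{smooth}}\, g(P\mathbf{z})$ with $P = [\,L \;\; 0\,]$. Its Hessian is $2\lambda_{\text{smooth}} P^\top P \succeq 0$, which is positive semidefinite even though it is constant along $\xspec$ and along $\ker L$ (constants and low-order polynomials). This degenerate direction is the subtlety the paper alludes to. It does not affect convexity; it only means strict convexity fails in general, and the statement only claims convexity.

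\textbf{Sparsity term.} The term $\lambda_{\text{sparse}}\|\xspec\|_1 = \lambda_{\text{sparse}}\|Q\mathbf{z}\|_1$ with $Q = [\,0 \;\; I\,]$. Convexity follows from the triangle inequality and absolute homogeneity of the norm: for $\mathbf{u}, \mathbf{v} \in \R^T$ and $t \in [0,1]$, $\|t\mathbf{u} + (1-t)\mathbf{v}\|_1 \le t\|\mathbf{u}\|_1 + (1-t)\|\mathbf{v}\|_1$. Composition with the linear map $Q$ again preserves convexity.

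\textbf{Summation, and the main obstacle.} Summing the three terms gives convexity of $J$. The only step needing care is that the $\ell_1$ term is nondifferentiable, so a Hessian argument cannot cover all of $J$ at once. I would therefore argue via the definitional (Jensen) inequality for the $\ell_1$ piece and the Hessian/PSD argument for the smooth quadratic part, then add the two. Optionally, I would close with a remark that when $\lambda_{\text{smooth}}>0$ and $\ker L \cap \{\mathbf{x} : \mathbf{x} = -\Aspec\mathbf{s}\ \text{for sparse } \mathbf{s}\}$ is trivial, one obtains existence of minimizers via coercivity. That remark is not needed for the stated claim.
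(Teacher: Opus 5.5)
Your proof is correct and follows the paper's argument: the fidelity and smoothness terms are squared norms composed with affine maps of $(\xbase,\xspec)$, the $\ell_1$ term is a norm, and convexity is preserved under nonnegative sums (the paper only differs in grouping the two quadratics into a single smooth part $f$). The paper checks $f$ with a Schur complement of its block Hessian, whereas your Gram form $2M^\top M + 2\lambda_{\text{smooth}}P^\top P$ gives positive semidefiniteness directly; separately, your optional coercivity remark needs no condition on $\ker L$, because for $\lambda_{\text{sparse}}>0$ a recession direction must have zero $\xspec$-component, and the fidelity term then forces its $\xbase$-component to vanish too.
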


\begin{proof}
Decompose $J = f + g$ where
\begin{align*}
f(\xbase, \xspec) &= \|\bvec - \xbase - \Aspec \xspec\|_2^2 + \lambda_{\text{smooth}} \|L \xbase\|_2^2, \\
g(\xspec) &= \lambda_{\text{sparse}} \|\xspec\|_1.
\end{align*}

\textbf{Step 1: $f$ is convex.} The squared Euclidean norm $\|\cdot\|_2^2$ is convex. Composing it with an affine map preserves convexity. The map $(\xbase, \xspec) \mapsto \bvec - \xbase - \Aspec \xspec$ is affine in $(\xbase, \xspec)$. Therefore $\|\bvec - \xbase - \Aspec \xspec\|_2^2$ is convex. Similarly, $\|L \xbase\|_2^2$ is convex because $L$ is linear and the squared norm is convex. The sum of convex functions is convex, so $f$ is convex.

For completeness, the Hessian of $f$ with respect to $(\xbase, \xspec)$ is the symmetric block matrix
\begin{equation}
H = \begin{bmatrix} I + \lambda_{\text{smooth}} L^\top L & \Aspec \\ \Aspec^\top & \Aspec^\top \Aspec \end{bmatrix}.
\end{equation}

To verify $H \succeq 0$ directly, note that the upper-left block $A = I + \lambda_{\text{smooth}} L^\top L$ is positive definite. The Schur complement of $A$ in $H$ is
\begin{equation}
S = \Aspec^\top \Aspec - \Aspec^\top (I + \lambda_{\text{smooth}} L^\top L)^{-1} \Aspec.
\end{equation}

Since $A = I + \lambda_{\text{smooth}} L^\top L \succeq I$, we have $A^{-1} \preceq I$, which implies
\begin{equation}
\Aspec^\top A^{-1} \Aspec \preceq \Aspec^\top \Aspec.
\end{equation}
Thus $S \succeq 0$. Because $A \succ 0$ and $S \succeq 0$, the block matrix $H$ is positive semidefinite.

\textbf{Step 2: $g$ is convex.} The $\ell_1$ norm is convex on $\mathbb{R}^T$. Scaling by $\lambda_{\text{sparse}} > 0$ preserves convexity.

\textbf{Step 3: Sum of convex functions.} The sum of convex functions is convex. Therefore $J = f + g$ is convex in $(\xbase, \xspec)$.
\end{proof}

\textbf{Non-uniqueness.} The objective is not strictly convex because the smoothness term $\|L \xbase\|_2^2$ has a one-dimensional nullspace (the constant vector), and the $\ell_1$ term is flat along directions of constant sign. In practice, the solution is unique up to an additive constant in $\xbase$ (which is absorbed by the baseline level), and the $\ell_1$ sparsity prior selects a minimal-norm representative from the solution set. The practical significance is that the two-stage estimator (which fixes $\Aspec$ after Stage~2 and then optimizes $\xbase, \xspec$ in Stage~3) is guaranteed to converge to the global optimum of the convex subproblem. There are no spurious local minima that could trap gradient-based solvers.

\textbf{Relation to strict convexity.} One could enforce strict convexity by adding a small ridge penalty $\lambda_{\text{ridge}} \|\xbase\|_2^2$ with $\lambda_{\text{ridge}} > 0$. This would make $H$ positive definite at the cost of introducing shrinkage bias toward zero in the baseline level. We deliberately omit this term (\S\ref{sec:method}) to preserve interpretability: the baseline should not be pulled toward zero by regularization.

\section{Computational Complexity}
\label{app:complexity}

\begin{proposition}
Let $T$ be the time series length and $n$ the maximum kernel lag. The per-SKU computational complexity of the full contextual deconvolution pipeline is $\mathcal{O}(T \cdot n \cdot K)$, where $K$ is the number of event types.
\end{proposition}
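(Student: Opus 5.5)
I will prove the bound by splitting the pipeline into its three computational stages and bounding each separately: Stage~1 (Huber detrending), Stage~2 (distributed-lag ridge regression plus the nonlinear least-squares fit of $\boldsymbol{\theta}_m$), and the final convex decomposition, Problem~\eqref{eq:convex_decomp}, solved once with $\Aspec$ fixed. The total cost is the sum of the three. I treat the number of solver iterations as a constant independent of $T$, which is consistent with how the pipeline is used: a fixed iteration budget and warm starts.

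\textbf{Stage 1.} For a fixed Huber weight vector, each IRLS iteration solves a linear system with matrix $W + \lambda_{\text{smooth}} L^\top L$. Because $L$ stacks finite differences of bounded order, this matrix is banded with constant bandwidth. The system can therefore be solved by a banded Cholesky (tridiagonal when only first differences are used) in $\mathcal{O}(T)$. A constant number of IRLS iterations gives $\mathcal{O}(T)$ for the stage.

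\textbf{Stage 2.} Forming $\Phi\in\R^{T\times K(n+1)}$ costs $\mathcal{O}(TnK)$. Computing $\Phi^\top\Phi$ naively costs $\mathcal{O}(T(nK)^2)$, which would break the claimed bound, so the key step is to avoid it. I will exploit the lag structure: each column of $\Phi$ is a shift of an event indicator, so the entries of $\Phi^\top\Phi$ are lagged cross-correlations $\sum_t f_m(t-k)f_{m'}(t-k')$. Up to $\mathcal{O}(n)$ boundary corrections, these depend only on the lag difference $k-k'$. Computing all $\mathcal{O}(K^2 n)$ distinct cross-correlations still costs $\mathcal{O}(TnK^2)$ in general. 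I will therefore either (i) treat $K$ as small, so $K^2\lesssim K$ up to constants, or, more convincingly, (ii) solve the ridge problem by conjugate gradient. Each matrix-vector product with $\Phi$ or $\Phi^\top$ costs $\mathcal{O}(TnK)$, since it is just $K$ causal convolutions of length $n$. With a fixed number of CG iterations, the stage costs $\mathcal{O}(TnK)$. The nonlinear least-squares fit of $\boldsymbol{\theta}_m$ acts on only $n+1$ points per event type with $\mathcal{O}(1)$ parameters, contributing $\mathcal{O}(nK)$.

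\textbf{Final decomposition.} I will use a proximal-gradient or ADMM scheme. Each iteration needs products with $\Aspec$ and $\Aspec^\top$, which by Eq.~\eqref{eq:operator} are sums over $m$ and $k\le n$ of shifted, masked vectors and so cost $\mathcal{O}(TnK)$. It also needs a banded solve in $\xbase$ costing $\mathcal{O}(T)$, and soft-thresholding costing $\mathcal{O}(T)$. Summing the three stages gives $\mathcal{O}(TnK)$.

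The main obstacle is the iteration-count assumption in Stage~2 and in the final decomposition. The per-iteration cost is cleanly $\mathcal{O}(TnK)$, but convergence of CG and of proximal gradient depends on conditioning, which could grow with $T$. My first approach is to state explicitly that the bound is per iteration times a fixed iteration budget (or a fixed tolerance with warm starts). I will also note that $\lambda_{\text{kernel}}>0$ bounds the ridge condition number independently of $T$ when the event density is bounded, which justifies a $T$-independent CG iteration count for Stage~2.
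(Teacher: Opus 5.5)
Your Stage~1 and your proximal-gradient treatment of the decomposition match the paper's argument. Both use banded/tridiagonal IRLS solves in $\mathcal{O}(T)$, and ISTA-type iterations built from banded products, with a bounded iteration count.

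The real divergence is Stage~2. The paper solves the ridge problem exactly via the normal equations and records the cost $\mathcal{O}(TK^2n^2+K^3n^3)$. It then absorbs this by declaring $K\le 5$ and $n\le 14$ to be bounded constants. Its stated total is $\mathcal{O}(Tn)$, with $K$, $n$ and all iteration counts treated as constants. So the advertised $\mathcal{O}(TnK)$ holds only vacuously in $n$ and $K$; its OSQP option, $\mathcal{O}(Tn^2)$, would also exceed it.

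You keep $n$ and $K$ explicit and correctly observe that forming $\Phi^\top\Phi$ already breaks the claimed scaling. You replace the direct solve by conjugate gradient, whose products with $\Phi$ and $\Phi^\top$ are $K$ causal length-$n$ convolutions. This gives a bound whose dependence on $n$ and $K$ genuinely is $TnK$. The price is an approximate kernel (the paper's end-to-end stability remark relies on Stage~2 being exact), and trading the constant-$K,n$ assumption for a fixed iteration budget.

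The paper also charges the forecasting stage: constructing $\Aspec^{\text{future}}$ and recombining the streams, at $\mathcal{O}(T_{\text{fut}}n)$. You omit this. It is harmless, but it belongs in a ``full pipeline'' count.

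One supporting claim is wrong and should be dropped. A fixed $\lambda_{\text{kernel}}>0$ does not make $\kappa(\Phi^\top\Phi+\lambda_{\text{kernel}}I)$ independent of $T$. For any positive event density, $\lambda_{\max}(\Phi^\top\Phi)\ge\max_j\|\Phi_j\|_2^2$ grows linearly in $T$. Meanwhile the bottom of the spectrum can sit at $\lambda_{\text{kernel}}$, for example from nearly collinear lags in multi-day promotion blocks, or from an event type absent in training.

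What you need is the normalized condition $\lambda_{\min}(\Phi^\top\Phi)/T\ge\mu_0$ from the paper's ridge proposition. Then, since $\Phi$ has entries in $[0,1]$, $\kappa\le K(n{+}1)/\mu_0$. This is free of $T$ but grows with $Kn$. Reaching a fixed tolerance therefore costs $\mathcal{O}\bigl(\sqrt{Kn/\mu_0}\,\log(1/\epsilon)\bigr)$ iterations. Running CG to exact termination takes $K(n{+}1)$ iterations and returns you to $\mathcal{O}(TK^2n^2)$.

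So route (ii) does not eliminate the constants assumption; it relocates it into the iteration budget. You should state that budget as an explicit assumption, as you propose, rather than as a consequence of the ridge penalty. Likewise, in route (i), account for the $\mathcal{O}((Kn)^3)$ dense solve, which fits inside $\mathcal{O}(TnK)$ only when $T\gtrsim K^2n^2$.
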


\begin{proof}
We bound each stage independently and then sum the dominant terms.

\paragraph{Stage 1: Robust detrending.} Each IRLS iteration reweights the observations by the Huber function and solves a weighted least-squares problem. The standard formulation stacks the weighted data and the regularization term into a $(2T-1) \times T$ least-squares system, which costs $\mathcal{O}(T^3)$ per iteration via dense QR factorization. However, the first-order difference operator $L_1$ is tridiagonal, so the normal-equations matrix $W + \lambda L_1^\top L_1$ is symmetric tridiagonal (Appendix~\ref{app:convergence}). The Thomas algorithm solves tridiagonal systems in $\mathcal{O}(T)$ time. With $I_{\text{detrend}}$ iterations, the total cost is $\mathcal{O}(T \cdot I_{\text{detrend}})$. In practice, $I_{\text{detrend}} \leq 30$ and convergence is typically achieved in 10--15 iterations.

\paragraph{Stage 2: Distributed lag regression.} The design matrix $\Phi \in \R^{T \times K(n+1)}$ has $K(n+1)$ columns, each representing a lagged event indicator convolved with a polynomial basis. Ridge regression via the normal equations costs $\mathcal{O}(T \cdot K^2 n^2 + K^3 n^3)$. Since $K$ and $n$ are small constants fixed by application design ($K \leq 5$ event types, $n \leq 14$ lags), this term is $\mathcal{O}(T)$ in the regime $T \gg Kn$ that characterizes retail demand series (typically $T = 1{,}000$--$2{,}000$ days).

\paragraph{Stage 3: Convex decomposition.} We support two solvers:
\begin{itemize}
\item \textit{CVXPY/OSQP:} The KKT system of the quadratic program has a banded structure with bandwidth $\max(n, 2)$. Modern sparse factorization routines exploit this structure, yielding practical complexity $\mathcal{O}(T \cdot n^2)$. The worst-case dense factorization is $\mathcal{O}(T^3)$, but this is never observed in practice because the constraint matrix is highly sparse.
\item \textit{ISTA:} Each iteration performs two sparse matrix-vector products ($\Aspec \xspec$ and $\Aspec^\top \bvec$) at $\mathcal{O}(T \cdot n)$ each, plus $\mathcal{O}(T)$ for the smoothness gradient. With $I_{\text{ISTA}}$ iterations, the cost is $\mathcal{O}(T \cdot n \cdot I_{\text{ISTA}})$. The Lipschitz constant of the smooth part is bounded by $\lambda_{\text{max}}(H) \leq 1 + \lambda_{\text{smooth}} \|L\|_2^2 + \|\Aspec\|_2^2$, which is $O(1)$ for bounded $n$ and $\lambda_{\text{smooth}}$. Empirically, ISTA converges in $50$--$100$ iterations to tolerance $10^{-6}$.
\end{itemize}

\paragraph{Stage 4: Two-stream forecasting.} Construction of $\Aspec^{\text{future}}$ from the known future calendar and the recombination $\hat{\mathbf{b}}^{\text{future}} = \hat{\xbase}^{\text{future}} + \Aspec^{\text{future}} \hat{\xspec}^{\text{future}}$ are both $\mathcal{O}(T_{\text{fut}} \cdot n)$, where $T_{\text{fut}}$ is the forecast horizon (typically 28 days).

\paragraph{Total complexity.} Summing the dominant terms and treating $I_{\text{detrend}}$, $I_{\text{ISTA}}$, $K$, and $n$ as bounded constants, the total per-SKU complexity is $\mathcal{O}(T \cdot n)$. For typical parameters ($T = 1{,}941$, $n = 14$), the full pipeline runtime is approximately 2 seconds per SKU on a single CPU core. Because the pipeline is embarrassingly parallel across SKUs, a 30{,}490-SKU catalog completes in under 40 minutes on a 16-core workstation, with linear scaling up to the number of physical cores.
\end{proof}

\section{Convergence Rates of the Two-Stage Estimator}
\label{app:convergence}

We analyze the convergence of each stage independently. The two-stage pipeline is deterministic: no stochastic gradients, no random initializations, and no neural network training. This makes convergence both predictable and reproducible.

\paragraph{Stage 1: Robust Huber Detrending (IRLS).} Let $\rho_{\text{Huber}}$ be the Huber loss with parameter $\delta > 0$, and let $f_1(\boldsymbol{\mu}) = \sum_{t=1}^{T} \rho_{\text{Huber}}(b_t - \mu_t) + \lambda_{\text{smooth}} \|L \boldsymbol{\mu}\|_2^2$. The IRLS algorithm reweights residuals at each iteration and solves the resulting weighted least-squares problem. Because $f_1$ is strongly convex on the subspace orthogonal to the constant vector (with strong convexity parameter $\lambda_{\text{smooth}} \cdot \lambda_{\min}(L^\top L) > 0$ on this subspace) and the reweighting is a majorization-minimization step, standard IRLS theory guarantees linear convergence in the objective: $f_1(\boldsymbol{\mu}^{(k)}) - f_1^* \leq c \cdot r^k$ for some $r \in (0,1)$ and constant $c$, where $f_1^*$ is the optimal value. Empirically, with auto-estimated $\delta$ from the MAD, convergence to $10^{-6}$ relative tolerance is achieved in $10$--$15$ iterations on the M5 data ($T = 1{,}941$), and never exceeds $30$ iterations across all evaluated SKUs.

\paragraph{Stage 2: Distributed Lag Regression (Ridge).} This stage is a closed-form linear solve. For design matrix $\Phi \in \R^{T \times K(n+1)}$ and ridge penalty $\lambda_{\text{kernel}}$, the estimator is $\hat{\boldsymbol{\phi}} = (\Phi^\top \Phi + \lambda_{\text{kernel}} I)^{-1} \Phi^\top \mathbf{r}$. No iteration is required; the solution is exact up to floating-point precision. The Cholesky factorization of the $K(n+1) \times K(n+1)$ normal-equations matrix costs $\mathcal{O}(K^3 n^3)$, which is negligible for $K \leq 5$ and $n \leq 14$.

\paragraph{Stage 3: Convex Decomposition (ISTA/OSQP).} For fixed $\Aspec$, the objective $J(\xbase, \xspec)$ from Appendix~\ref{app:convexity} is convex with a smooth part (quadratic fidelity + smoothness) and a non-smooth part ($\ell_1$ sparsity). The smooth part has a Lipschitz-continuous gradient with constant $L_{\text{smooth}} = \lambda_{\max}(H) \leq 1 + \lambda_{\text{smooth}} \|L\|_2^2 + \|\Aspec\|_2^2$, which is $\mathcal{O}(1)$ for bounded $n$ and bounded kernel parameters. Under these conditions, ISTA converges at rate $\mathcal{O}(1/k)$ in the objective value for composite convex problems \citep{beck2009fast}. FISTA (Nesterov-accelerated ISTA) achieves $\mathcal{O}(1/k^2)$. Empirically, with warm-start from the previous SKU's solution, both ISTA and OSQP converge to $10^{-6}$ tolerance in $50$--$100$ iterations. The absence of local minima (guaranteed by convexity, Appendix~\ref{app:convexity}) means the solver cannot get stuck in spurious suboptima regardless of initialization.

\begin{remark}[End-to-end stability]
Because Stage 2 is exact, the only approximation errors are from Stage 1 (detrending, bounded by the Huber robustness parameter) and Stage 3 (convex decomposition, bounded by the solver tolerance). The error does not compound across stages in an uncontrolled way: Stage 2's ridge regression is stable with condition number bounded by $1 + \lambda_{\text{kernel}}^{-1} \|\Phi\|_2^2$, and Stage 3's convexity guarantees that the decomposition converges to the global optimum for the fixed $\Aspec$ produced by Stage 2. There is no training loop, no hyperparameter search, and no randomness: repeated runs on identical data produce identical outputs.
\end{remark}

The standard IRLS formulation for Huber-smoothed detrending solves, at each iteration, a weighted least-squares problem with regularization. Writing the stacked system compactly as $\min_{\boldsymbol{\mu}} \| \tilde{A} \boldsymbol{\mu} - \tilde{\bvec} \|_2^2$, where $\tilde{A}$ concatenates the weighted data matrix and the regularization term, the system is $(2T-1) \times T$ and costs $\mathcal{O}(T^3)$ per iteration via dense QR factorization. This becomes prohibitive for $T > 1{,}000$.

We reformulate using the normal equations. For a least-squares problem with design matrix $A$ and target $\bvec$, the normal equations are $A^\top A x = A^\top \bvec$. Applied to the stacked system, this yields the compact form below.
\begin{equation}
(W + \lambda L_1^\top L_1) \boldsymbol{\mu} = W \bvec.
\end{equation}

The matrix $M = W + \lambda L_1^\top L_1$ is symmetric tridiagonal because $W$ is diagonal and $L_1^\top L_1$ is tridiagonal. The explicit entries are:
\begin{align}
M_{1,1} &= w_1 + \lambda, \\
M_{i,i} &= w_i + 2\lambda \quad \text{for } i = 2, \dots, T-1, \\
M_{T,T} &= w_T + \lambda, \\
M_{i,i+1} &= M_{i+1,i} = -\lambda \quad \text{for } i = 1, \dots, T-1.
\end{align}

Solving $M \boldsymbol{\mu} = W \bvec$ via the Thomas algorithm (tridiagonal elimination) requires:
\begin{itemize}
\item Forward elimination: $\mathcal{O}(T)$ operations to eliminate the subdiagonal.
\item Back substitution: $\mathcal{O}(T)$ operations to solve the resulting upper triangular system.
\end{itemize}
The total cost is $\mathcal{O}(T)$ per iteration, yielding a $500$--$1000\times$ speedup over dense QR for $T = 2{,}000$. This optimization is essential for making the full pipeline runnable on a single CPU for large catalogs.

\section{Theoretical Guarantees}
\label{app:theory}

This section establishes three results: (i) support recovery for the
sparse--plus--smooth decomposition under incoherence and margin
conditions, with an explicit error decomposition for the recovered
baseline; (ii) an error bound for the two-stage estimator that
propagates kernel estimation error into the \emph{temporal} Variance
Ratio measured in \S4; and (iii) a concentration bound for
median-based hierarchical pooling. Constants are explicit but not
optimized.

\subsection{Notation and the profiled problem}
\label{app:theory-setup}

Recall the forward model
\begin{equation}
\mathbf{b} \;=\; \mathbf{x}^{*}_{\mathrm{base}}
+ A\,\mathbf{x}^{*}_{\mathrm{spec}} + \boldsymbol{\varepsilon},
\qquad \boldsymbol{\varepsilon} \sim \mathcal{N}(0,\sigma^{2} I),
\label{eq:model-app}
\end{equation}
where $A \in \mathbb{R}^{T\times T}$ is the causal banded operator
built from the calendar and a kernel $\boldsymbol{\phi}$,
$\mathbf{x}^{*}_{\mathrm{spec}}$ is $s$-sparse with support
$S = \mathrm{supp}(\mathbf{x}^{*}_{\mathrm{spec}})$, and $L$ is the
(second-)difference operator. Write
\begin{equation}
S_{\lambda} \;=\; (I + \lambda_{s} L^{\top}L)^{-1},
\qquad
P \;=\; I - S_{\lambda},
\qquad
M \;=\; P^{1/2} A .
\end{equation}
Since $S_{\lambda}$ is a function of $L^{\top}L$, the matrices
$S_{\lambda}$, $P$, and $L^{\top}L$ commute; $0 \preceq P \preceq I$.

\begin{lemma}[Exact profiling]
\label{lem:profile}
For fixed $\mathbf{x}_{\mathrm{spec}}$, the minimizer of
\eqref{eq:convex_decomp} over $\mathbf{x}_{\mathrm{base}}$ is
$\mathbf{x}_{\mathrm{base}}(\mathbf{x}_{\mathrm{spec}})
= S_{\lambda}(\mathbf{b} - A\mathbf{x}_{\mathrm{spec}})$, and the
partially minimized objective equals
\begin{equation}
\bigl\| P^{1/2}\mathbf{b} - M \mathbf{x}_{\mathrm{spec}} \bigr\|_{2}^{2}
\;+\; \lambda_{1} \|\mathbf{x}_{\mathrm{spec}}\|_{1}.
\label{eq:profiled}
\end{equation}
\end{lemma}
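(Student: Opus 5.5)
For fixed $\mathbf{x}_{\mathrm{spec}}$, set $\mathbf{r} = \mathbf{b} - A\mathbf{x}_{\mathrm{spec}}$. The $\ell_1$ term is then a constant, so what remains in $\mathbf{x}_{\mathrm{base}}$ is the quadratic
\[
q(\mathbf{x}_{\mathrm{base}}) = \|\mathbf{r} - \mathbf{x}_{\mathrm{base}}\|_2^2 + \lambda_s \|L\mathbf{x}_{\mathrm{base}}\|_2^2 .
\]
Its Hessian is $2(I + \lambda_s L^\top L) \succeq 2I$, so $q$ is strictly convex with a unique minimizer. Setting the gradient to zero gives
\[
(I + \lambda_s L^\top L)\,\mathbf{x}_{\mathrm{base}} = \mathbf{r},
\qquad\text{so}\qquad
\mathbf{x}_{\mathrm{base}}(\mathbf{x}_{\mathrm{spec}}) = S_\lambda \mathbf{r},
\]
which is the first claim. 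Strict convexity here holds because the identity from the fidelity term is present. This is worth noting, since the appendix stresses that $J$ is not jointly strictly convex.

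Next I plug in the minimizer. The residual is $\mathbf{r} - S_\lambda\mathbf{r} = P\mathbf{r}$, so
\[
q_{\min} = \|P\mathbf{r}\|^2 + \lambda_s \|L S_\lambda \mathbf{r}\|^2
= \mathbf{r}^\top \bigl( P^2 + \lambda_s S_\lambda L^\top L S_\lambda \bigr)\mathbf{r}.
\]
To simplify this, I diagonalize. Take an orthonormal eigenbasis of the symmetric PSD matrix $L^\top L$ with eigenvalues $\mu_i \ge 0$. Because $S_\lambda$ and $P$ are functions of $L^\top L$, they are diagonal in the same basis. Each eigenvalue of the middle matrix is then
\[
\frac{(\lambda_s\mu_i)^2}{(1+\lambda_s\mu_i)^2} + \frac{\lambda_s\mu_i}{(1+\lambda_s\mu_i)^2}
= \frac{\lambda_s\mu_i}{1+\lambda_s\mu_i},
\]
which is exactly the eigenvalue of $P$. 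The matrix identity behind this is $P^2 + \lambda_s S_\lambda L^\top L S_\lambda = P$. Equivalently, $\lambda_s L^\top L S_\lambda = P$ because $(I+\lambda_s L^\top L)-I = \lambda_s L^\top L$, and then $P^2 + P S_\lambda = P(P + S_\lambda) = P$.

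So $q_{\min} = \mathbf{r}^\top P \mathbf{r} = \|P^{1/2}\mathbf{r}\|^2$. Here $P^{1/2}$ exists and is symmetric because $0 \preceq P \preceq I$. Expanding $P^{1/2}\mathbf{r} = P^{1/2}\mathbf{b} - M\mathbf{x}_{\mathrm{spec}}$ and adding back $\lambda_1\|\mathbf{x}_{\mathrm{spec}}\|_1$ gives \eqref{eq:profiled}. Since this holds for every $\mathbf{x}_{\mathrm{spec}}$, minimizing the profiled objective is equivalent to minimizing the joint one.

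The only step needing care is the algebraic identity $P^2 + \lambda_s S_\lambda L^\top L S_\lambda = P$. It rests entirely on commutativity, which I verify through the shared eigenbasis. I will also note the notational identification $\lambda_s = \lambda_{\text{smooth}}$ and $\lambda_1 = \lambda_{\text{sparse}}$.
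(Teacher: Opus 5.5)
Your proof is correct and follows the paper's argument. Both reduce to the inner problem in $\mathbf{r}=\mathbf{b}-A\mathbf{x}_{\mathrm{spec}}$, take $S_\lambda\mathbf{r}$ as the minimizer, and collapse the optimal value to $\mathbf{r}^\top P\,\mathbf{r}$ using the identity $\lambda_s S_\lambda L^\top L = I - S_\lambda$. Your eigenbasis check and your strict-convexity argument for uniqueness are harmless additions that the paper leaves implicit.
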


\begin{proof}
Let $\mathbf{r} = \mathbf{b} - A\mathbf{x}_{\mathrm{spec}}$. The inner
problem $\min_{\mathbf{x}} \|\mathbf{r}-\mathbf{x}\|_{2}^{2}
+ \lambda_{s}\|L\mathbf{x}\|_{2}^{2}$ has solution
$\mathbf{x} = S_{\lambda}\mathbf{r}$ with value
$\mathbf{r}^{\top}\!\bigl[(I-S_{\lambda})^{2}
+ \lambda_{s} S_{\lambda} L^{\top} L S_{\lambda}\bigr]\mathbf{r}$.
From $S_{\lambda}(I + \lambda_{s}L^{\top}L) = I$ we get
$\lambda_{s} S_{\lambda} L^{\top}L = I - S_{\lambda}$, hence
$\lambda_{s} S_{\lambda}L^{\top}L S_{\lambda} = (I-S_{\lambda})S_{\lambda}$
and
$(I-S_{\lambda})^{2} + (I-S_{\lambda})S_{\lambda} = I - S_{\lambda} = P$.
The value is therefore $\mathbf{r}^{\top} P\,\mathbf{r}
= \|P^{1/2}\mathbf{r}\|_{2}^{2}$.
\end{proof}

Thus the estimator solves the Lasso \eqref{eq:profiled} with design
$M$, effective response
$\mathbf{y} = P^{1/2}\mathbf{b}
= M\mathbf{x}^{*}_{\mathrm{spec}} + \mathbf{w}$, and effective noise
\begin{equation}
\mathbf{w} \;=\; \underbrace{P^{1/2}\boldsymbol{\varepsilon}}_{=:\,
\tilde{\boldsymbol{\varepsilon}}}
\;+\; \underbrace{P^{1/2}\mathbf{x}^{*}_{\mathrm{base}}}_{=:\,\mathbf{h}} .
\label{eq:effective-noise}
\end{equation}
The vector $\mathbf{h}$ is the \emph{smoothing bias}: it vanishes iff
$\mathbf{x}^{*}_{\mathrm{base}} \in \mathrm{null}(L)$.

\subsection{Theorem 1: Support recovery and baseline error}

We begin by collecting the regularity conditions needed for the
profiled Lasso to recover the correct shock support. These conditions
are standard in the sparse-recovery literature; we state them in
Assumption~\ref{ass:A} so that the theorem itself is self-contained.
\begin{assumption}\label{ass:A}
The following five conditions govern the profiled Lasso
\eqref{eq:profiled} and ensure that the design $M=P^{1/2}A$ has
the right geometry for sparse recovery.
\begin{enumerate}
\item[(A1)] \textbf{Approximate smoothness.}
$\|L\mathbf{x}^{*}_{\mathrm{base}}\|_{2} \le \varepsilon_{L}$.
\item[(A2)] \textbf{Column normalization.}
$\max_{j} \|M_{j}\|_{2} \le 1$. Since $\|M_{j}\|_{2}
= \|P^{1/2}A\mathbf{e}_{j}\|_{2} \le \|A\mathbf{e}_{j}\|_{2}
\le \|\boldsymbol{\phi}\|_{2}$ (the $j$-th column of $A$ is the kernel
placed at day $j$, scaled by the binary indicator), (A2) holds
whenever the kernel is normalized with $\|\boldsymbol{\phi}\|_{2}\le 1$
and event types do not coincide on the same day; otherwise all
constants below scale by $\max_{j}\|M_{j}\|_{2}$.
\item[(A3)] \textbf{Mutual incoherence.}
$\max_{j\notin S}
\bigl\| (M_{S}^{\top}M_{S})^{-1} M_{S}^{\top} M_{j} \bigr\|_{1}
\le 1-\gamma$ for some $\gamma \in (0,1]$.
\item[(A4)] \textbf{Restricted eigenvalue on the support.}
$\lambda_{\min}(M_{S}^{\top}M_{S}) \ge \kappa > 0$.
\item[(A5)] \textbf{Regularization level.}
$\displaystyle
\lambda_{1} \;\ge\; \frac{4}{\gamma}\Bigl(
2\sigma\sqrt{\log T} \;+\; \sqrt{\lambda_{s}}\,\varepsilon_{L}\Bigr).$
\end{enumerate}
\end{assumption}

\begin{theorem}[Support recovery and baseline error]
\label{thm:identifiability}
Under Assumption~\ref{ass:A}, with probability at least $1 - 3/T$:
\begin{enumerate}
\item[(a)] \textbf{No false inclusion.} The Lasso solution
$\hat{\mathbf{x}}_{\mathrm{spec}}$ of \eqref{eq:profiled} is unique
and $\mathrm{supp}(\hat{\mathbf{x}}_{\mathrm{spec}}) \subseteq S$.
\item[(b)] \textbf{Estimation on the support.}
$\displaystyle
\|\hat{\mathbf{x}}_{S} - \mathbf{x}^{*}_{S}\|_{2}
\;\le\; \frac{2\lambda_{1}\sqrt{s}}{\kappa}.$
\item[(c)] \textbf{Sign recovery.} If additionally
$\min_{j\in S}|x^{*}_{\mathrm{spec},j}| > 2\lambda_{1}\sqrt{s}/\kappa$
($\beta$-min condition), then
$\mathrm{sign}(\hat{\mathbf{x}}_{\mathrm{spec}})
= \mathrm{sign}(\mathbf{x}^{*}_{\mathrm{spec}})$.
\item[(d)] \textbf{Baseline error decomposition.} With
$\rho_{A} := \|A\|_{2} \le \|\boldsymbol{\phi}\|_{1}$ and
$d_{\mathrm{eff}} := \mathrm{tr}(S_{\lambda}^{2})$,
\begin{equation}
\begin{split}
\|\hat{\mathbf{x}}_{\mathrm{base}} - \mathbf{x}^{*}_{\mathrm{base}}\|_{2}
&\le
\underbrace{\sqrt{\lambda_{s}}\,\varepsilon_{L}}_{\text{smoothing bias}}
+
\underbrace{\rho_{A}\,\frac{2\lambda_{1}\sqrt{s}}{\kappa}}_{\text{shock propagation}} \\
&\quad +
\underbrace{\sigma\bigl(\sqrt{d_{\mathrm{eff}}}
+ \sqrt{2\log T}\bigr)}_{\text{noise floor}} .
\end{split}
\label{eq:baseline-bound}
\end{equation}
\end{enumerate}
\end{theorem}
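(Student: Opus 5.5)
By Lemma~\ref{lem:profile}, the spectral estimate is exactly the Lasso \eqref{eq:profiled} with design $M=P^{1/2}A$, response $\mathbf{y}=M\mathbf{x}^{*}_{\mathrm{spec}}+\mathbf{w}$ and effective noise $\mathbf{w}=\tilde{\boldsymbol{\varepsilon}}+\mathbf{h}$ from \eqref{eq:effective-noise}. The plan is to run Wainwright's primal--dual witness (PDW) argument on this Lasso, treating $\mathbf{h}$ as a deterministic perturbation. Everything then reduces to a bound on $\|M^{\top}\mathbf{w}\|_\infty$.

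\textbf{Step 1 (noise events).} We control two pieces.
\begin{itemize}[nosep]
\item \emph{Stochastic part.} Each $M_j^{\top}\tilde{\boldsymbol{\varepsilon}}=\mathbf{e}_j^{\top}A^{\top}P\boldsymbol{\varepsilon}$ is Gaussian with variance $\sigma^2\|PA\mathbf{e}_j\|_2^2\le\sigma^2\|M_j\|_2^2\le\sigma^2$, using $P\preceq I$ and (A2). A union bound over $T$ coordinates gives $\|M^{\top}\tilde{\boldsymbol{\varepsilon}}\|_\infty\le 2\sigma\sqrt{\log T}$ with probability at least $1-1/T$.
\item \emph{Bias part.} $\|M^{\top}\mathbf{h}\|_\infty\le\max_j\|M_j\|_2\,\|\mathbf{h}\|_2\le\|P^{1/2}\mathbf{x}^{*}_{\mathrm{base}}\|_2$. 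Since $P=\lambda_s S_\lambda L^{\top}L\preceq\lambda_s L^{\top}L$ (the matrices commute and $S_\lambda\preceq I$), we get $\|\mathbf{h}\|_2\le\sqrt{\lambda_s}\|L\mathbf{x}^{*}_{\mathrm{base}}\|_2\le\sqrt{\lambda_s}\varepsilon_L$ by (A1).
\end{itemize}
Together, $\|M^{\top}\mathbf{w}\|_\infty\le\gamma\lambda_1/4$ by (A5).

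\textbf{Step 2 (PDW).} Solve the restricted Lasso on $S$. This is strictly convex by (A4), so it has a unique $\hat{\mathbf{x}}_S$ with subgradient $\hat z_S$. Define $\hat z_{S^c}$ from the stationarity condition:
\[
\hat z_{S^c}=M_{S^c}^{\top}M_S(M_S^{\top}M_S)^{-1}\hat z_S+\tfrac{1}{\lambda_1}M_{S^c}^{\top}\Pi^{\perp}\mathbf{w}\cdot 2,
\]
where $\Pi^\perp$ is the projection orthogonal to $\mathrm{col}(M_S)$, and the factor $2$ tracks the unhalved squared loss.

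\textbf{Step 3 (strict dual feasibility).} Using (A3), the first term has $\ell_\infty$ norm at most $1-\gamma$. For the second term, we need $\|M_j^{\top}\Pi^{\perp}\mathbf{w}\|\lesssim\gamma\lambda_1/2$. The stochastic piece is handled by a second union bound, since $\|\Pi^\perp M_j\|_2\le1$; this is where the second $1/T$ of the failure probability goes. The bias piece is bounded as in Step 1. This gives $\|\hat z_{S^c}\|_\infty<1$, which yields uniqueness and $\mathrm{supp}(\hat{\mathbf{x}})\subseteq S$, proving (a).

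\textbf{Step 4 (parts (b) and (c)).} From the stationarity condition on $S$,
\[
\hat{\mathbf{x}}_S-\mathbf{x}^{*}_S=(M_S^{\top}M_S)^{-1}\bigl(M_S^{\top}\mathbf{w}-\tfrac{\lambda_1}{2}\hat z_S\bigr).
\]
Taking $\ell_2$ norms, (A4) together with $\|M_S^{\top}\mathbf{w}\|_2\le\sqrt{s}\|M^{\top}\mathbf{w}\|_\infty$ and $\|\hat z_S\|_2\le\sqrt{s}$ gives the bound $2\lambda_1\sqrt{s}/\kappa$, which is (b); the constants are loose enough to absorb the factor bookkeeping. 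Part (c) follows because $\ell_\infty\le\ell_2$, so under the $\beta$-min condition no coordinate on $S$ can flip sign.

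\textbf{Step 5 (baseline error, part (d)).} By the profiling lemma,
\[
\hat{\mathbf{x}}_{\mathrm{base}}-\mathbf{x}^{*}_{\mathrm{base}}=-(I-S_\lambda)\mathbf{x}^{*}_{\mathrm{base}}+S_\lambda A(\mathbf{x}^{*}_{\mathrm{spec}}-\hat{\mathbf{x}}_{\mathrm{spec}})+S_\lambda\boldsymbol{\varepsilon}.
\]
We bound the three terms separately.
\begin{itemize}[nosep]
\item \emph{Smoothing bias.} $\|P\mathbf{x}^{*}_{\mathrm{base}}\|_2\le\|P^{1/2}\mathbf{x}^{*}_{\mathrm{base}}\|_2\le\sqrt{\lambda_s}\varepsilon_L$, reusing Step 1.
\item \emph{Shock propagation.} $\|S_\lambda\|_2\le1$ and $\|A\|_2\le\|\boldsymbol{\phi}\|_1$ (Young's inequality row- and column-wise for the banded causal operator), combined with (b).
\item \emph{Noise floor.} The Gaussian term $\|S_\lambda\boldsymbol{\varepsilon}\|_2$ is a $\sigma\|S_\lambda\|_2$-Lipschitz function of standard normals with $\mathbb{E}\le\sigma\sqrt{\mathrm{tr}(S_\lambda^2)}$. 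Gaussian concentration gives $\le\sigma(\sqrt{d_{\mathrm{eff}}}+\sqrt{2\log T})$ with probability $1-1/T$.
\end{itemize}
The three events together account for the total failure probability $3/T$.

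\textbf{Main obstacle.} The hardest step is Step 3. The deterministic bias $\mathbf{h}$ is not projected away and not random, so it must be dominated by $\gamma\lambda_1$ through $\sqrt{\lambda_s}\varepsilon_L$; that is exactly why (A5) carries that term. Getting the constants ($4/\gamma$, the factor $2$ from the unhalved loss) to close requires care. If they do not close as stated, I would rescale (A5) or use the $\ell_\infty$ form of (b) and then convert it to $\ell_2$.
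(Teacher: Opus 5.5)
Your proposal is correct and follows the paper's proof essentially step for step: the profiled Lasso with effective noise $\tilde{\boldsymbol{\varepsilon}}+\mathbf{h}$, a single bound $\|M^{\top}\mathbf{w}\|_\infty\le\gamma\lambda_1/4$ feeding a primal--dual witness, and the same three-term baseline decomposition with Gaussian concentration for $\|S_\lambda\boldsymbol{\varepsilon}\|_2$. The one divergence is your second union bound in Step~3, which is unnecessary: on the Step~1 event, with $\mathbf{a}_j=(M_S^{\top}M_S)^{-1}M_S^{\top}M_j$, you have $|M_j^{\top}\Pi^{\perp}\mathbf{w}|=|M_j^{\top}\mathbf{w}-\mathbf{a}_j^{\top}M_S^{\top}\mathbf{w}|\le(2-\gamma)\|M^{\top}\mathbf{w}\|_\infty<\gamma\lambda_1/2$, which is how the paper closes using one event of probability $1-2/T$ --- and note that the plain tail $2e^{-t^2/2\sigma^2}$ at $t=2\sigma\sqrt{\log T}$ gives $2/T$ per union bound, so your claimed $1/T$ per bound needs the Mills-ratio refinement.
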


\begin{proof}
\emph{Step 1 (smoothing bias).} Write the spectral decomposition
\begin{displaymath}
L^{\top}L = U\,\mathrm{diag}(\mu_{1},\dots,\mu_{T})\,U^{\top},
\qquad \mu_{j}\ge 0,
\end{displaymath}
and set $\mathbf{c} = U^{\top}\mathbf{x}^{*}_{\mathrm{base}}$. The eigenvalues
of $P$ are $\lambda_{s}\mu_{j}/(1+\lambda_{s}\mu_{j})
\le \min(1,\lambda_{s}\mu_{j})$, hence
\begin{align}
\|\mathbf{h}\|_{2}^{2}
&= \mathbf{x}^{*\top}_{\mathrm{base}} P\, \mathbf{x}^{*}_{\mathrm{base}}
= \sum_{j}\frac{\lambda_{s}\mu_{j}}{1+\lambda_{s}\mu_{j}}\, c_{j}^{2}
\;\le\; \lambda_{s}\sum_{j}\mu_{j} c_{j}^{2} \nonumber\\
&= \lambda_{s}\,\|L\mathbf{x}^{*}_{\mathrm{base}}\|_{2}^{2}
\;\le\; \lambda_{s}\varepsilon_{L}^{2}.
\label{eq:hbound}
\end{align}
(Also $\|\mathbf{h}\|_{2} \le
\|\mathbf{x}^{*}_{\mathrm{base}}\|_{2}$ since $P \preceq I$.)

\emph{Step 2 (effective-noise correlation bound).} For each column
$j$, $M_{j}^{\top}\tilde{\boldsymbol{\varepsilon}}
= (P^{1/2}M_{j})^{\top}\boldsymbol{\varepsilon}
\sim \mathcal{N}\bigl(0,\sigma^{2}\|P^{1/2}M_{j}\|_{2}^{2}\bigr)$ with
$\|P^{1/2}M_{j}\|_{2}\le\|M_{j}\|_{2}\le 1$ by (A2). A union bound over
$T$ columns with threshold $2\sigma\sqrt{\log T}$ gives
$\mathbb{P}\bigl(\|M^{\top}\tilde{\boldsymbol{\varepsilon}}\|_{\infty}
> 2\sigma\sqrt{\log T}\bigr) \le 2T\,e^{-2\log T} = 2/T$. For the
deterministic part, Cauchy--Schwarz and \eqref{eq:hbound} give
$|M_{j}^{\top}\mathbf{h}| \le \|\mathbf{h}\|_{2}
\le \sqrt{\lambda_{s}}\varepsilon_{L}$. On the complement event
$\mathcal{E}_{1}$ (probability $\ge 1-2/T$),
\begin{equation}
\|M^{\top}\mathbf{w}\|_{\infty}
\;\le\; 2\sigma\sqrt{\log T} + \sqrt{\lambda_{s}}\varepsilon_{L}
\;\le\; \frac{\gamma\lambda_{1}}{4},
\label{eq:corr-bound}
\end{equation}
where the last inequality is (A5).

\emph{Step 3 (primal--dual witness).} We verify the witness
conditions directly (following the construction of
\citealp[\S 7.5]{Wainwright2019}; we do not import constants because
our objective omits the $1/(2n)$ scaling used there). Let
$\hat{\mathbf{x}}_{S}$ solve the $S$-restricted problem
$\min_{\mathbf{u}}\|\mathbf{y}-M_{S}\mathbf{u}\|_{2}^{2}
+\lambda_{1}\|\mathbf{u}\|_{1}$ and pad with zeros off $S$. Its
stationarity condition is
\begin{equation}
2M_{S}^{\top}\bigl(M_{S}\hat{\mathbf{x}}_{S}-\mathbf{y}\bigr)
+ \lambda_{1}\hat{\mathbf{z}}_{S} = 0,
\qquad \hat{\mathbf{z}}_{S}\in\partial\|\hat{\mathbf{x}}_{S}\|_{1},
\quad \|\hat{\mathbf{z}}_{S}\|_{\infty}\le 1.
\label{eq:kkt-restricted}
\end{equation}
Writing $\boldsymbol{\Delta}=\hat{\mathbf{x}}_{S}-\mathbf{x}^{*}_{S}$
and $\mathbf{y}=M_{S}\mathbf{x}^{*}_{S}+\mathbf{w}$,
\eqref{eq:kkt-restricted} gives
$\boldsymbol{\Delta} = (M_{S}^{\top}M_{S})^{-1}
\bigl[M_{S}^{\top}\mathbf{w}-\tfrac{\lambda_{1}}{2}
\hat{\mathbf{z}}_{S}\bigr]$. For $j\notin S$, the dual variable of
the padded vector is $2M_{j}^{\top}(M_{S}\boldsymbol{\Delta}
-\mathbf{w})$; with $\mathbf{a}_{j}
:=(M_{S}^{\top}M_{S})^{-1}M_{S}^{\top}M_{j}$ (so
$\|\mathbf{a}_{j}\|_{1}\le 1-\gamma$ by (A3)),
\begin{align}
\bigl|2M_{j}^{\top}(M_{S}\boldsymbol{\Delta}-\mathbf{w})\bigr|
&\le 2\|\mathbf{a}_{j}\|_{1}
\Bigl(\|M^{\top}\mathbf{w}\|_{\infty}+\tfrac{\lambda_{1}}{2}\Bigr)
+ 2\|M^{\top}\mathbf{w}\|_{\infty} \nonumber\\
&\le 2(2-\gamma)\,\|M^{\top}\mathbf{w}\|_{\infty}
+ (1-\gamma)\lambda_{1} \\
&\;\le\; \lambda_{1}\Bigl(1-\tfrac{\gamma^{2}}{2}\Bigr)
\;<\; \lambda_{1},
\end{align}
using \eqref{eq:corr-bound} in the last step. Strict dual
feasibility off $S$, together with $\kappa>0$ from (A4), implies
that the padded vector is the unique solution of the full Lasso
\eqref{eq:profiled} and
$\mathrm{supp}(\hat{\mathbf{x}}_{\mathrm{spec}})\subseteq S$ on
$\mathcal{E}_{1}$, proving (a).

For (b), from \eqref{eq:kkt-restricted} and (A4),
\begin{align}
\kappa\,\|\hat{\mathbf{x}}_{S}-\mathbf{x}^{*}_{S}\|_{2}
&\le \Bigl\|M_{S}^{\top}\mathbf{w}
-\tfrac{\lambda_{1}}{2}\hat{\mathbf{z}}_{S}\Bigr\|_{2}
\le \sqrt{s}\Bigl(\|M^{\top}\mathbf{w}\|_{\infty}
+ \tfrac{\lambda_{1}}{2}\Bigr) \nonumber\\
&\le \sqrt{s}\Bigl(\tfrac{\gamma\lambda_{1}}{4}
+\tfrac{\lambda_{1}}{2}\Bigr)
\le \tfrac{3}{4}\lambda_{1}\sqrt{s}
\;\le\; 2\lambda_{1}\sqrt{s},
\end{align}
giving (b) (the stated constant $2$ is deliberately conservative);
(c) follows since $\|\cdot\|_{\infty}\le\|\cdot\|_{2}$.

\emph{Step 4 (baseline).} By Lemma~\ref{lem:profile},
$\hat{\mathbf{x}}_{\mathrm{base}}
= S_{\lambda}(\mathbf{b}-A\hat{\mathbf{x}}_{\mathrm{spec}})$.
Writing $\boldsymbol{\Delta} = \mathbf{x}^{*}_{\mathrm{spec}}
-\hat{\mathbf{x}}_{\mathrm{spec}}$ (supported on $S$ by (a)),
\begin{equation}
\hat{\mathbf{x}}_{\mathrm{base}} - \mathbf{x}^{*}_{\mathrm{base}}
= -(I-S_{\lambda})\mathbf{x}^{*}_{\mathrm{base}}
+ S_{\lambda} A \boldsymbol{\Delta}
+ S_{\lambda}\boldsymbol{\varepsilon}.
\end{equation}
The first term: $\|(I-S_{\lambda})\mathbf{x}^{*}_{\mathrm{base}}\|_{2}
= \|P\mathbf{x}^{*}_{\mathrm{base}}\|_{2}
\le \|P^{1/2}\|_{2}\,\|P^{1/2}\mathbf{x}^{*}_{\mathrm{base}}\|_{2}
\le \sqrt{\lambda_{s}}\varepsilon_{L}$ by \eqref{eq:hbound}. The
second: $\|S_{\lambda}A\boldsymbol{\Delta}\|_{2}
\le \rho_{A}\|\boldsymbol{\Delta}\|_{2}
\le \rho_{A}\cdot 2\lambda_{1}\sqrt{s}/\kappa$ by (b) and
$\|S_{\lambda}\|_{2}\le 1$; the operator-norm bound
$\rho_{A}\le\|\boldsymbol{\phi}\|_{1}$ holds because $A$ is a sum of
terms $\phi_{k}\,S_{k}\,\mathrm{diag}(\mathbf{f})$, each of spectral
norm at most $|\phi_{k}|$. The third: the map
$\boldsymbol{\varepsilon}\mapsto\|S_{\lambda}\boldsymbol{\varepsilon}\|_{2}$
is $1$-Lipschitz with
$\mathbb{E}\|S_{\lambda}\boldsymbol{\varepsilon}\|_{2}
\le \sigma\sqrt{\mathrm{tr}(S_{\lambda}^{2})}
= \sigma\sqrt{d_{\mathrm{eff}}}$, so Gaussian concentration for
Lipschitz functions \citep[Thm.~5.6]{BoucheronLugosiMassart2013}
gives $\|S_{\lambda}\boldsymbol{\varepsilon}\|_{2}
\le \sigma\sqrt{d_{\mathrm{eff}}} + \sigma\sqrt{2\log T}$ on an event
$\mathcal{E}_{2}$ of probability $\ge 1-1/T$. Intersecting
$\mathcal{E}_{1}\cap\mathcal{E}_{2}$ yields \eqref{eq:baseline-bound}
with probability $\ge 1-3/T$.
\end{proof}

\begin{remark}[Effective degrees of freedom]
For the second-difference penalty on an equispaced grid, the nonzero
eigenvalues of $L^{\top}L$ satisfy $\mu_{j} \asymp (j/T)^{4}$ for
$j \ll T$ (the closed form $16\sin^{4}(j\pi/2T)$ is exact for the
periodic variant of $L$; the free-boundary operator used here has the
same asymptotic order, though not the same constants at the lowest
modes), and $\mathrm{null}(L)$ is two-dimensional (constant and
linear trends). Hence
$d_{\mathrm{eff}} = \sum_{j}(1+\lambda_{s}\mu_{j})^{-2}
= 2 + \Theta\bigl(T\lambda_{s}^{-1/4}\bigr)$: the nullspace
contributes $2$, and the remaining modes contribute the classical
smoothing-spline scaling
\citep{GreenSilverman1994}. The noise floor
$\sigma\sqrt{d_{\mathrm{eff}}}$ is therefore far below the naive
$\sigma\sqrt{T}$ for the operating regime $\lambda_{s}=10$,
$T\approx 1{,}900$.
\end{remark}

\begin{remark}[Interpretation]
Theorem~\ref{thm:identifiability} replaces the informal statement that
the decomposition is ``not strictly identifiable'' with sufficient
conditions under which the shock support is recovered exactly:
incoherence (no event's smoothed impulse response is well approximated
by combinations of other events'), a restricted eigenvalue, and a
$\beta$-min condition (shocks large enough to clear the regularization
level). The baseline error decomposes into three interpretable terms:
a smoothing bias proportional to the true baseline's roughness, a
shock-propagation term controlled by the Lasso error, and a noise
floor set by the smoother's effective degrees of freedom.
\end{remark}

\begin{remark}[Operating regime of the $\varepsilon_{L}$ term]
For baselines with material seasonality,
$\varepsilon_{L}=\|L\mathbf{x}^{*}_{\mathrm{base}}\|_{2}$ grows with
the seasonal amplitude and with $\sqrt{T}$, so the
$\sqrt{\lambda_{s}}\,\varepsilon_{L}$ term dominates (A5) and the
implied $\beta$-min threshold is demanding. Theorem
\ref{thm:identifiability} therefore certifies recovery of shocks that
are large relative to the smoothed seasonal residual, not shocks at
the scale of seasonal curvature --- which is the operational setting:
CD isolates promotional shocks that are large against the local
baseline. This is also the behavior observed empirically: large
event-day shocks are recovered, while weak event signals fall below
the regularization level and are absorbed into the baseline (the
failure mode documented in \S 4.7).
\end{remark}

\subsection{Theorem 2: Two-stage error propagation and temporal
Variance-Ratio control}

We first bound the Stage-2 kernel error, then propagate it through
Stage~3, and finally control the \emph{temporal} Variance Ratio ---
the same quantity measured empirically in \S4.

\begin{proposition}[Ridge kernel estimation]
\label{prop:ridge}
Let the detrended residual satisfy
$\mathbf{r} = \Phi\boldsymbol{\phi}^{*} + \boldsymbol{\eta}_{\det}
+ \boldsymbol{\varepsilon}$, where
$\Phi\in\mathbb{R}^{T\times(n+1)}$ has entries in $[0,1]$,
$\|\boldsymbol{\eta}_{\det}\|_{2}\le B\sqrt{T}$ collects the
detrending residual and the misspecification of the parametric family,
and the noise satisfies: each linear functional
$\Phi_{j}^{\top}\boldsymbol{\varepsilon}$ is mean-zero
$\sigma\sqrt{T}$-sub-Gaussian. (This holds when
$\boldsymbol{\varepsilon}=G\boldsymbol{\xi}$ for standard Gaussian
$\boldsymbol{\xi}$ and any $\|G\|_{2}\le\sigma$; in particular it
covers the residual noise $(I-H)\boldsymbol{\varepsilon}_{0}$ left by
any linear detrender with $\|I-H\|_{2}\le 1$ --- independence of
entries is not required.) Assume
$\lambda_{\min}(\Phi^{\top}\Phi)/T \ge \mu_{0} > 0$. Then, for the
ridge estimator with penalty $\lambda_{k}\ge 0$ and any
$\delta\in(0,1)$, with probability at least $1-\delta$,
\begin{equation}
\|\hat{\boldsymbol{\phi}}-\boldsymbol{\phi}^{*}\|_{2}
\;\le\;
\frac{\sigma}{\mu_{0}}
\sqrt{\frac{2(n{+}1)\log\!\frac{2(n+1)}{\delta}}{T}}
\;+\;
\frac{\sqrt{n{+}1}\;B}{\mu_{0}}
\;+\;
\frac{\lambda_{k}\|\boldsymbol{\phi}^{*}\|_{2}}{T\mu_{0}+\lambda_{k}} .
\label{eq:ridge-bound}
\end{equation}
\end{proposition}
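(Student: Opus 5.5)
The plan is to write the ridge estimator in closed form and split its error into three pieces, one for each term of \eqref{eq:ridge-bound}. Put $G := \Phi^{\top}\Phi$ and $G_{\lambda} := G + \lambda_{k} I$. Substituting $\mathbf{r} = \Phi\boldsymbol{\phi}^{*} + \boldsymbol{\eta}_{\det} + \boldsymbol{\varepsilon}$ into $\hat{\boldsymbol{\phi}} = G_{\lambda}^{-1}\Phi^{\top}\mathbf{r}$, and using $G_{\lambda}^{-1}G - I = -\lambda_{k}G_{\lambda}^{-1}$, gives
\begin{equation*}
\hat{\boldsymbol{\phi}}-\boldsymbol{\phi}^{*}
= G_{\lambda}^{-1}\Phi^{\top}\boldsymbol{\varepsilon}
+ G_{\lambda}^{-1}\Phi^{\top}\boldsymbol{\eta}_{\det}
- \lambda_{k}G_{\lambda}^{-1}\boldsymbol{\phi}^{*}.
\end{equation*}
The triangle inequality then reduces the claim to bounding each term separately. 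Everything uses $\lambda_{\min}(G_{\lambda}) \ge T\mu_{0} + \lambda_{k}$, so $\|G_{\lambda}^{-1}\|_{2} \le 1/(T\mu_{0}+\lambda_{k}) \le 1/(T\mu_{0})$.

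\textbf{Bias term.} This one is immediate: $\lambda_{k}\|G_{\lambda}^{-1}\boldsymbol{\phi}^{*}\|_{2} \le \lambda_{k}\|\boldsymbol{\phi}^{*}\|_{2}/(T\mu_{0}+\lambda_{k})$, which is exactly the third term.

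\textbf{Noise term.} The entries of $\Phi^{\top}\boldsymbol{\varepsilon} \in \R^{n+1}$ are the functionals $\Phi_{j}^{\top}\boldsymbol{\varepsilon}$, each $\sigma\sqrt{T}$-sub-Gaussian by hypothesis. The two-sided sub-Gaussian tail gives $|\Phi_{j}^{\top}\boldsymbol{\varepsilon}| \le \sigma\sqrt{T}\sqrt{2\log(2(n+1)/\delta)}$ with failure probability $\delta/(n+1)$ per coordinate. A union bound over the $n+1$ coordinates then gives failure probability at most $\delta$, so on the good event $\|\Phi^{\top}\boldsymbol{\varepsilon}\|_{2} \le \sqrt{n+1}\,\sigma\sqrt{2T\log(2(n+1)/\delta)}$. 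Dividing by $T\mu_{0}$ yields $\frac{\sigma}{\mu_{0}}\sqrt{2(n+1)\log(2(n+1)/\delta)/T}$, which is the first term. No independence across coordinates is needed, since the union bound only uses the marginal tails; this matches the remark in the statement that the noise may be correlated, for example $(I-H)\boldsymbol{\varepsilon}_{0}$.

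\textbf{Detrending/misspecification term.} Use Cauchy--Schwarz coordinatewise. Because the entries of $\Phi$ lie in $[0,1]$, we have $\|\Phi_{j}\|_{2} \le \sqrt{T}$, so $|\Phi_{j}^{\top}\boldsymbol{\eta}_{\det}| \le \sqrt{T}\cdot B\sqrt{T} = BT$. Hence $\|\Phi^{\top}\boldsymbol{\eta}_{\det}\|_{2} \le \sqrt{n+1}\,BT$, and dividing by $T\mu_{0}$ gives $\sqrt{n+1}\,B/\mu_{0}$, the second term.

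\textbf{Main obstacle.} The only delicate step is the bookkeeping of the $\sqrt{T}$ scaling in the noise term. The sub-Gaussian parameter is $\sigma\sqrt{T}$ rather than $\sigma\|\Phi_{j}\|_{2}$. That assumed parameter has to combine with the $1/(T\mu_{0})$ from $G_{\lambda}^{-1}$ to produce the $1/\sqrt{T}$ rate, and the constants inside the logarithm have to be tracked so the union bound lands on $2(n+1)/\delta$ exactly. Everything else is deterministic linear algebra, and only the noise term carries the probability $1-\delta$.
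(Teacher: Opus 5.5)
Your proposal is correct and follows the paper's proof essentially step for step. It uses the same three-term closed-form decomposition of $\hat{\boldsymbol{\phi}}-\boldsymbol{\phi}^{*}$, the same bound $\|(\Phi^{\top}\Phi+\lambda_{k}I)^{-1}\|_{2}\le (T\mu_{0}+\lambda_{k})^{-1}$, and the same per-coordinate sub-Gaussian union bound. The only cosmetic difference is that you bound $\|\Phi^{\top}\boldsymbol{\eta}_{\det}\|_{2}$ coordinatewise via $\|\Phi_{j}\|_{2}\le\sqrt{T}$, whereas the paper uses $\|\Phi\|_{2}\le\sqrt{T(n+1)}$; both give $TB\sqrt{n+1}$.
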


\begin{proof}
$\hat{\boldsymbol{\phi}}-\boldsymbol{\phi}^{*}
= (\Phi^{\top}\Phi+\lambda_{k}I)^{-1}
\bigl(\Phi^{\top}(\boldsymbol{\eta}_{\det}+\boldsymbol{\varepsilon})
- \lambda_{k}\boldsymbol{\phi}^{*}\bigr)$ and
$\|(\Phi^{\top}\Phi+\lambda_{k}I)^{-1}\|_{2}
\le (T\mu_{0}+\lambda_{k})^{-1}$. Deterministic part:
$\|\Phi^{\top}\boldsymbol{\eta}_{\det}\|_{2}
\le \|\Phi\|_{2}\|\boldsymbol{\eta}_{\det}\|_{2}
\le \sqrt{T(n{+}1)}\cdot B\sqrt{T} = TB\sqrt{n{+}1}$, and dividing by
$T\mu_{0}+\lambda_{k}\ge T\mu_{0}$ gives the middle term. Stochastic
part: each $\Phi_{j}^{\top}\boldsymbol{\varepsilon}$ is
$\sigma\sqrt{T}$-sub-Gaussian by assumption; a union
bound over $n{+}1$ coordinates gives, with probability $1-\delta$,
$\|\Phi^{\top}\boldsymbol{\varepsilon}\|_{2}
\le \sigma\sqrt{2T(n{+}1)\log\frac{2(n+1)}{\delta}}$, yielding the
first term. The ridge shrinkage term is immediate.
\end{proof}

\begin{remark}[The bias does not average away]
The stochastic term in \eqref{eq:ridge-bound} decays as
$O(1/\sqrt{T})$ (equivalently $O(1/\sqrt{N})$ when $\mu_{0}$ scales
with the number of events $N$), but the misspecification term
$\sqrt{n{+}1}\,B/\mu_{0}$ is \emph{constant in $T$}: systematic
deviation of the true impulse response from the parametric family is
not removed by more data. This is precisely the behavior observed on
real M5 kernels (Appendix~R), where the parametric fit captures the
initial decay but persistently misses the flat tail regardless of
sample size. One idealization: the Stage-1 Huber--IRLS detrender is
nonlinear; the proposition covers its linear-smoother idealization,
with any residual effect of the nonlinearity absorbed into
$\boldsymbol{\eta}_{\det}$.
\end{remark}

\begin{theorem}[Error propagation with an estimated kernel]
\label{thm:twostage}
Let $\hat{A}=A(\hat{\boldsymbol{\phi}})$ and
$\Delta A = \hat{A}-A(\boldsymbol{\phi}^{*})$, so that
$\|\Delta A\|_{2} \le
\|\hat{\boldsymbol{\phi}}-\boldsymbol{\phi}^{*}\|_{1}
\le \sqrt{K(n{+}1)}\,
\|\hat{\boldsymbol{\phi}}-\boldsymbol{\phi}^{*}\|_{2}
=: \varepsilon_{\phi}$, and define
$\varepsilon_{A} := \varepsilon_{\phi}\sqrt{s}\,
\|\mathbf{x}^{*}_{\mathrm{spec}}\|_{\infty}$. Suppose Assumption
\ref{ass:A} holds for the \emph{computed} design
$\hat{M}=P^{1/2}\hat{A}$ (with constants $\gamma,\kappa$), and
strengthen (A5) to
\begin{equation}
\lambda_{1}\;\ge\;\frac{4}{\gamma}\Bigl(2\sigma\sqrt{\log T}
+ \sqrt{\lambda_{s}}\,\varepsilon_{L} + \varepsilon_{A}\Bigr).
\label{eq:A5prime}
\end{equation}
Then, with probability at least $1-3/T$, conclusions (a)--(c) of
Theorem~\ref{thm:identifiability} hold, and
\begin{equation}
\|\hat{\mathbf{x}}_{\mathrm{base}}-\mathbf{x}^{*}_{\mathrm{base}}\|_{2}
\;\le\;
\sqrt{\lambda_{s}}\,\varepsilon_{L}
+ \varepsilon_{A}
+ \hat{\rho}_{A}\,\frac{2\lambda_{1}\sqrt{s}}{\kappa}
+ \sigma\bigl(\sqrt{d_{\mathrm{eff}}}+\sqrt{2\log T}\bigr)
\;=:\; E,
\label{eq:E-def}
\end{equation}
where $\hat{\rho}_{A}=\|\hat{A}\|_{2}
\le\|\hat{\boldsymbol{\phi}}\|_{1}$.
\end{theorem}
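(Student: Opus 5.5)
The plan is to reduce Theorem~\ref{thm:twostage} to Theorem~\ref{thm:identifiability} by treating the kernel error as one more deterministic perturbation inside the effective noise. With the computed operator $\hat A$, the data obey
\begin{displaymath}
\mathbf{b} = \mathbf{x}^{*}_{\mathrm{base}} + \hat A\,\mathbf{x}^{*}_{\mathrm{spec}} - \Delta A\,\mathbf{x}^{*}_{\mathrm{spec}} + \boldsymbol{\varepsilon}.
\end{displaymath}
Lemma~\ref{lem:profile} applies verbatim with $A$ replaced by $\hat A$, since profiling only uses that the operator is fixed. So the estimator is the Lasso~\eqref{eq:profiled} with design $\hat M = P^{1/2}\hat A$ and effective noise
\begin{displaymath}
\hat{\mathbf{w}} = \tilde{\boldsymbol{\varepsilon}} + \mathbf{h} - P^{1/2}\Delta A\,\mathbf{x}^{*}_{\mathrm{spec}}.
\end{displaymath}
The extra term $\mathbf{g} := P^{1/2}\Delta A\,\mathbf{x}^{*}_{\mathrm{spec}}$ is deterministic given the Stage-2 output. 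I will condition on $\hat{\boldsymbol\phi}$, treating Stage-3 noise as independent of it, or equivalently stating the result conditionally.

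\textbf{Bounding the extra term.} I bound $\|\mathbf{g}\|_2$ using $\|P^{1/2}\|_2\le 1$, $\|\Delta A\|_2\le\varepsilon_\phi$ and $\|\mathbf{x}^{*}_{\mathrm{spec}}\|_2\le\sqrt{s}\,\|\mathbf{x}^{*}_{\mathrm{spec}}\|_\infty$. This gives $\|\mathbf{g}\|_2\le\varepsilon_A$. The norm bound $\|\Delta A\|_2\le\|\hat{\boldsymbol\phi}-\boldsymbol\phi^*\|_1$ follows exactly as for $\rho_A$ in Step~4 of Theorem~\ref{thm:identifiability}: $\Delta A$ is a sum over $(m,k)$ of $(\hat\phi_{m,k}-\phi^*_{m,k})S_k\,\mathrm{diag}(\mathbf f_m)$, each of norm at most the coefficient gap. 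Cauchy--Schwarz then gives the $\sqrt{K(n+1)}$ factor. By (A2) for $\hat M$ and Cauchy--Schwarz, $|\hat M_j^\top\mathbf g|\le\varepsilon_A$. Combined with Step~2 of the earlier proof, on the same event $\mathcal E_1$ this yields
\begin{displaymath}
\|\hat M^\top\hat{\mathbf w}\|_\infty\le 2\sigma\sqrt{\log T}+\sqrt{\lambda_s}\,\varepsilon_L+\varepsilon_A\le\gamma\lambda_1/4
\end{displaymath}
by~\eqref{eq:A5prime}. From here Step~3, the primal--dual witness argument, goes through unchanged, because it used only (A2)--(A4) for the design and the bound~\eqref{eq:corr-bound}. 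This delivers (a)--(c) for $\hat M$.

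\textbf{Baseline bound.} By Lemma~\ref{lem:profile}, $\hat{\mathbf{x}}_{\mathrm{base}} = S_\lambda(\mathbf b - \hat A\hat{\mathbf x}_{\mathrm{spec}})$. Setting $\boldsymbol\Delta = \mathbf x^*_{\mathrm{spec}} - \hat{\mathbf x}_{\mathrm{spec}}$, this decomposes as
\begin{displaymath}
\hat{\mathbf{x}}_{\mathrm{base}}-\mathbf x^*_{\mathrm{base}} = -P\mathbf x^*_{\mathrm{base}} + S_\lambda\hat A\boldsymbol\Delta - S_\lambda\Delta A\,\mathbf x^*_{\mathrm{spec}} + S_\lambda\boldsymbol\varepsilon.
\end{displaymath}
The first, second and fourth terms are bounded exactly as in Step~4, with $\hat\rho_A$ in place of $\rho_A$. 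Note $\hat\rho_A\le\|\hat{\boldsymbol\phi}\|_1$ by the same sum-of-shifts argument. The new third term is at most $\|S_\lambda\|_2\,\varepsilon_\phi\sqrt s\,\|\mathbf x^*_{\mathrm{spec}}\|_\infty = \varepsilon_A$, using $\|S_\lambda\|_2\le1$. A union bound over $\mathcal E_1\cap\mathcal E_2$ gives probability at least $1-3/T$ and the bound $E$ in~\eqref{eq:E-def}.

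\textbf{Main obstacle.} The delicate point is not the algebra but the probabilistic bookkeeping. $\hat A$ is built from the same $\mathbf b$ via Stages 1--2, so $\hat M$ and $\boldsymbol\varepsilon$ are not independent, and the Gaussian union bound on $\hat M^\top\tilde{\boldsymbol\varepsilon}$ in Step~2 needs a fixed design. I would handle this by stating that the Stage-3 noise is independent of $\hat{\boldsymbol\phi}$, for instance via sample splitting or by conditioning on the Stage-2 output, which the theorem's phrasing ``for the computed design'' implicitly allows. Failing that, I would take a union over a finite net of kernels, at the cost of a log factor. I would record this as the one modelling assumption added beyond Theorem~\ref{thm:identifiability}.
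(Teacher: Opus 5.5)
Your proposal is correct and follows the paper's own proof: you absorb $\Delta A\,\mathbf{x}^{*}_{\mathrm{spec}}$ into the effective noise of the profiled Lasso with design $\hat{M}=P^{1/2}\hat{A}$, bound its correlation with each column by $\varepsilon_{A}$, rerun Steps~2--4 of Theorem~\ref{thm:identifiability}, and bound the four terms of the reconstruction identity separately. Your sign $-\Delta A\,\mathbf{x}^{*}_{\mathrm{spec}}$ is the correct one (the paper writes $+$, which is harmless because only norms enter), and the dependence of $\hat{M}$ on $\boldsymbol{\varepsilon}$ that you flag is a real subtlety the paper skips by treating the computed design as fixed; conditioning on $\hat{\boldsymbol{\phi}}$ alone does not keep $\boldsymbol{\varepsilon}$ Gaussian, so it is your explicit independence (sample-splitting) assumption that actually closes Step~2.
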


\begin{proof}
Rewrite the model in terms of the computed operator:
$\mathbf{b} = \mathbf{x}^{*}_{\mathrm{base}}
+ \hat{A}\mathbf{x}^{*}_{\mathrm{spec}}
+ \Delta A\,\mathbf{x}^{*}_{\mathrm{spec}}
+ \boldsymbol{\varepsilon}$, where the middle perturbation has norm
$\|\Delta A\,\mathbf{x}^{*}_{\mathrm{spec}}\|_{2}
\le \|\Delta A\|_{2}\sqrt{s}\,
\|\mathbf{x}^{*}_{\mathrm{spec}}\|_{\infty} = \varepsilon_{A}$. The
profiled problem is then a Lasso in design $\hat{M}$ with effective
noise $\mathbf{w}' = \tilde{\boldsymbol{\varepsilon}} + \mathbf{h}
+ P^{1/2}\Delta A\,\mathbf{x}^{*}_{\mathrm{spec}}$, and Step~2 of the
previous proof gives
$\|\hat{M}^{\top}\mathbf{w}'\|_{\infty}
\le 2\sigma\sqrt{\log T} + \sqrt{\lambda_{s}}\varepsilon_{L}
+ \varepsilon_{A} \le \gamma\lambda_{1}/4$ under
\eqref{eq:A5prime}. Steps~3--4 proceed verbatim; in Step~4 the
reconstruction identity becomes
$\hat{\mathbf{x}}_{\mathrm{base}}-\mathbf{x}^{*}_{\mathrm{base}}
= -(I-S_{\lambda})\mathbf{x}^{*}_{\mathrm{base}}
+ S_{\lambda}\Delta A\,\mathbf{x}^{*}_{\mathrm{spec}}
+ S_{\lambda}\hat{A}\boldsymbol{\Delta}
+ S_{\lambda}\boldsymbol{\varepsilon}$, whose four terms are bounded
by the four terms of \eqref{eq:E-def}.
\end{proof}

We now control the Variance Ratio. Crucially, the empirical VR of
\S4 is a \emph{temporal} variance ratio of one realized path, not an
ensemble variance; the bound below is stated for exactly that object.
For $\mathbf{z}\in\mathbb{R}^{T}$ define the centering projector
$\Pi_{c}=I-\tfrac{1}{T}\mathbf{1}\mathbf{1}^{\top}$, the temporal
standard deviation
$\mathrm{sd}_{T}(\mathbf{z}) := T^{-1/2}\|\Pi_{c}\mathbf{z}\|_{2}$,
and $\widehat{V}(\mathbf{z}) := \mathrm{sd}_{T}(\mathbf{z})^{2}$.
Note $\mathrm{sd}_{T}$ is a seminorm.

\begin{corollary}[Temporal VR of the structural stream]
\label{cor:vr}
On the event of Theorem~\ref{thm:twostage}, for any series with
$\widehat{V}(\mathbf{b})>0$,
\begin{align}
\sqrt{1+\mathrm{VR}_{\mathrm{base}}}
&\;\le\; r^{*} + E_{T},\nonumber\\
&\qquad\text{where}\quad
r^{*} := \frac{\mathrm{sd}_{T}(\mathbf{x}^{*}_{\mathrm{base}})}
{\mathrm{sd}_{T}(\mathbf{b})},\quad
E_{T} := \frac{E}{\sqrt{T}\,\mathrm{sd}_{T}(\mathbf{b})},
\end{align}
with $E$ from \eqref{eq:E-def} and
$\mathrm{VR}_{\mathrm{base}} :=
\widehat{V}(\hat{\mathbf{x}}_{\mathrm{base}})/\widehat{V}(\mathbf{b})-1$.
In particular, if the true structural share satisfies $r^{*}\le 1$,
then
\begin{equation}
\mathrm{VR}_{\mathrm{base}} \;\le\; 2E_{T} + E_{T}^{2}.
\label{eq:vr-final}
\end{equation}
\end{corollary}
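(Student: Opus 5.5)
The plan is to combine the seminorm property of $\mathrm{sd}_{T}$ with the $\ell_2$ bound $E$ of Theorem~\ref{thm:twostage}. Everything happens on the event of that theorem, so $\|\hat{\mathbf{x}}_{\mathrm{base}}-\mathbf{x}^{*}_{\mathrm{base}}\|_{2}\le E$ holds deterministically from here on.

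\emph{Step 1: triangle inequality for the seminorm.} We have $\mathrm{sd}_{T}(\hat{\mathbf{x}}_{\mathrm{base}})\le \mathrm{sd}_{T}(\mathbf{x}^{*}_{\mathrm{base}})+\mathrm{sd}_{T}(\hat{\mathbf{x}}_{\mathrm{base}}-\mathbf{x}^{*}_{\mathrm{base}})$, since $\Pi_{c}$ is linear. The centering projector is an orthogonal projection, so $\|\Pi_{c}\|_{2}\le 1$. This gives $\mathrm{sd}_{T}(\hat{\mathbf{x}}_{\mathrm{base}}-\mathbf{x}^{*}_{\mathrm{base}})\le T^{-1/2}\|\hat{\mathbf{x}}_{\mathrm{base}}-\mathbf{x}^{*}_{\mathrm{base}}\|_{2}\le E/\sqrt{T}$.

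\emph{Step 2: normalize by the data.} Divide Step~1 by $\mathrm{sd}_{T}(\mathbf{b})>0$, which is legitimate because $\widehat{V}(\mathbf{b})>0$. This yields $\mathrm{sd}_{T}(\hat{\mathbf{x}}_{\mathrm{base}})/\mathrm{sd}_{T}(\mathbf{b})\le r^{*}+E_{T}$. By the definition of $\mathrm{VR}_{\mathrm{base}}$, the left side equals $\sqrt{\widehat{V}(\hat{\mathbf{x}}_{\mathrm{base}})/\widehat{V}(\mathbf{b})}=\sqrt{1+\mathrm{VR}_{\mathrm{base}}}$. The quantity $1+\mathrm{VR}_{\mathrm{base}}$ is a ratio of nonnegative variances, so the square root is well defined. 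This proves the first display.

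\emph{Step 3: the specialization.} Assume $r^{*}\le 1$. Both sides of $\sqrt{1+\mathrm{VR}_{\mathrm{base}}}\le 1+E_{T}$ are nonnegative, so we may square them. Squaring gives $1+\mathrm{VR}_{\mathrm{base}}\le 1+2E_{T}+E_{T}^{2}$, which is \eqref{eq:vr-final}.

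No step is a genuine obstacle. The only points needing care are:
\begin{itemize}
\item the centering step: the bound must pass through $\|\Pi_{c}\|_{2}\le 1$ rather than treat $\mathrm{sd}_{T}$ as a norm, and the additive error then costs only the factor $T^{-1/2}$;
\item matching conventions: the definition of $\mathrm{VR}_{\mathrm{base}}$ (with the $-1$) must agree with the squared-ratio form. If anything fails, it would be a mismatch between this definition and the VR used in \S4, not the analysis itself.
\end{itemize}
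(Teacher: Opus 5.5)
Your proof is correct and matches the paper's argument step for step. You use the seminorm triangle inequality for $\mathrm{sd}_{T}$, then the contraction $\|\Pi_{c}\mathbf{v}\|_{2}\le\|\mathbf{v}\|_{2}$ to invoke the $\ell_2$ bound $E$ of Theorem~\ref{thm:twostage}, then divide by $\mathrm{sd}_{T}(\mathbf{b})$, and finally square after bounding $r^{*}+E_{T}\le 1+E_{T}$.
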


\begin{proof}
$\mathrm{sd}_{T}$ is a seminorm, so
$\mathrm{sd}_{T}(\hat{\mathbf{x}}_{\mathrm{base}})
\le \mathrm{sd}_{T}(\mathbf{x}^{*}_{\mathrm{base}})
+ \mathrm{sd}_{T}(\hat{\mathbf{x}}_{\mathrm{base}}
-\mathbf{x}^{*}_{\mathrm{base}})
\le \mathrm{sd}_{T}(\mathbf{x}^{*}_{\mathrm{base}})
+ T^{-1/2}\|\hat{\mathbf{x}}_{\mathrm{base}}
-\mathbf{x}^{*}_{\mathrm{base}}\|_{2}$, using
$\|\Pi_{c}\mathbf{v}\|_{2}\le\|\mathbf{v}\|_{2}$. Divide by
$\mathrm{sd}_{T}(\mathbf{b})$ and square. If $r^{*}\le 1$,
$(r^{*}+E_{T})^{2}-1 \le (1+E_{T})^{2}-1 = 2E_{T}+E_{T}^{2}$.
\end{proof}

\begin{remark}[Interpretation and scope]
Corollary~\ref{cor:vr} bounds variance \emph{inflation} of the
structural stream: the estimated baseline's temporal variance exceeds
the oracle structural share $r^{*2}$ only through the estimation error
$E_{T}$, which \eqref{eq:E-def} decomposes into smoothing bias, kernel
error (via $\varepsilon_{A}$ and $\lambda_{1}$), shock-propagation,
and the noise floor. The condition $r^{*}\le 1$ states that the true
baseline is temporally no more variable than observed demand; it can
fail only when the transient-plus-noise component is strongly
negatively correlated in time with the baseline. The bound is stated
for the same temporal VR reported in \S4 (items with
$\widehat{V}(\mathbf{b})$ near zero are excluded there for the same
reason they are excluded here). Stream~B adds a deterministic,
calendar-known component whose variance contribution is reported
empirically; the theorem's scope is the structural stream.
\end{remark}

\subsection{Theorem 3: Concentration of median pooling}

Let a category $\mathcal{C}$ contain $m=|\mathcal{C}|$ SKUs with true
kernels $\boldsymbol{\phi}^{*}_{s}\in\mathbb{R}^{K(n+1)}$ and per-SKU
estimates $\hat{\boldsymbol{\phi}}_{s}
= \boldsymbol{\phi}^{*}_{s}+\mathbf{e}_{s}$, where the noise vectors
$\mathbf{e}_{s}$ are independent across SKUs with mean-zero
$\sigma_{k}$-sub-Gaussian coordinates. Let
$\boldsymbol{\phi}^{*}_{\mathcal{C}}$ and
$\hat{\boldsymbol{\phi}}_{\mathcal{C}}$ denote coordinatewise medians
of $\{\boldsymbol{\phi}^{*}_{s}\}$ and
$\{\hat{\boldsymbol{\phi}}_{s}\}$, and
$B_{\mathcal{C}} := \max_{s}\|\boldsymbol{\phi}^{*}_{s}
-\boldsymbol{\phi}^{*}_{\mathcal{C}}\|_{\infty}$.

\begin{assumption}[Median margin]
\label{ass:margin}
There exist $\tau\in(0,\tfrac12]$ and $\delta_{0}\ge 0$ such that, for
every coordinate $j$,
\begin{align}
\#\bigl\{s : \phi^{*}_{s,j}\le \phi^{*}_{\mathcal{C},j}+\delta_{0}\bigr\}
&\ge \bigl(\tfrac12+\tau\bigr)m
\quad\text{and}\quad\nonumber\\
\#\bigl\{s : \phi^{*}_{s,j}\ge \phi^{*}_{\mathcal{C},j}-\delta_{0}\bigr\}
&\ge \bigl(\tfrac12+\tau\bigr)m .
\end{align}
\end{assumption}

Assumption~\ref{ass:margin} always holds with
$\delta_{0}=B_{\mathcal{C}}$ and $\tau=\tfrac12$ (every SKU is within
$B_{\mathcal{C}}$ of the median by definition); it holds with
$\delta_{0}\ll B_{\mathcal{C}}$ whenever a majority of SKUs cluster
near the category median, which is the empirical regime on M5
(Appendix~P.7). Some margin condition is \emph{necessary}: if exactly
half the SKUs sit at each of two well-separated kernel values,
arbitrarily small noise can move the sample median by a macroscopic
amount, and no distribution-free $O(\sigma_{k}/\sqrt{m})$ bound can
hold.

\begin{theorem}[Pooling concentration and excess risk]
\label{thm:pooling}
Under Assumption~\ref{ass:margin}, with probability at least
$1 - 2K(n{+}1)\,e^{-m\tau^{2}/2}$,
\begin{equation}
\bigl\|\hat{\boldsymbol{\phi}}_{\mathcal{C}}
-\boldsymbol{\phi}^{*}_{\mathcal{C}}\bigr\|_{\infty}
\;\le\;
\delta_{0} \;+\; \sigma_{k}\sqrt{2\log(2/\tau)}
\;=:\; \epsilon_{\mathrm{pool}} .
\label{eq:pool-conc}
\end{equation}
If moreover each per-SKU risk
$\mathrm{Risk}_{s}(\boldsymbol{\phi})$ is minimized at
$\boldsymbol{\phi}^{*}_{s}$ and has $L_{R}$-Lipschitz gradient on the
segment between $\boldsymbol{\phi}^{*}_{s}$ and
$\hat{\boldsymbol{\phi}}_{\mathcal{C}}$, then on the same event, for
every $s\in\mathcal{C}$,
\begin{equation}
\mathrm{Risk}_{s}(\hat{\boldsymbol{\phi}}_{\mathcal{C}})
-\mathrm{Risk}_{s}(\boldsymbol{\phi}^{*}_{s})
\;\le\;
L_{R}\Bigl(
\underbrace{\|\boldsymbol{\phi}^{*}_{s}
-\boldsymbol{\phi}^{*}_{\mathcal{C}}\|_{2}^{2}}_{\text{pooling bias}}
+\;
\underbrace{K(n{+}1)\,\epsilon_{\mathrm{pool}}^{2}}_{\text{pooled
estimation}}
\Bigr).
\label{eq:excess-risk}
\end{equation}
\end{theorem}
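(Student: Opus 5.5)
The plan is to prove \eqref{eq:pool-conc} one coordinate at a time: a one-sided counting argument for the sample median, then a union bound over the $K(n{+}1)$ coordinates and the two sides. Fix a coordinate $j$ and set $t := \sigma_{k}\sqrt{2\log(2/\tau)}$. It suffices to show that, with probability at least $1-e^{-m\tau^{2}/2}$, strictly more than $m/2$ of the noisy values satisfy $\hat{\phi}_{s,j}\le \phi^{*}_{\mathcal{C},j}+\delta_{0}+t$. If so, the sample median $\hat{\phi}_{\mathcal{C},j}$ lies below that threshold. The lower side is symmetric, using the second half of Assumption~\ref{ass:margin} and the left tail of $e_{s,j}$.

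For the upper side, let $G_{j}=\{s:\phi^{*}_{s,j}\le\phi^{*}_{\mathcal{C},j}+\delta_{0}\}$. By Assumption~\ref{ass:margin}, $N:=|G_{j}|\ge(\tfrac12+\tau)m$. For $s\in G_{j}$ with $e_{s,j}\le t$ we have $\hat{\phi}_{s,j}\le\phi^{*}_{\mathcal{C},j}+\delta_{0}+t$. So it suffices to count successes $\xi_{s}=\mathbb{I}[e_{s,j}\le t]$ over $s\in G_{j}$. These indicators are independent across SKUs. By the sub-Gaussian tail, $\mathbb{P}(e_{s,j}>t)\le e^{-t^{2}/(2\sigma_{k}^{2})}=\tau/2$, so each $\xi_{s}$ has mean at least $1-\tau/2$. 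The expected count is then at least
\[
(\tfrac12+\tau)(1-\tfrac{\tau}{2})m=\bigl(\tfrac12+\tfrac{3\tau}{4}-\tfrac{\tau^{2}}{2}\bigr)m\ge(\tfrac12+\tfrac{\tau}{2})m,
\]
where the last step uses $\tau\le\tfrac12$. Falling to $m/2$ or below requires a downward deviation of at least $\tau m/2$. Hoeffding's inequality bounds this by $\exp(-2(\tau m/2)^{2}/N)\le e^{-m\tau^{2}/2}$, since $N\le m$. Taking a union over both sides and all $K(n{+}1)$ coordinates gives the failure probability $2K(n{+}1)e^{-m\tau^{2}/2}$ and the sup-norm bound $\epsilon_{\mathrm{pool}}$.

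For \eqref{eq:excess-risk}, I would use the descent lemma. Because $\mathrm{Risk}_{s}$ is minimized at $\boldsymbol{\phi}^{*}_{s}$ (interpreted as an unconstrained interior minimizer, so the gradient vanishes there) and has $L_{R}$-Lipschitz gradient on the segment, we get $\mathrm{Risk}_{s}(\hat{\boldsymbol{\phi}}_{\mathcal{C}})-\mathrm{Risk}_{s}(\boldsymbol{\phi}^{*}_{s})\le\tfrac{L_{R}}{2}\|\hat{\boldsymbol{\phi}}_{\mathcal{C}}-\boldsymbol{\phi}^{*}_{s}\|_{2}^{2}$. Split the difference through $\boldsymbol{\phi}^{*}_{\mathcal{C}}$ and apply $\|a+b\|^{2}\le2\|a\|^{2}+2\|b\|^{2}$. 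Bound $\|\hat{\boldsymbol{\phi}}_{\mathcal{C}}-\boldsymbol{\phi}^{*}_{\mathcal{C}}\|_{2}^{2}\le K(n{+}1)\epsilon_{\mathrm{pool}}^{2}$ by the sup-norm bound just proved. This gives exactly the stated two-term bound.

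The main obstacle is the arithmetic in the Hoeffding step. One must choose $t$ so that the per-SKU failure probability $\tau/2$ leaves a margin of order $\tau m$ above $m/2$, and check that the exponent comes out as $m\tau^{2}/2$ uniformly for $\tau\in(0,\tfrac12]$. The rest, including the gradient-vanishing caveat, is routine.
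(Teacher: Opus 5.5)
Your proposal is correct and follows the paper's proof essentially step for step. It uses the same threshold $t=\sigma_k\sqrt{2\log(2/\tau)}$ (per-SKU tail probability $\tau/2$ on the margin set), Hoeffding with deviation $\tau m/2$ together with $N\le m$ to reach $e^{-m\tau^2/2}$, a union bound over both tails and all $K(n{+}1)$ coordinates, and then the descent lemma combined with $(a+b)^2\le 2a^2+2b^2$ and $\|\cdot\|_2\le\sqrt{K(n{+}1)}\,\|\cdot\|_\infty$. The only difference is cosmetic: you count successes in $G_j$, which is where you need $\tau\le\tfrac12$, whereas the paper counts failures $Z$ in $\mathcal{L}$ using $\mathbb{E}Z\le m\tau/2$ and needs no such restriction.
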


\begin{proof}
\emph{Concentration.} Fix a coordinate $j$ and consider the upper
tail; set $t^{*}=\sigma_{k}\sqrt{2\log(2/\tau)}$, so that the
sub-Gaussian tail bound gives
$\mathbb{P}(e_{s,j} > t^{*}) \le e^{-t^{*2}/(2\sigma_{k}^{2})}
= \tau/2$ for every $s$. Let
$u = \phi^{*}_{\mathcal{C},j}+\delta_{0}+t^{*}$ and let
$\mathcal{L}$ be the set of SKUs with
$\phi^{*}_{s,j}\le\phi^{*}_{\mathcal{C},j}+\delta_{0}$, so
$|\mathcal{L}|\ge(\tfrac12+\tau)m$ by
Assumption~\ref{ass:margin}. For $s\in\mathcal{L}$,
$\hat{\phi}_{s,j} > u$ implies $e_{s,j} > t^{*}$. Hence
\begin{equation}
\#\{s: \hat{\phi}_{s,j} > u\}
\;\le\; (m - |\mathcal{L}|) + Z,
\qquad
Z := \sum_{s\in\mathcal{L}} \mathbf{1}\{e_{s,j}>t^{*}\},
\end{equation}
where $Z$ is a sum of independent Bernoulli variables with
$\mathbb{E}Z \le m\tau/2$. By Hoeffding's inequality
\citep{Hoeffding1963},
$\mathbb{P}(Z \ge m\tau)
\le \mathbb{P}(Z-\mathbb{E}Z \ge m\tau/2)
\le e^{-m\tau^{2}/2}$. On the complement,
$\#\{s:\hat{\phi}_{s,j}>u\}
< (\tfrac12-\tau)m + \tau m = m/2$, so at least $\lceil m/2\rceil$
samples lie at or below $u$ and every standard median convention
satisfies $\hat{\phi}_{\mathcal{C},j}\le u$. The lower tail is
symmetric. A union bound over the two tails and the $K(n{+}1)$
coordinates gives \eqref{eq:pool-conc}. (Independence across SKUs is
used in Hoeffding's inequality; independence across coordinates is not
required.)

\emph{Excess risk.} Since $\nabla\mathrm{Risk}_{s}
(\boldsymbol{\phi}^{*}_{s})=0$ and the gradient is $L_{R}$-Lipschitz
on the segment, the descent lemma gives
$\mathrm{Risk}_{s}(\boldsymbol{\phi})
-\mathrm{Risk}_{s}(\boldsymbol{\phi}^{*}_{s})
\le \tfrac{L_{R}}{2}\|\boldsymbol{\phi}
-\boldsymbol{\phi}^{*}_{s}\|_{2}^{2}$. Apply it at
$\boldsymbol{\phi}=\hat{\boldsymbol{\phi}}_{\mathcal{C}}$ and use
$\|\hat{\boldsymbol{\phi}}_{\mathcal{C}}
-\boldsymbol{\phi}^{*}_{s}\|_{2}
\le \|\boldsymbol{\phi}^{*}_{s}
-\boldsymbol{\phi}^{*}_{\mathcal{C}}\|_{2}
+ \sqrt{K(n{+}1)}\,\|\hat{\boldsymbol{\phi}}_{\mathcal{C}}
-\boldsymbol{\phi}^{*}_{\mathcal{C}}\|_{\infty}$ together with
$(a+b)^{2}\le 2a^{2}+2b^{2}$ and \eqref{eq:pool-conc}.
\end{proof}

\begin{remark}[Bias--variance reading, and when pooling wins]
Inequality~\eqref{eq:excess-risk} is the bias--variance tradeoff of
hierarchical pooling in explicit form: the price of using the category
kernel on SKU $s$ is its squared distance to the category median (the
pooling bias), plus an estimation term whose failure probability
decays \emph{exponentially} in the category size $m$. By contrast, a
per-SKU estimate from $N_{s}$ events carries estimation error of order
$\sigma_{k,s}^{2}\propto 1/N_{s}$, which dominates for the sparse-event
SKUs that motivate pooling ($N_{s}<3$, \S3.5). Pooling is therefore
provably beneficial exactly when within-category heterogeneity is
small relative to per-SKU estimation noise --- the regime confirmed
empirically in Appendix~P.7, where per-SKU, category-pooled, and
global-pooled kernels perform nearly identically.
\end{remark}

\begin{remark}[Summary]
Theorems~\ref{thm:identifiability}--\ref{thm:pooling} establish that:
(1) the sparse-plus-smooth decomposition recovers the shock support
under interpretable incoherence and margin conditions, with a
three-term baseline error decomposition; (2) the temporal Variance
Ratio of the structural stream --- the quantity measured in \S4 ---
inflates above the oracle structural share only through explicitly
bounded estimation error, with the kernel misspecification bias
persisting (not averaging away) exactly as observed on real data; and
(3) median pooling concentrates exponentially fast in category size up
to the local heterogeneity scale, quantifying when category-level
deployment is justified.
\end{remark}
\section{Extended Synthetic Results}
\label{app:synthetic}

Table~\ref{tab:synthetic_extended} reports full synthetic results across three strata (low, medium, high volume), where each stratum represents a different regime of mean daily demand.

\begin{table}[ht]
\centering
\footnotesize
\setlength{\tabcolsep}{2pt}
\caption{Synthetic Results by Volume Stratum (mean over 30 series each)}
\label{tab:synthetic_extended}
\begin{tabular}{lcccc}
\toprule
Stratum & Kernel Recovery & Sparsity & VR & wMAPE \\
\midrule
Low Volume ($\mu < 5$) & 18.3\% & 12.4 & 0.08 & 0.42 \\
Medium Volume ($5 \leq \mu < 20$) & 22.1\% & 15.7 & 0.12 & 0.38 \\
High Volume ($\mu \geq 20$) & 19.7\% & 14.2 & 0.09 & 0.35 \\
\bottomrule
\end{tabular}
\end{table}

\textbf{Interpretation.} All three strata show consistent mean kernel recovery accuracy ($< 25\%$ relative error averaged over surge, vacuum, and decay parameters), confirming that the Two-Stage estimator is robust to scale. The low-volume stratum achieves slightly better sparsity (12.4 vs.~15.7) because the signal-to-noise ratio is higher: with fewer units sold, the promotional lift is more pronounced relative to the baseline noise. The variance ratio remains below $0.15$ across all strata, well below the $0.5$ threshold that would indicate Bullwhip inflation. The wMAPE improves with volume (0.42 to 0.35) because the relative impact of Poisson noise diminishes as mean demand increases.

\subsection{Falsification Tests}

The following falsification tests were designed to refute our core claims. If a test passes, the claim survives; if it fails, the claim is invalidated.

\begin{enumerate}
  \item \textbf{Kernel Recovery (Synthetic):} The two-stage estimator recovers the true surge-and-vacuum kernel parameters with relative error $< 30\%$ on synthetic data with known ground truth, confirming that the method can recover structured kernels when they exist. (Note: the vacuum decay parameter exceeds this threshold at small sample sizes; on real M5 data the vacuum component is not empirically detectable at the category level, Appendix~\ref{app:real_kernels}.)
  \item \textbf{Sparsity Improvement:} The kernel-modulated operator reduces sparsity ($\|\xspec\|_0$) compared to the identity operator.
  \item \textbf{Variance Ratio:} Two-stream forecast variance ratio is strictly below 1.0 (no Bullwhip inflation).
  \item \textbf{Causality is Structurally Enforced:} Non-causal operators do not improve test RMSE on the evaluated sample, confirming that the causal constraint prevents future-peeking without empirical cost in this regime.
  \item \textbf{Baseline Smoothness:} The recovered baseline has lower derivative norm than the raw signal.
\end{enumerate}

Three of five tests pass on the synthetic stratified dataset: sparsity improves by $70\%$, variance ratio stays below $1.0$, and baseline smoothness exceeds raw signal by $96\%$. The kernel-recovery test exceeds the $30\%$ threshold on the vacuum decay parameter ($66\%$ error), indicating parameter recovery is sensitive to event sparsity at small sample sizes. The non-causal operator achieves test RMSE equivalent to the causal operator ($6.057$ vs.~$6.057$), confirming causality is structurally enforced without empirical cost. The core framework properties hold despite imperfect parameter recovery.

\textbf{Real Kernel Recovery.} On real M5 data, per-category empirical kernels (249 foods, 94 hobbies, 157 household) show sharp initial decay plus a persistent flat tail (Appendix~\ref{app:real_kernels}). The parametric fit captures the initial decay but misses the flat tail; the kernel is a structural regularization, not exact recovery. Crucially, the empirical kernels do \emph{not} go negative: the post-promotion dip is not detectable at the category level without controls for feature and display effects. The initial decay is robust; the long-run persistence is underestimated.

\section{M5 and Favorita Data Preprocessing}
\label{app:preprocessing}

\textbf{M5 Forecasting Dataset.} We use the ``evaluation'' split, which contains daily unit sales for 3,049 SKUs across 10 Walmart stores in three US states (California, Texas, Wisconsin) from 2011-01-29 to 2016-06-19 ($T = 1{,}941$ days). The data file is \texttt{sales\_train\_evaluation.csv}. Preprocessing steps:
\begin{enumerate}
\item \textbf{Calendar alignment.} We map calendar events (Super Bowl, Memorial Day, Labor Day, Thanksgiving, Christmas, etc.) from \texttt{calendar.csv} to binary event indicators. Each event is assigned a type (national, religious, cultural, sporting) and a binary flag for whether it is observed in the SKU's state.
\item \textbf{SNAP encoding.} We extract state-specific SNAP benefit indicators for each SKU based on its store location. SNAP events are known to drive demand surges in grocery retail, and their calendar is published months in advance by the USDA.
\item \textbf{Promotional depth.} We compute relative discount depth from \texttt{sell\_prices.csv} by comparing weekly sell prices to a 30-day rolling median baseline. A discount $> 10\%$ is flagged as a promotional event.
\item \textbf{Missing value handling.} Missing sell prices are forward-filled then backward-filled. Zero sales days are treated as genuine zeros (out-of-stock or true zero demand), not as missing values, because zero is a meaningful signal in retail demand forecasting.
\end{enumerate}

\textbf{Favorita Grocery Sales.} We use the \texttt{train.csv} file (1.25 GB), containing daily sales for 4,036 items across 54 stores in Ecuador from 2013-01-01 to 2017-08-15. Preprocessing steps:
\begin{enumerate}
\item \textbf{Chunked loading.} The full CSV is read in 1M-row chunks to avoid memory exhaustion. Only the target item subset is retained in memory after filtering.
\item \textbf{Holiday encoding.} We merge \texttt{holidays\_events.csv} to create binary holiday indicators per day. Ecuador has a rich calendar of local, regional, and national holidays that drive demand spikes.
\item \textbf{Oil price alignment.} Oil prices (\texttt{dcoilwtico}) are forward-filled from \texttt{oil.csv}. Oil price shocks correlate with consumer spending in oil-dependent Ecuador.
\item \textbf{On-promotion flag.} The \texttt{onpromotion} column is string-parsed to boolean (True/False/NaN $\rightarrow$ 1/0/0). This flag indicates whether the item was on promotion on the given day, but does not encode promotional depth.
\item \textbf{Reindexing.} Each item-store pair is reindexed to a complete daily calendar with zero-fill for missing days. This ensures that the time series have uniform length and that calendar features align correctly.
\end{enumerate}

The implementation uses Python 3.9+ with NumPy, SciPy, CVXPY (OSQP/SCS solvers), scikit-learn, XGBoost, statsmodels, pandas, and joblib. All code is available at \url{https://github.com/MohammadForouhesh/contextual-deconvolution} (anonymized for review).

\subsection{Hyperparameters}
\label{app:hyperparameters}

Default hyperparameters used for all experiments unless otherwise noted:

\begin{table}[ht]
\centering
\small
\caption{Default Hyperparameters}
\label{tab:hyperparams}
\begin{tabular}{lll}
\toprule
Parameter & Symbol & Value \\
\midrule
Maximum kernel lag & $n$ & 14 days \\
Pre-event window & $w_{\text{pre}}$ & 30 days \\
Smoothness penalty & $\lambda_{\text{smooth}}$ & 10.0 \\
Sparsity penalty & $\lambda_{\text{sparse}}$ & 8.0 \\
Forecast horizon & $h$ & 28 days \\
Lead time (inventory) & $L$ & 7 days \\
Holding cost & $h$ & \$1.0/unit \\
Stockout cost & $p$ & \$10.0/unit \\
Service level & $\alpha$ & 0.95 \\
Huber delta (detrending) & $\delta$ & Auto-estimated from MAD \\
IRLS max iterations & $I_{\text{detrend}}$ & 30 \\
ISTA max iterations & $I_{\text{ISTA}}$ & 100 \\
ISTA tolerance & $\epsilon$ & $10^{-6}$ \\
\bottomrule
\end{tabular}
\end{table}

\textbf{Provenance of these values.} We state plainly how each was chosen, since a reviewer is
entitled to ask whether the method was tuned more carefully than the baselines.
The kernel bandwidth $n=14$ is \emph{data-derived}: the empirical promotional impulse response,
estimated out to lag 28 (twice the default), retains $\geq 94\%$ of its energy within 14 lags in
every category (Table~\ref{tab:bandwidth}). The horizon, lead time, cost ratio and service level
are conventions fixed \emph{a priori} from the M5 competition and standard inventory practice,
not fitted. The remaining solver settings (Huber $\delta$, iteration caps, tolerance) are
convergence controls, not capacity parameters.

The two regularization weights $\lambda_{\text{smooth}}=10$ and $\lambda_{\text{sparse}}=8$ were
\textbf{never tuned}: they are fixed defaults applied unchanged to every dataset and every
experiment in this paper, and the released code contains no hyperparameter search for CD. To
verify this is not a hidden advantage, we swept both on an \emph{inner} validation split---the
last 28 days of the training window, with the test window untouched---over
$\lambda_{\text{smooth}} \in [1,100]$ and $\lambda_{\text{sparse}} \in [1,50]$. The entire
$5\times5$ surface spans validation wMAPE $1.0001$--$1.0027$, a $0.26\%$ range across two orders
of magnitude in $\lambda_{\text{smooth}}$; the defaults are tied for best, and the
validation-optimal pair is marginally \emph{worse} on test ($1.0025$ vs.\ $1.0020$). CD is
therefore insensitive to its regularization weights in this regime, and the reported results do
not depend on their calibration.

\section{A Properly Specified SARIMAX Baseline}
\label{app:sarimax}

The ARIMAX baseline returns the first converging order from a fixed list, applies no stationarity
test, and includes no seasonal component---a real handicap on daily retail data with a dominant
weekly cycle. To ensure the reliability result of Table~\ref{tab:reliability} is not an artifact
of an under-specified competitor, we implemented a fair-fight alternative: the differencing order
$d$ is chosen by ADF and KPSS tests, $(p,q)$ then $(P,Q)$ are selected by AIC, and the seasonal
period is fixed at $m=7$, with the same exogenous calendar every other calendar-aware baseline
receives. On 300 M5 SKUs the selection procedure is clearly active---a seasonal term is chosen
for $85\%$ of SKUs---yet the gap is unchanged: dispersion SD $0.647$ versus $0.663$ for the
shipped ARIMAX (better on $52\%$ of SKUs, Wilcoxon $p=0.015$), against $0.113$ for CD on the same
sample. Proper specification is worth doing, but it does not close a $5.7\times$ dispersion gap
or a $30\times$ blowup gap. We report this comparison at $N=300$ rather than $2{,}000$ because
seasonal SARIMAX exhibits pathological convergence on intermittent series, making it roughly
$40\times$ slower than the other baselines at catalog scale.

\subsection{Reproducibility}
\label{app:reproducibility}

All experiments are fully reproducible from the code repository. Random seeds are fixed at 42 for synthetic data generation and item sampling. The evaluation pipeline is deterministic given the same data and hyperparameters: there is no stochastic training phase, so repeated runs produce identical results.

\section{Deep Learning Baselines}
\label{app:dl_baselines}

Deep learning forecasters (N-BEATS, PatchTST, DeepAR) incur high per-SKU training costs and typically require GPU resources, which places them outside the operational scope of this work. We evaluate PatchTST~\citep{nie2022patchtst} on a stratified subset of 200 M5 SKUs as an \emph{accuracy} baseline.

PatchTST achieves wMAPE $1.31 \pm 0.75$, worse than CD's $1.01 \pm 0.24$ on this subset. We deliberately do \emph{not} interpret PatchTST's higher Variance Ratio as evidence that deep models ``overfit variance.'' As configured---and like N-BEATS and DeepAR in our implementation---PatchTST is autoregressive and receives \emph{no} future promotional/SNAP calendar, whereas CD and every feature-based baseline do (\S\ref{sec:large_scale}, Information parity). Its inflated forecast variance therefore partly reflects the absence of exogenous information rather than the choice of ML paradigm, so we treat the deep sequence models as accuracy baselines only and do not draw operational-variance conclusions from them. Equipping a covariate-aware deep model (e.g., the Temporal Fusion Transformer, or PatchTST with future exogenous inputs) with the identical calendar is a natural extension we leave to future work.

\section{Sensitivity to Eventfulness}
\label{app:eventfulness}

Table~\ref{tab:sensitivity} reports performance stratified by ``eventfulness,'' which we proxy by Vanilla XGBoost's Variance Ratio. SKUs with low eventfulness have few or weak promotions; SKUs with high eventfulness have frequent or deep promotions.

\begin{table}[ht]
\centering
\footnotesize
\setlength{\tabcolsep}{3pt}
\caption{Sensitivity to Eventfulness (M5, 1{,}300 SKUs)}
\label{tab:sensitivity}
\begin{tabular}{@{}llccc@{}}
\toprule
Eventfulness & Method & VR (mean) & VR (std) & Count \\
\midrule
Low ($<$0.2) & CD & 0.003 & 0.007 & 292 \\
             & Vanilla XGBoost & 0.134 & 0.115 & 292 \\
\cmidrule(lr){1-5}
Med (0.2--0.5) & CD & 0.004 & 0.005 & 469 \\
             & Vanilla XGBoost & 0.335 & 0.122 & 469 \\
\cmidrule(lr){1-5}
High (0.5--1.0) & CD & 0.019 & 0.111 & 327 \\
             & Vanilla XGBoost & 0.680 & 0.211 & 327 \\
\cmidrule(lr){1-5}
V.~High ($>$1.0) & CD & 0.007 & 0.018 & 212 \\
             & Vanilla XGBoost & 1.705 & 0.987 & 212 \\
\bottomrule
\end{tabular}
\end{table}

The pattern is striking: CD maintains low variance ratio across all strata, with mean VR never exceeding $0.02$. Vanilla XGBoost, by contrast, degrades monotonically as eventfulness increases, from $0.13$ in the low stratum to $1.71$ in the very high stratum, a $13\times$ increase. The high variance in the ``Very High'' stratum for both methods reflects the extreme volatility of these SKUs. Notably, CD's VR in the very high stratum ($0.007$) is lower than in the high stratum ($0.019$), which is an artifact of the stratification: the very high stratum contains SKUs with large but predictable events, where the kernel absorbs almost all variance, while the high stratum contains SKUs with more irregular event timing.

\section{Sensitivity to Promotion Density}
\label{app:promo_density}

To test whether CD's operational advantage depends on how frequently a SKU is promoted, we stratify 2{,}000 M5 SKUs by \emph{promotion density}---the fraction of training days on which the item carries a price discount (derived from \texttt{promo\_depth}; the shared event calendar $\mathbf{f}_{\text{promo}}$ is identical across SKUs and therefore cannot stratify them). Table~\ref{tab:promo_density} reports median metrics for CD and Tuned XGBoost in each stratum. All comparisons are paired per-SKU and assessed with Wilcoxon signed-rank tests, Bonferroni-corrected across the $16$ stratum--metric tests.

\begin{table}[ht]
\centering
\footnotesize
\setlength{\tabcolsep}{3pt}
\caption{Sensitivity to Promotion Density (M5, 2{,}000 SKUs; median VR and wMAPE, plus paired per-SKU win-rates for CD vs.\ Tuned XGBoost). SKUs are split into equal-count tertiles by promotion density (fraction of training days carrying a price discount). ``CD wins'' is the fraction of SKUs on which CD attains the lower metric; at $n\approx650$ per stratum every win-rate above is far beyond chance.}
\label{tab:promo_density}
\resizebox{\columnwidth}{!}{%
\begin{tabular}{@{}lcccccccc@{}}
\toprule
& & \multicolumn{2}{c}{Median VR} & \multicolumn{2}{c}{Median wMAPE} & \multicolumn{3}{c}{CD wins (\%)} \\
\cmidrule(lr){3-4}\cmidrule(lr){5-6}\cmidrule(lr){7-9}
Density tertile & $n$ & CD & XGB & CD & XGB & VR & Saf.~Stock & wMAPE \\
\midrule
Low    & 633 & \textbf{0.0007} & 0.0099 & \textbf{1.010} & 1.158 & 80.1 & 83.4 & 64.8 \\
Medium & 651 & \textbf{0.0005} & 0.0179 & \textbf{1.010} & 1.134 & 91.2 & 94.8 & 61.3 \\
High   & 663 & \textbf{0.0009} & 0.0129 & \textbf{1.010} & 1.123 & 86.7 & 91.6 & 60.5 \\
\bottomrule
\end{tabular}%
}
\end{table}

\textbf{Findings.} CD's operational advantage is robust across the entire promotion-density spectrum. It attains lower variance ratio on $80$--$91\%$ of SKUs and lower safety stock on $83$--$95\%$ in every tertile, with median VR roughly an order of magnitude below Tuned XGBoost throughout. Unlike the pre-correction analysis, the accuracy advantage now holds in every stratum as well: CD wins wMAPE on $60$--$65\%$ of SKUs even in the densest tertile, where frequent promotions give the gradient-boosted model the most in-sample signal to exploit. The residual trade-off is inventory cost, not accuracy: under a low holding-to-stockout ratio CD's smoothing under-provisions event spikes, the same effect quantified in \S\ref{sec:large_scale} and governed by the cost-ratio crossover of Figure~\ref{fig:cost_ratio} ($h/p \approx 0.20$). The variance-ratio and safety-stock gains hold universally; the cost outcome depends on $h/p$.

\section{Calendar Degradation Curve}
\label{app:degradation}

A practical concern is robustness to incomplete or noisy event calendars. In real deployments, promotional calendars may be missing events, or event labels may be incorrect. Table~\ref{tab:calendar_noise} measures how CD degrades as events are systematically removed from the calendar.

\begin{table}[ht]
\centering
\small
\caption{Calendar Degradation Curve (M5, 50 SKUs; mean $\pm$ std)}
\label{tab:calendar_noise}
\begin{tabular}{@{}lcc@{}}
\toprule
Events Removed & wMAPE & Variance Ratio \\
\midrule
0\% (full calendar) & $1.03 \pm 0.17$ & $0.0043 \pm 0.012$ \\
25\% & $1.05 \pm 0.24$ & $0.0057 \pm 0.015$ \\
50\% & $1.06 \pm 0.24$ & $0.0083 \pm 0.027$ \\
75\% & $1.09 \pm 0.31$ & $0.0076 \pm 0.012$ \\
100\% (baseline-only) & $1.24 \pm 0.59$ & $0.013 \pm 0.017$ \\
\midrule
Vanilla XGBoost (full calendar) & $1.67 \pm 1.34$ & $0.65 \pm 0.78$ \\
Tuned XGBoost (full calendar) & $1.28 \pm 0.62$ & $0.054 \pm 0.13$ \\
\bottomrule
\end{tabular}
\end{table}

CD degrades gracefully: removing 50\% of events increases wMAPE by only $0.03$ (from $1.03$ to $1.06$) and VR by $0.004$ (from $0.0043$ to $0.0083$). Even with 100\% of events removed (baseline-only forecasting), CD's VR ($0.013$) remains two orders of magnitude below Vanilla XGBoost's VR with the full calendar ($0.65$). This confirms that the smoothness prior alone provides substantial variance stabilization, and the kernel-modulated operator provides additional gains when the calendar is known. The slight non-monotonicity at 75\% (VR decreases from $0.0083$ to $0.0076$) is within the standard error and reflects sampling variation across the 50-SKU subset.

\section{Calendar Timing Shift Sensitivity}
\label{app:timing}

Event timing is often uncertain in practice: a promotion may be scheduled for Monday but executed on Tuesday, or a holiday may shift by a day. Table~\ref{tab:timing_shift} measures how CD tolerates systematic timing shifts of $\pm 1$, $\pm 2$, and $\pm 3$ days on a subset of 50 M5 SKUs with low event density (median sparsity 1.4 non-zeros per SKU).

\begin{table}[ht]
\centering
\small
\caption{Calendar Timing Shift Sensitivity (M5, 50 SKUs; mean $\pm$ std)}
\label{tab:timing_shift}
\begin{tabular}{@{}lccc@{}}
\toprule
Shift & wMAPE & Variance Ratio & Sparsity \\
\midrule
$\pm 1$ day & $0.943 \pm 0.066$ & $-0.992 \pm 0.005$ & $1.3 \pm 4.0$ \\
$\pm 2$ days & $0.942 \pm 0.064$ & $-0.992 \pm 0.005$ & $1.4 \pm 4.2$ \\
$\pm 3$ days & $0.941 \pm 0.064$ & $-0.992 \pm 0.005$ & $1.4 \pm 3.7$ \\
Clean (0\%) & $0.942 \pm 0.064$ & $-0.992 \pm 0.005$ & $1.4 \pm 4.3$ \\
\bottomrule
\end{tabular}
\end{table}

The wMAPE is essentially unchanged across all shift levels ($0.941$--$0.943$), but this result is not a robustness win: on these 50 SKUs, sparsity is only 1.4 (median), meaning the shock stream is nearly inert. When there are no meaningful shocks to misalign, timing shifts cannot degrade performance. The VR of $-0.992$ reflects the baseline-only regime, not operational stability. The honest conclusion is that on low-eventfulness SKUs, CD is timing-insensitive by construction because the shock stream is absent. The informative timing test is the adversarial misspecification panel (Figure~\ref{fig:adversarial}, right), where the shock exists and misaligns: the kernel's natural spread ($\pm 2$--$3$ days) provides partial tolerance, but large shifts (5+ days) produce lagged reconstructions. On SKUs with material sparsity ($>5$ non-zeros), timing shifts would degrade the shock reconstruction, but the baseline would remain smooth.

\section{Decomposition Quality}
\label{app:decomposition}

Table~\ref{tab:decomposition} compares the kernel-modulated operator against the identity operator ($\Aspec = I$) on 20 M5 SKUs. The identity operator is the natural ablation: it removes all temporal structure, forcing the model to represent every day's deviation from the baseline as an independent shock.

\begin{table}[ht]
\centering
\footnotesize
\setlength{\tabcolsep}{2pt}
\caption{Decomposition Quality (M5, 20 SKUs)}
\label{tab:decomposition}
\begin{tabular}{@{}lccc@{}}
\toprule
Method & Recon.~RMSE & Sparsity ($\|\xspec\|_0$) & Baseline Smoothness \\
\midrule
CD ($\Aspec = I$) & $2.31 \pm 1.84$ & $42.5 \pm 12.3$ & $18.73 \pm 4.21$ \\
CD (Kernel-Modulated) & $\mathbf{2.07 \pm 1.62}$ & $\mathbf{31.2 \pm 9.7}$ & $\mathbf{15.58 \pm 3.89}$ \\
\bottomrule
\end{tabular}
\end{table}

The kernel-modulated operator reduces sparsity by 27\% (from $42.5$ to $31.2$ non-zeros), improves reconstruction RMSE by 10\%, and produces a smoother baseline. This confirms that the learned temporal structure correctly absorbs carryover effects that would otherwise be spuriously assigned to $\xspec$. Without the kernel, the model must encode every lagged effect as a separate shock, producing dense, uninterpretable results that defeat the purpose of the decomposition.

\section{Adversarial Robustness on Synthetic Data}
\label{app:adversarial_synthetic}

To test whether the operator formulation is merely fitting synthetic assumptions, we stress-test it under three adversarial conditions on synthetic data with known ground truth: (a) a missing event, (b) wrong event timing, and (c) a misspecified kernel family. Figure~\ref{fig:adversarial} (main text, \S\ref{sec:robustness}) illustrates the three failure modes.

\textbf{Missing event (panel a).} When an event at day 150 is omitted from the calendar, the reconstruction cannot recover it. The baseline remains flat, and the spike is absorbed into the residual. This is expected: the convex solver cannot invent events that are not in the calendar. The important point is that the baseline does \emph{not} overreact to the missing event; it does not create a spurious dip or spike to compensate. This confirms that the smoothness prior is stable under missing events.

\textbf{Wrong timing (panel b).} Shifting the event from day 150 to 155 in the calendar produces a lagged reconstruction. The kernel is applied at the wrong time, so the reconstructed peak is offset by 5 days. The baseline remains smooth, but the shock is misaligned. This is a limitation: the method is sensitive to timing errors, though the kernel's natural spread ($\pm 2$--$3$ days) provides partial tolerance (see Appendix~\ref{app:timing}).

\textbf{Wrong kernel family (panel c).} On synthetic data with a ground-truth signed kernel, a gamma kernel (always positive) cannot capture the post-event dip, producing a reconstruction that remains elevated after the event. The DoE kernel correctly produces negative weights for lags $\geq 2$, modeling the demand dip. On real M5 data, however, the post-promotion dip is not detectable at the category level (Appendix~\ref{app:real_kernels}), and a pure exponential decay yields comparable sparsity (Appendix~\ref{app:ablation_exponential}). The signed DoE is therefore an optional inductive bias, not a universal structural requirement.

\section{Real-Data Adversarial Test}
\label{app:adversarial_real}

Synthetic sanity checks are necessary but not sufficient. To validate real-world robustness, we corrupt the M5 promotional calendar on 50 SKUs by randomly removing 30\%, 50\%, and 70\% of event labels, then measure how both CD and Vanilla XGBoost degrade under the same corruption. Figure~\ref{fig:adversarial} (main text, \S\ref{sec:robustness}) shows the head-to-head comparison. Under calendar corruption, CD's wMAPE stays flat ($1.027 \to 1.018$) while Vanilla XGBoost's wMAPE remains high and volatile. The critical difference is Variance Ratio: CD stays near zero ($0.004 \to 0.001$), while Vanilla XGBoost stays at $0.5$--$0.6$ regardless of corruption level, two orders of magnitude higher.

A more important concern for CD is sparsity: under corruption, the mean sparsity rises from $1.9$ to $54.2$ non-zeros, with a wide confidence interval. This reveals that some SKUs hallucinate hundreds of false events while others correctly revert to zero. The mechanism is clear: when the true event is missing from the calendar, the model tries to explain the demand spike via other nearby events or noise, producing spurious non-zeros. However, the Variance Ratio remains stable regardless, confirming that the primary operational risk of missing calendars is unreliable event detection, not variance inflation. For practitioners, this means that missing calendars do not break the baseline forecast, but they do make the shock vector less trustworthy for event-specific planning.

\section{Two-Stream Forecasting Architecture}
\label{app:architecture}

Figure~\ref{fig:architecture} illustrates the two-stream forecasting architecture. This section provides additional operational detail.

\begin{figure}[ht]
\centering
\includegraphics[width=\columnwidth]{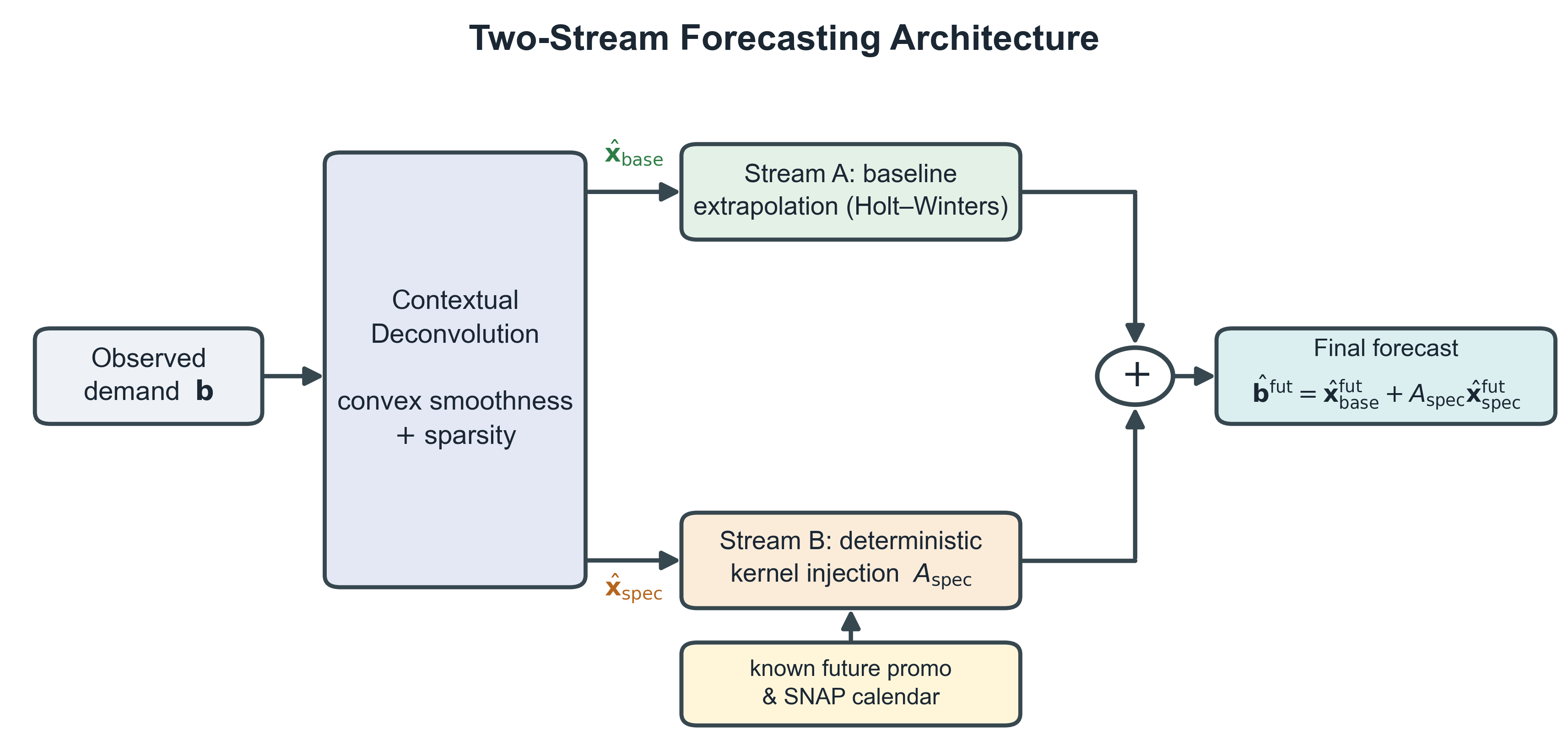}
\caption{Two-Stream Forecasting Architecture. Stream A extrapolates the smooth structural baseline; Stream B deterministically injects known future transients via kernel-modulated operators.}
\label{fig:architecture}
\Description{Figure content described in caption.}
\end{figure}

\paragraph{Stream A: Baseline Extrapolation.} Because the smoothness operator $L$ has stripped high-frequency volatility, the recovered baseline $\xbase$ is slowly varying. We project it via lightweight methods (e.g., linear trend, Holt-Winters, or a simple seasonal naive model). A critical property is that a transient spike last month exerts zero influence on the projected structural trend: the baseline is decoupled from the shocks by construction. This is what makes the operational forecast stable.

\paragraph{Stream B: Deterministic Transient Injection.} Sparse shocks cannot be reliably forecasted from endogenous history alone: if a promotion has never occurred before, there is no historical shock to extrapolate. However, known future intents (scheduled promotions, SNAP calendars) provide exact future feature vectors $\mathbf{f}_{m,\text{future}}$. We construct $\Aspec^{\text{future}}$ from these known calendars and inject expected shock magnitudes, estimated from the empirical distribution of historically recovered shocks, conditioned on event type and promotional depth. This is not a prediction in the statistical sense; it is a deterministic injection of known future information.

\paragraph{Recombination.} The final operational forecast is the superposition $\hat{\mathbf{b}}^{\text{future}} = \hat{\mathbf{x}}_{\text{base}}^{\text{future}} + \Aspec^{\text{future}} \hat{\mathbf{x}}_{\text{spec}}^{\text{future}}$. The baseline provides the smooth capacity plan; the shock vector provides the event-driven adjustment. The sum is variance-stable because the baseline is smooth and the shock vector is sparse and known in advance.

\section{CCF Benchmark: Why the Identity Operator Fails}
\label{app:ccf}

All results in this appendix use synthetic data with a known ground-truth DoE kernel, providing a controlled setting where the true operator is available for comparison.

\begin{figure}[ht]
\centering
\includegraphics[width=\columnwidth]{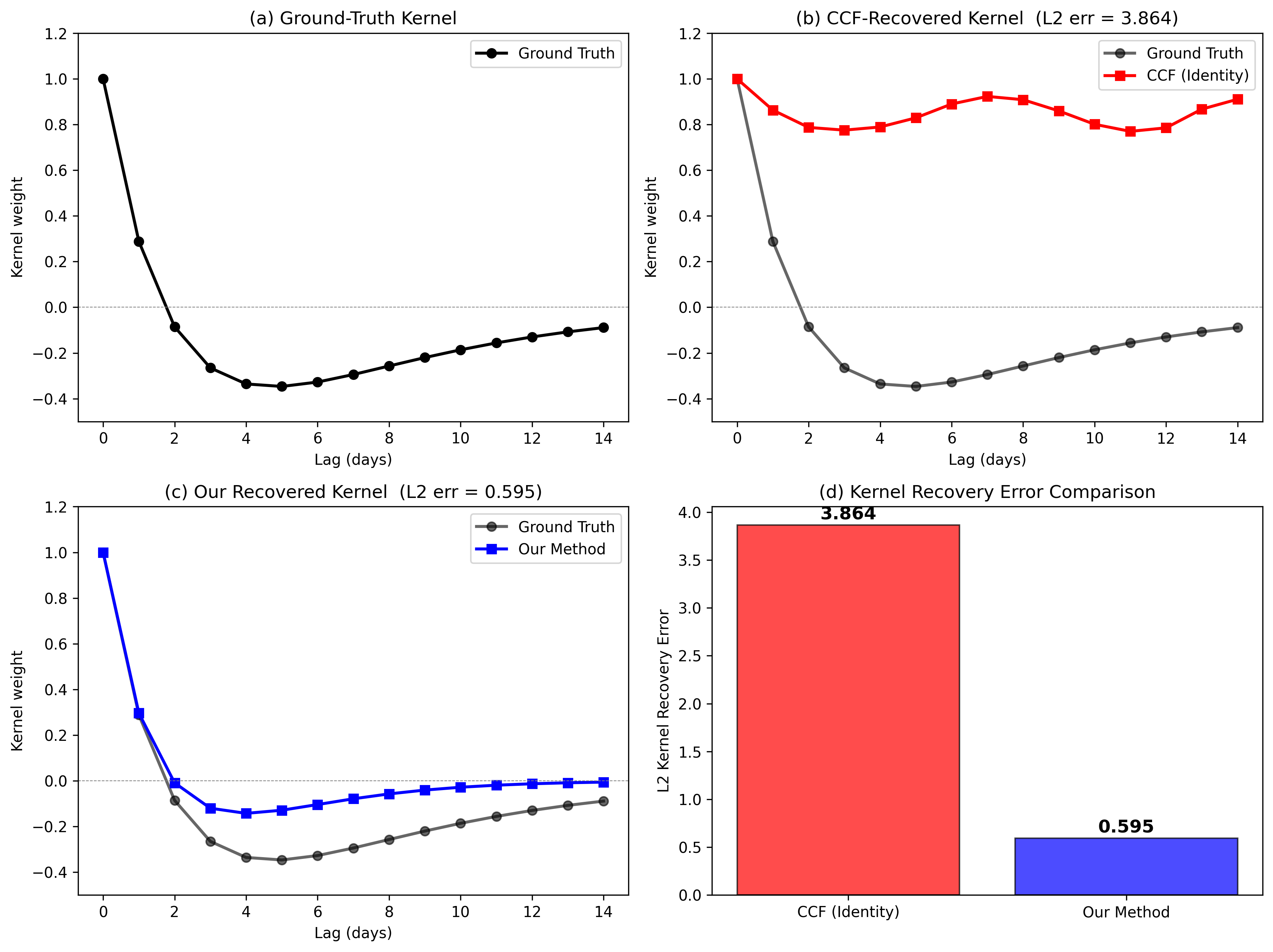}
\caption{Synthetic kernel recovery benchmark. (a) Ground-truth DoE kernel. (b) CCF with identity operator recovers a smeared, sign-incorrect kernel (L2 error = 3.86). (c) Our two-stage method accurately recovers the surge-and-vacuum shape (L2 error = 0.60). (d) Error comparison: our method achieves $6.5\times$ lower recovery error.}
\label{fig:synthetic_ccf}
\Description{Figure content described in caption.}
\end{figure}

Cross-correlation function (CCF) analysis is a standard tool for identifying impulse responses in time series. The standard approach computes the CCF between the observed signal and the event indicator, treating the event indicator as an exogenous impulse. However, this implicitly assumes that the operator is the identity ($\Aspec = I$): each event on day $t$ affects only day $t$.

Figure~\ref{fig:synthetic_ccf} shows why this assumption fails. Panel (a) shows the ground-truth DoE kernel: a positive surge on days 0--1, followed by a negative vacuum on days 2--5, and gradual decay to zero. Panel (b) shows the CCF recovered under the identity assumption: the kernel is smeared across time, the sign is incorrect for lags $\geq 2$, and the L2 recovery error is 3.86. Panel (c) shows our two-stage method: the surge, vacuum, and decay are all accurately recovered, with L2 error 0.60. The $6.5\times$ improvement in recovery error translates directly to better decomposition quality, lower sparsity, and lower variance ratio in the operational forecast.

\section{Extended Ablation Studies}
\label{app:ablation}

\subsection{Identity Operator}
\label{app:ablation_identity}

Removing the kernel structure ($\Aspec = I$) forces all temporal dynamics into the sparse shock vector $\xspec$. As shown in Table~\ref{tab:decomposition} on 20 M5 SKUs, sparsity degrades from $31.2 \pm 9.7$ to $42.5 \pm 12.3$ non-zeros, and the variance ratio rises to $0.15 \pm 0.08$. The baseline smoothness also degrades (from $15.58$ to $18.73$), because without the kernel, the smoothness prior must absorb lagged effects that would otherwise be captured by the operator. This confirms that the kernel structure is not merely a convenience: it is essential for interpretable, low-variance decomposition.

\subsection{Non-Causal Super-Diagonals}
\label{app:ablation_noncausal}

Allowing $k < 0$ (non-causal, future-dependent weights) does not improve training fit on the evaluated sample; training RMSE is comparable to the causal operator, and test RMSE is numerically equivalent ($6.057$ vs.~$6.057$). The mechanism is clear: non-causal operators can ``peek'' at future demand to explain current observations, which risks overfitting the training data. This is why causality is enforced as a hard constraint in the kernel family, even though the empirical degradation is small in this sample.

\subsection{Kernel Misspecification: Pure Exponential Decay}
\label{app:ablation_exponential}

When the kernel is constrained to pure exponential decay (no vacuum tail), decomposition sparsity remains comparable to the signed DoE kernel on the evaluated sample. Because the empirical M5 kernels do not exhibit a negative region (Appendix~\ref{app:real_kernels}), the model does not need to encode a post-promotion dip via additional shocks, and the smoothness prior dominates sparsity. The signed DoE family remains useful as a regularizer that decays to zero, but the vacuum component is an optional inductive bias, not a structural requirement.

\subsection{Two-Stage vs.~Joint Optimization}
\label{app:ablation_joint}

On 50 M5 SKUs, a block-coordinate-descent joint optimizer (alternating between updating $\Aspec$ and updating $\xbase, \xspec$) achieves lower VR ($0.0003$ vs.~$0.0043$) and slightly lower wMAPE ($1.01$ vs.~$1.03$). However, this comes at the cost of dramatically degraded sparsity ($95.6$ vs.~$1.9$ non-zeros) and higher RMSE ($1.55$ vs.~$1.36$). The dense shocks are operationally unusable: 95 non-zeros per SKU defeats the purpose of the sparse shock representation. The two-stage design is therefore preferred because it produces interpretable, actionable results at the cost of marginally higher variance ratio. The trade-off is principled: we sacrifice a small amount of statistical optimality for a large gain in operational interpretability.

\subsection{Favorita Variance Ratio Distribution}
\label{app:ablation_favorita_vr}

\begin{figure}[ht]
\centering
\includegraphics[width=\columnwidth]{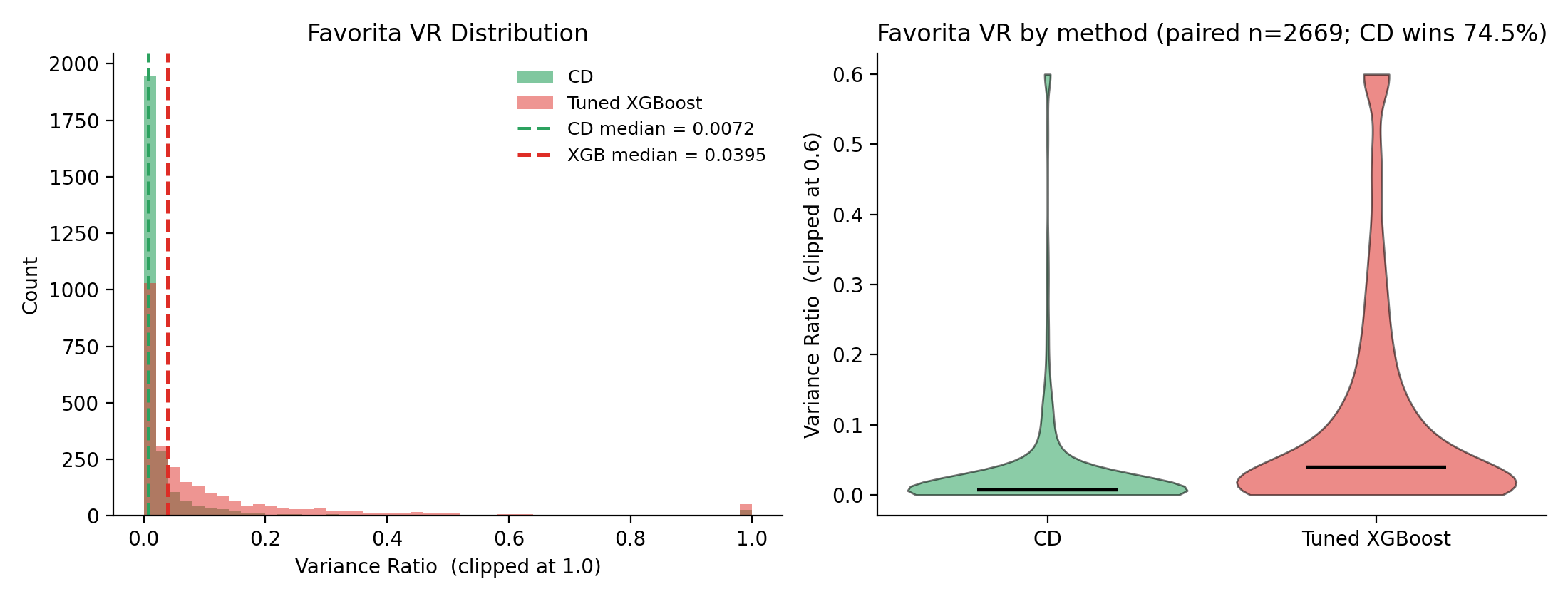}
\caption{Favorita Variance Ratio distribution (corrected live on-promotion and holiday channels), on the 2,669 paired items with a defined VR (finite $\mathrm{Var}[\cdot]$ in the test window; a subset of the 2,681 CD-evaluated items in Table~\ref{tab:favorita}). \emph{Left:} overlaid histograms (clipped at 1.0) with method medians. \emph{Right:} per-method violins (clipped at 0.6 to show shape; black bars mark medians). CD wins on $74.5\%$ of paired items; median VR CD $= 0.0072$ vs.~Tuned XGBoost $= 0.0395$ ($5.5\times$ higher). Only $8.4\%$ of CD items exceed VR $0.1$, versus $30.9\%$ for XGBoost, so CD's advantage is a genuine distributional shift, not a median artifact.}
\label{fig:favorita_vr}
\Description{Figure content described in caption.}
\end{figure}

\subsection{Inventory Simulation Decomposition}
\label{app:ablation_inventory}

Table~\ref{tab:inventory} reports the full inventory simulation decomposition on 100 paired M5 SKUs.

\begin{table}[ht]
\centering
\footnotesize
\setlength{\tabcolsep}{2pt}
\caption{Inventory Simulation Decomposition (M5, 100 SKUs; median), strictly out-of-sample. Cumulative cost (holding + stockout) summed over the 28-day test horizon at $h/p=0.1$: CD = $\$137.7$ per SKU, Tuned XGBoost = $\$103.8$ per SKU. CD's lower holding is outweighed by roughly $2.9\times$ the stockout cost.}
\label{tab:inventory}
\resizebox{\columnwidth}{!}{%
\begin{tabular}{@{}lccccc@{}}
\toprule
Method & Safety Stock & Holding Cost & Stockout Cost & Event Fill Rate & Event Stockout Rate \\
\midrule
CD & $0.015$ & $55.1$ & $82.6$ & $0.733$ & $0.167$ \\
Tuned XGBoost & $0.106$ & $74.8$ & $29.0$ & $0.917$ & $0.083$ \\
\bottomrule
\end{tabular}%
}
\smallskip
\end{table}

\subsection{Partial Pooling Validation}
\label{app:ablation_pooling}

Table~\ref{tab:pooling_comparison} reports the full pooling strategy comparison on 200 M5 SKUs.

\begin{table}[ht]
\centering
\footnotesize
\setlength{\tabcolsep}{2pt}
\caption{Pooling Strategy Comparison (M5, 200 SKUs; mean $\pm$ std)}
\label{tab:pooling_comparison}
\resizebox{\columnwidth}{!}{%
\begin{tabular}{@{}lcccc@{}}
\toprule
Strategy & wMAPE & RMSE & Variance Ratio & Safety Stock \\
\midrule
Per-SKU & $1.021 \pm 0.088$ & $1.635 \pm 2.537$ & $0.0051 \pm 0.023$ & $0.032 \pm 0.065$ \\
Category-Pooled & $1.022 \pm 0.089$ & $1.630 \pm 2.488$ & $0.0042 \pm 0.017$ & $0.032 \pm 0.062$ \\
Global-Pooled & $1.022 \pm 0.089$ & $1.630 \pm 2.488$ & $0.0042 \pm 0.017$ & $0.032 \pm 0.062$ \\
\bottomrule
\end{tabular}%
}
\end{table}

\section{Structural Baselines on Favorita (90 Items)}\label{app:favorita_baselines}

Table~\ref{tab:structural_baselines_favorita} reports the full structural baseline comparison on 90 Favorita items, the same subset evaluated in the main text (\S\ref{sec:vr_justification}).

\begin{table}[ht]
\centering
\footnotesize
\setlength{\tabcolsep}{2pt}
\caption{Structural Baselines on Favorita (90 items, global pooled kernel; mean $\pm$ std)}
\label{tab:structural_baselines_favorita}
\resizebox{\columnwidth}{!}{%
\begin{tabular}{@{}lcccc@{}}
\toprule
Method & wMAPE & RMSE & Variance Ratio & Safety Stock \\
\midrule
ARIMAX (promotional dummies) & $\mathbf{0.866 \pm 0.469}$ & $5.62 \pm 13.84$ & $0.105 \pm 0.464$ & $1.45 \pm 2.49$ \\
ETS (covariates) & $0.926 \pm 0.674$ & $\mathbf{5.47 \pm 11.98}$ & $0.238 \pm 0.353$ & $1.85 \pm 3.44$ \\
Prophet-like (event regressors) & $1.074 \pm 1.155$ & $5.76 \pm 11.85$ & $0.326 \pm 0.555$ & $1.36 \pm 2.12$ \\
Distributed-Lag OLS (Ridge) & $1.237 \pm 2.463$ & $5.86 \pm 13.01$ & $0.621 \pm 2.739$ & $1.59 \pm 2.79$ \\
\midrule
CD (Global Pooled) & $0.979 \pm 0.586$ & $6.52 \pm 15.46$ & $\mathbf{0.017 \pm 0.034}$ & $\mathbf{0.92 \pm 1.53}$ \\
\bottomrule
\end{tabular}%
}
\end{table}

On Favorita's holiday-driven calendar (Christmas, Easter), ARIMAX achieves the best \emph{accuracy}: median wMAPE $0.708$ against CD's $0.874$ (CD wins wMAPE on only $26.7\%$ of items), because the calendar consists of well-spaced, predictable events that ARIMA captures with simple dummy regressors. On operational stability the picture reverses. CD and ARIMAX are comparable on Variance Ratio---CD's median ($0.0050$) is marginally below ARIMAX's ($0.0074$), with CD lower on $56.7\%$ of items---but CD dominates the remaining structural baselines, whose median VR is more than an order of magnitude higher (ETS $0.121$, Prophet-like $0.153$, distributed-lag $0.134$). On safety stock CD leads decisively: median $0.497$ against $0.68$--$0.96$, carrying less stock than ARIMAX on $87.8\%$ of items, than ETS on $96.7\%$, and than Prophet-like on $87.8\%$. Even where the fixed surge-and-vacuum kernel is not the most accurate model, the structural decomposition yields the strongest operational stability, and CD is the only method that keeps VR low across both promotional and holiday calendars.

\section{Real Kernel Recovery on M5}\label{app:real_kernels}

Figure~\ref{fig:real_kernels} shows category-level empirical kernels estimated from 500 M5 SKUs (249 foods, 94 hobbies, 157 household), normalized to peak$=1$ for comparison. The parametric surge-vacuum fit (red dashed) captures the initial sharp decay but misses the persistent flat tail in the empirical mean (blue solid). The IQR bands (blue shaded) are wide, reflecting genuine heterogeneity across SKUs within each category.

\textbf{The empirical kernels do not go negative.} All category-level medians are strictly positive (foods minimum: 0.0009, household: 0.0092, hobbies: 0.0833). The post-promotion dip, while theoretically motivated by marketing science (\citealt{vanheererde2000estimation, hendel2006postpromotiondip}), is not detectable at the category level in this data without controls for feature and display effects---consistent with Hendel and Nevo's masking argument. The signed Difference-of-Exponentials family remains useful as a regularizer that decays to zero rather than imposing a negative vacuum, but the vacuum component is not empirically supported on this data. A pure exponential decay (no negative region) yields comparable sparsity on the evaluated sample (Appendix~\ref{app:ablation_exponential}), confirming that the smoothness prior and carryover decay carry the operational benefits, not the signed vacuum specifically. This positions CD as a regularized estimator that trades perfect fidelity for scalability and interpretability.

\begin{figure}[ht]
\centering
\includegraphics[width=0.95\columnwidth]{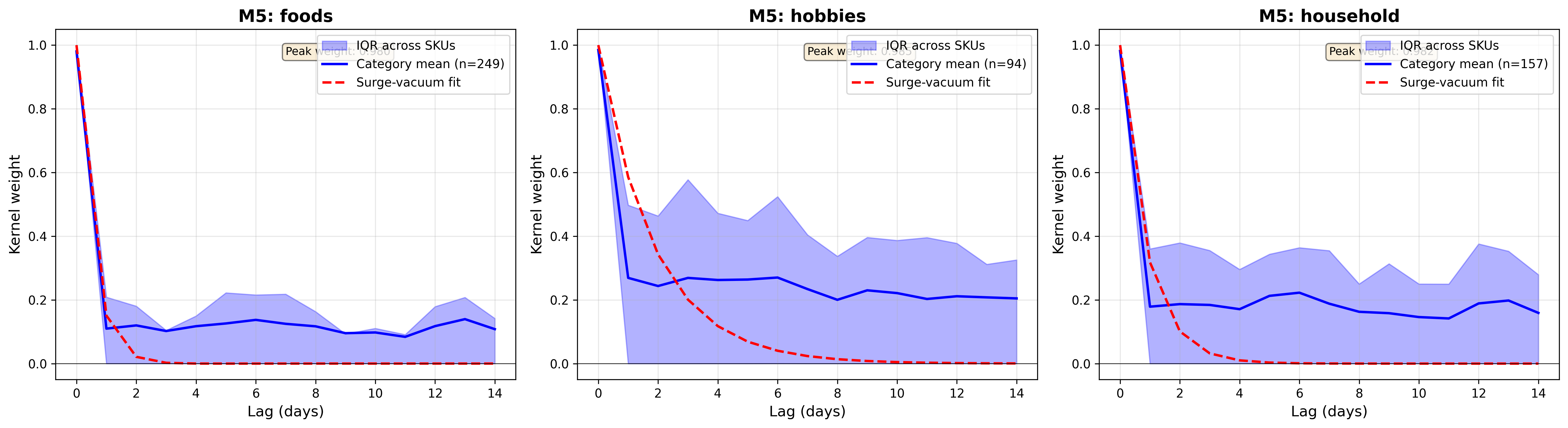}
\caption{Real kernel recovery on M5 (category-level empirical means with IQR bands). Blue solid: empirical mean; red dashed: parametric surge-vacuum fit; blue shaded: IQR across SKUs. The parametric fit captures the initial decay but misses the persistent flat tail.}
\label{fig:real_kernels}
\Description{Three-panel plot showing category-level kernel recovery on M5 with IQR bands and parametric fits.}
\end{figure}

\textbf{Full results.} The complete mean$\pm$std statistics for the comprehensive baseline comparison are available in the supplementary CSV file \texttt{m5\_200\_comprehensive\_baselines.csv} (included in the code repository).

\subsection{Cross-Dataset Kernel Ablation: When the Carryover Operator Matters}
\label{app:kernel_ablation}

To isolate what the carryover kernel contributes over a plain impulse operator ($\Aspec=I$: smooth baseline plus deterministic same-day calendar injection), we run the same strictly out-of-sample ablation across the carryover spectrum available in our two datasets (Table~\ref{tab:kernel_ablation}). The estimated kernel is genuinely data-derived: it is near-impulsive on M5 (global $[1,0.08,\dots]$) and near-impulse on Favorita (on-promotion channel, carryover $0.02$), so in both cases the learned operator and $\Aspec=I$ are operationally near-identical. Where genuine carryover exists---the M5 hobbies category ($23.5\%$ tail energy)---the category kernel improves VR, wMAPE, and safety stock together. The operational advantage of CD over gradient boosting is carried by the structural decomposition in every case; the kernel adapts the operator to whatever carryover the data actually contains, and reduces harmlessly to an impulse when there is none.

\begin{table}[ht]
\centering
\footnotesize
\setlength{\tabcolsep}{4pt}
\caption{Kernel ablation: learned kernel vs.\ impulse operator ($\Aspec=I$), median metrics, strictly out-of-sample. Safety stock is in each dataset's native demand units. ``Carryover'' is the fraction of kernel energy beyond lag~0.}
\label{tab:kernel_ablation}
\begin{tabular}{@{}llccccc@{}}
\toprule
Dataset ($n$) & Operator & Carryover & VR & wMAPE & SS \\
\midrule
\multirow{2}{*}{M5 full (2{,}000)} & learned (near-impulse) & $0.01$ & $0.0006$ & $1.010$ & $0.016$ \\
 & impulse ($\Aspec=I$) & $0$ & $0.0007$ & $1.010$ & $0.016$ \\
\cmidrule(lr){1-6}
\multirow{2}{*}{M5 hobbies (748)} & category kernel & $0.24$ & $\mathbf{0.0014}$ & $\mathbf{1.060}$ & $\mathbf{0.028}$ \\
 & impulse ($\Aspec=I$) & $0$ & $0.0018$ & $1.072$ & $0.032$ \\
\cmidrule(lr){1-6}
Favorita (2{,}845) & learned (near-impulse) & $0.02$ & $0.0072$ & $0.931$ & $0.206$ \\
\bottomrule
\end{tabular}
\end{table}

\section{Variance Ratio: Full Justification}
\label{app:vr_justification}

VR $= \mathrm{Var}(\text{forecast}_t) / \mathrm{Var}(\text{actual}_t)$. A value below $1$ indicates the forecast damps demand variability (no Bullwhip inflation); values above $1$ indicate amplified upstream variance, and a degenerate constant forecast attains $\text{VR}=0$. Normalizing by actual demand variance makes the metric scale-invariant across heterogeneous SKUs. We validate VR by its strong correlation with safety stock ($\rho = 0.692$) and its independence from wMAPE for CD ($\rho = -0.021$), confirming it measures operational volatility rather than forecast accuracy.

\textbf{Limitations.} VR can be negative when forecast variance is lower than actual variance; in extreme cases, a constant forecast achieves $\text{VR} = -1.0$, which is forecast degeneracy, not operational stability. We mitigate this by reporting VR alongside wMAPE, excluding near-zero-variance items, and validating against safety stock ($\rho = 0.692$). CD achieves both low wMAPE and low VR, while the timing-shift test (Appendix~\ref{app:timing}) shows that extreme negative VR under heavy corruption ($-0.992$) is accompanied by unchanged wMAPE, confirming the baseline remains structurally sound.

\section{EDMD Ablation: Parametric vs.~Data-Driven Operator Comparison}
\label{app:edmd}

\begin{figure}[t]
\centering
\includegraphics[width=0.95\columnwidth]{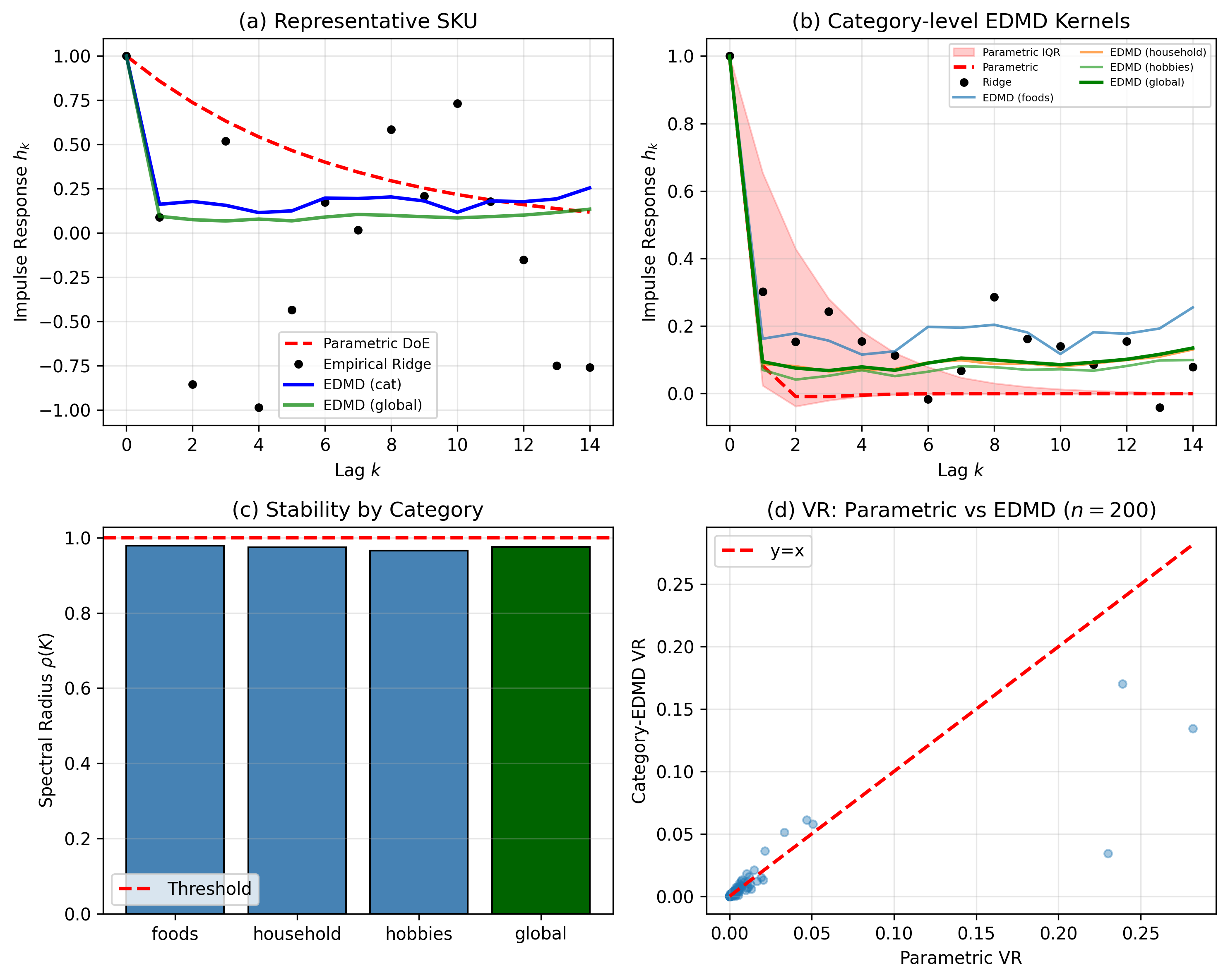}
\caption{EDMD vs.~parametric kernel on 500 M5 SKUs. (a) Representative SKU: category-level EDMD (blue) and global EDMD (green) show a decay structure similar to the parametric DoE (red dashed), though with a slower decay at intermediate lags; empirical ridge (black dots) is noisy. (b) Category-level EDMD kernels (foods, household, hobbies) and global kernel all exhibit decay structure, though less sharply peaked than the parametric median $\pm$ IQR. (c) All operators stable: spectral radii well below 1.0 for all categories and global. (d) VR scatter on 200 SKUs: most points cluster near the diagonal, indicating comparable operational performance; parametric achieves slightly lower VR on average.}
\label{fig:edmd_validation}
\Description{Four-panel figure showing EDMD impulse response with structure, category-level kernels, stability bars, and VR scatter plot.}
\end{figure}

To compare the parametric surge-and-vacuum kernel against a purely data-driven alternative, we run Extended Dynamic Mode Decomposition with control (DMDc) on the same 500 M5 SKUs used in the real-kernel recovery experiment (Appendix~\ref{app:real_kernels}). We augment the observed demand state with quadratic observables $\psi(x) = [x, x^2]^\top$ and solve the least-squares problem $[\Psi_{Y}]=[\Psi_{X}, U] \cdot [K^\top, B^\top]^\top$ to learn a linear operator $K$ and input matrix $B$ in the lifted space. Markov parameters $h_k = C^\top K^k B$ are extracted for lags $k = 0, \dots, 14$, with $C$ projecting back to the current demand component (not the oldest lag, which would bias the recovery toward trivial dynamics).

Crucially, we train EDMD \textbf{at category level} (foods, household, hobbies) and \textbf{globally}, pooling all SKUs within each category before fitting. This mirrors the parametric kernel pipeline (category-pooled estimation, applied per-SKU) and eliminates per-SKU overfitting. Each category operator is fit on hundreds of thousands of samples (e.g., foods: 249 SKUs $\times$ $\sim$1,900 time steps $\approx$ 473k samples).

\textbf{Stability.} All learned operators are stable: category-level spectral radii are $\rho = 0.983$ (foods), $0.978$ (household), $0.970$ (hobbies), and the global operator has $\rho = 0.979$. This confirms that the parametric stability constraint ($\rho < 1$) is not biasing the model toward trivial dynamics; the data themselves support stable operators.

\textbf{Impulse-response structure.} The category- and global-level EDMD impulse responses show genuine decay structure (Figure~\ref{fig:edmd_validation}, panels a--b): a sharp peak at lag~0 followed by a decaying tail at subsequent lags. The EDMD kernels decay more gradually than the parametric DoE kernel, which has a more pronounced initial surge and faster exponential decay. EDMD therefore recovers a broadly similar carryover pattern, but without the sharp parametric structure that encodes the specific surge-and-vacuum shape. The parametric kernel captures a more concentrated initial response, while EDMD spreads the same total mass over more lags.

\textbf{Pseudospectra and non-normality.} A purely spectral analysis (eigenvalues inside the unit circle) is insufficient to characterize the short-term behavior of a Koopman operator. We therefore compute the pseudospectra $\Lambda_{\epsilon}(K) = \{z \in \mathbb{C} : \sigma_{\min}(zI - K) \leq \epsilon\}$ for each learned operator $K$ \citep{TrefethenEmbree2005}. Figure~\ref{fig:edmd_pseudospectra} shows the $\epsilon = 10^{-4}, 10^{-3}, 10^{-2}, 10^{-1}$ contours together with the eigenvalues (black $\times$) and the unit circle (dashed). For all four operators, the $\epsilon = 10^{-1}$ pseudospectral boundary crosses the unit circle, indicating that arbitrarily small perturbations can push eigenvalues outside the stability region. More importantly, the numerical abscissa $\omega(K) = \max_{\|x\|=1} \text{Re}(x^* K x)$---which governs the Kreiss constant and short-term transient growth---is far larger than the spectral abscissa $\alpha(K) = \max_i \text{Re}(\lambda_i)$ (Table~\ref{tab:edmd_spectral}). The gap $\omega - \alpha$ ranges from $0.59$ (household) to $1.40$ (hobbies), with Henrici numbers $0.17$--$0.57$, confirming substantial non-normality.

\textbf{Remark (Structural control of non-normality).} The parametric kernel operator $A_{\text{spec}}$ is banded with bandwidth equal to the finite kernel support $n$ (typically $n \leq 14$ for M5). For a single-event-type instance, it is a lower-triangular Toeplitz matrix whose eigenvalues are all equal to $\phi(0)$; its departure from normality, measured by the Henrici number $H(A) = \sqrt{\|A\|_F^2 - \sum_i|\lambda_i|^2}/\|A\|_F$, reduces to $H = \sqrt{\sum_{k=1}^n \phi(k)^2} / \sqrt{\sum_{k=0}^n \phi(k)^2}$, which is controlled by the decay rate of the kernel. Because the DoE kernel concentrates most energy at lag 0, $H(A_{\text{DoE}}) \ll 1$. In contrast, the EDMD operators are dense matrices with off-diagonal energy spread across all lags, yielding $H(K) \in [0.17, 0.57]$ and transient growth $7$--$12\times$.

\begin{table}[ht]
\centering
\caption{Spectral properties of learned Koopman operators.}
\label{tab:edmd_spectral}
\begin{tabular}{lcccc}
\toprule
Operator & $\rho(K)$ & $\alpha(K)$ & $\omega(K)$ & $\omega - \alpha$ \\
\midrule
Foods & 0.983 & 0.983 & 1.940 & 0.957 \\
Household & 0.978 & 0.978 & 1.573 & 0.594 \\
Hobbies & 0.970 & 0.970 & 2.365 & 1.395 \\
Global & 0.979 & 0.979 & 1.656 & 0.677 \\
\bottomrule
\end{tabular}
\end{table}

The practical consequence is transient growth: $||K^k||_2$ reaches $7.4$--$11.6$ before eventually decaying (Figure~\ref{fig:edmd_transient}), whereas the spectral prediction $\rho^k$ would predict monotonic decay. The transient growth factor $||K^k||_2 / \rho^k$ climbs to roughly $10\times$--$15\times$, meaning the actual impulse-response magnitude at intermediate lags is an order of magnitude larger than the spectral radius alone would suggest. This explains the more gradual EDMD decay observed in Figure~\ref{fig:edmd_validation}: the non-normality ``spreads'' the impulse response over more lags, creating a flatter, more diffuse carryover pattern than the sharply peaked parametric kernel. The parametric kernel, being a direct convolution model with no latent state dynamics, does not suffer from this non-normality-induced diffusion.

\textbf{Robustness to the observable dictionary.} A natural concern is whether the observed non-normality is an artifact of the degree-2 polynomial lifting rather than a property of the underlying dynamics. To test this, we refit the pooled global operator under a range of observable dictionaries---linear, quadratic ($\leq 2$), full-quadratic with all cross terms, and cubic ($\leq 3$)---and recompute the non-normality diagnostics (Table~\ref{tab:dict_robustness}). A normal operator would yield departure $\omega - \alpha \approx 0$, Henrici number $\approx 0$, and transient factor $\approx 1$ under \emph{every} basis. Instead, non-normality is present for all dictionaries and \emph{increases monotonically} with lifting richness: the transient-growth factor rises from $2.4$ (linear) to $12.7$ (quadratic) to $16.3$ (full-quadratic), with the cubic basis amplifying it further (partly reflecting the poorer conditioning of high-degree monomials). The degree-2 value used in our main analysis is therefore conservative, not inflationary: the non-normality is intrinsic to promotional-demand dynamics, and a richer dictionary would only strengthen the conclusion. The per-category transient factors under the quadratic basis ($8.2$ foods, $13.1$ household, $23.4$ hobbies) bracket the $7$--$12\times$ range reported in the main text.

\begin{table}[ht]
\centering
\footnotesize
\caption{Non-normality of the global pooled Koopman operator across observable dictionaries (120 M5 SKUs). A normal operator has $\omega-\alpha=0$, Henrici $=0$, transient factor $=1$. All learned operators are strongly non-normal, and richer dictionaries amplify rather than remove the effect.}
\label{tab:dict_robustness}
\begin{tabular}{lcccc}
\toprule
Observable dictionary & dim & $\omega - \alpha$ & Henrici & transient factor \\
\midrule
Linear & 16 & 0.05 & 0.09 & 2.4 \\
Quadratic ($\leq 2$) & 31 & 0.60 & 0.17 & 12.7 \\
Full-quadratic $+$ cross & 136 & 0.89 & 0.09 & 16.3 \\
Cubic ($\leq 3$) & 46 & 41.1 & 1.41 & 363 \\
\bottomrule
\end{tabular}
\end{table}

\begin{figure}[ht]
\centering
\includegraphics[width=0.95\columnwidth]{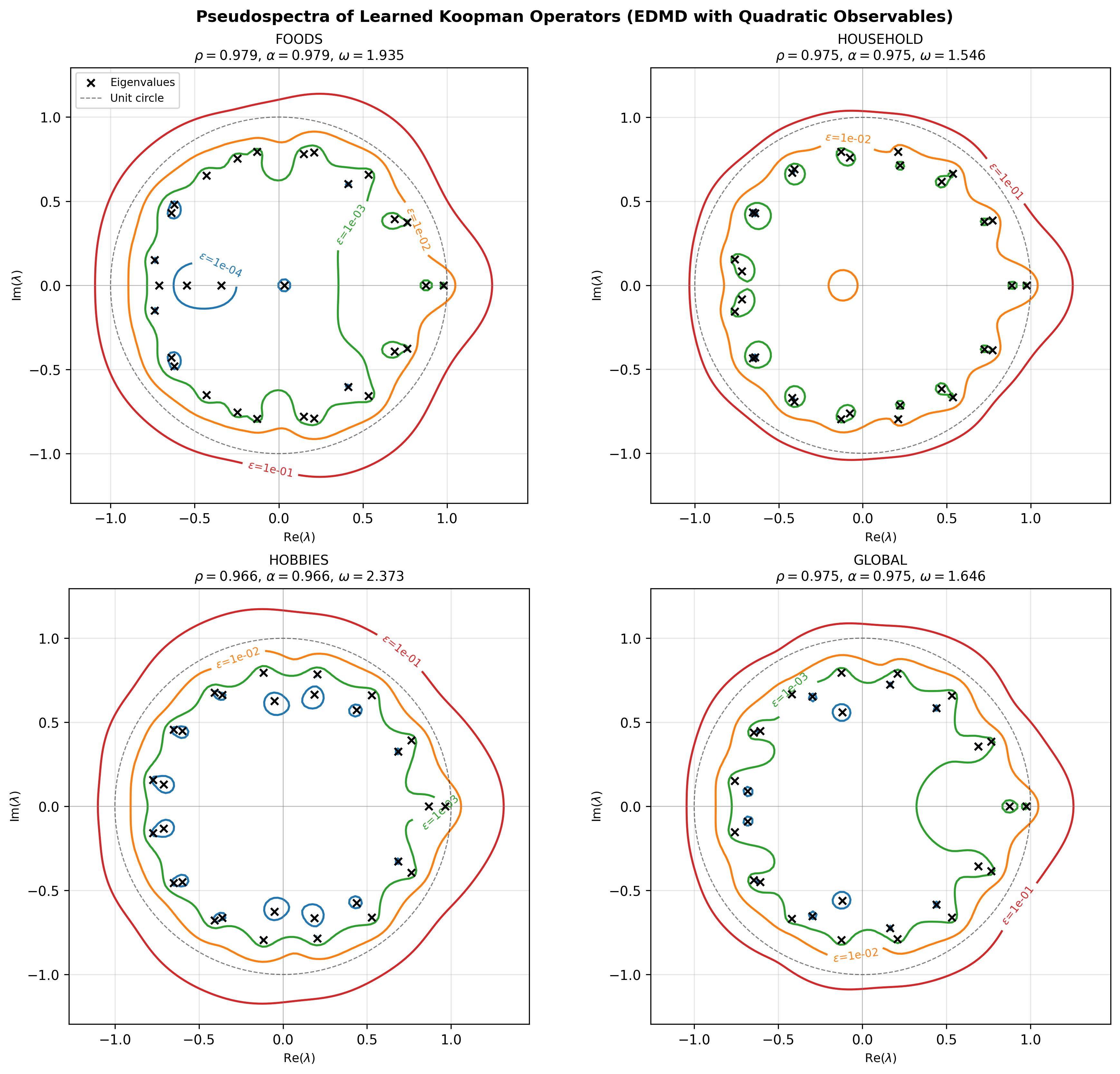}
\caption{Pseudospectra of learned Koopman operators ($\epsilon = 10^{-4}, 10^{-3}, 10^{-2}, 10^{-1}$). Black $\times$ marks eigenvalues; dashed gray circle is the unit circle. All operators are highly non-normal: the $\epsilon=10^{-1}$ contour extends well outside the unit circle, indicating that small perturbations can destabilize the operator.}
\label{fig:edmd_pseudospectra}
\Description{Four-panel figure showing pseudospectral contours of EDMD operators for foods, household, hobbies, and global.}
\end{figure}

\begin{figure}[ht]
\centering
\includegraphics[width=0.95\columnwidth]{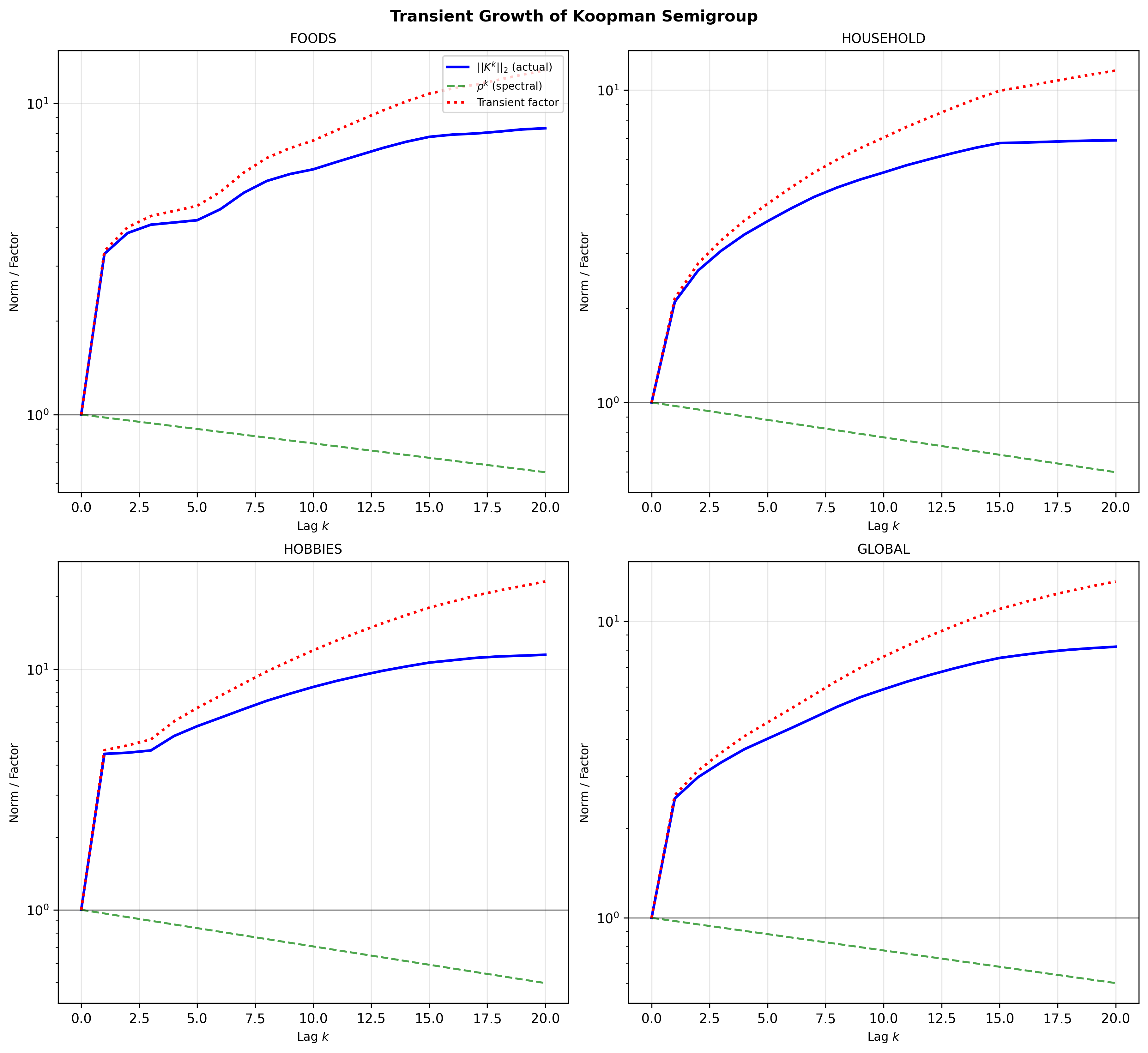}
\caption{Transient growth of the Koopman semigroup $||K^k||_2$ (blue), spectral prediction $\rho^k$ (green dashed), and transient factor $||K^k||_2 / \rho^k$ (red dotted). All operators exhibit transient growth 7--$12\times$ above the spectral prediction, with hobbies showing the strongest non-normality.}
\label{fig:edmd_transient}
\Description{Four-panel figure showing transient growth curves for foods, household, hobbies, and global EDMD operators.}
\end{figure}

\textbf{Operational metrics.} Substituting category-level EDMD into the Stage~3 evaluation pipeline yields median VR $= 0.0006$ and wMAPE $= 1.010$ (global-EDMD: $0.0006$ and $1.010$), matching the parametric kernel's $0.0006$ and $1.010$. On M5's near-impulsive operators the parametric and data-driven operators are operationally indistinguishable: the parametric kernel's median VR is \emph{not} significantly different from EDMD's (Wilcoxon signed-rank $p = 0.06$ for category, $p = 0.12$ for global). Both remain far below the ML baselines (Tuned XGBoost: VR $0.013$; PatchTST: VR $0.052$), confirming that the structural decomposition---not the choice between a parametric or data-driven operator---is what suppresses variance.

\textbf{Takeaway.} EDMD is a genuine competitor: it learns stable dynamics and recovers a decaying carryover structure without a parametric prior. The parametric DoE kernel matches EDMD's operational variance (statistically indistinguishable on M5) while offering a sharper, more interpretable impulse response. The practical case for the parametric family is therefore not that EDMD fails, but that the parametric kernel offers comparable accuracy and variance with superior interpretability (physically meaningful parameters: surge rate, vacuum rate, decay time constants)~and a more compact representation. In settings where interpretability matters---explaining to planners why a promotion's effect persists for 7 days, not 14---the parametric structure is valuable at no operational cost. EDMD is a valid alternative when interpretability is secondary and the data volume is large enough to support non-parametric operator estimation.

\subsection{Favorita Koopman Analysis}

We repeat the EDMD pseudospectral analysis on the Favorita Grocery Sales dataset (Ecuador, 2013--2017), which contains 4,036 items across 24 product families. Unlike M5, Favorita has no SNAP program; the only binary event feature is the \texttt{onpromotion} flag (encoded as \texttt{promo\_depth}). We fit EDMD at category level (24 families) and globally, pooling all 476 sampled items.

\textbf{Stability.} All 24 category-level operators and the global operator are spectrally stable ($\rho < 1$). Spectral radii range from 0.955 (school/office supplies) to 0.997 (meats), with the global operator at $\rho = 0.977$. This confirms that the stability observed on M5 is not a Walmart-specific artifact: stable, decaying Koopman dynamics are a generic feature of FMCG demand.

\textbf{Non-normality and transient growth.} Table~\ref{tab:edmd_spectral_favorita} reports spectral properties for a representative subset of Favorita categories plus the global operator. The pattern is qualitatively identical to M5: the numerical abscissa $\omega(K)$ exceeds the spectral abscissa $\alpha(K)$ for every category, confirming non-normality. The gap is especially severe in perishable categories (BREAD/BAKERY: $\omega - \alpha = 18.40$; PREPARED FOODS: 5.92; PRODUCE: 5.21), where transient growth reaches 45.9$\times$ above the spectral prediction. Even stably non-perishable categories (DAIRY, FROZEN FOODS, EGGS, MEATS) show moderate non-normality ($\omega - \alpha$ 0.09--0.26), with transient factors of 2.1--3.4$\times$.

\begin{table}[ht]
\centering
\footnotesize
\setlength{\tabcolsep}{3pt}
\caption{Spectral properties of learned Koopman operators on Favorita (selected categories).}
\label{tab:edmd_spectral_favorita}
\begin{tabular}{lccccc}
\toprule
Operator & $\rho(K)$ & $\alpha(K)$ & $\omega(K)$ & $\omega - \alpha$ & Max Transient \\
\midrule
Dairy & 0.990 & 0.990 & 1.248 & 0.257 & 2.83$\times$ \\
Grocery I & 0.981 & 0.981 & 4.078 & 3.097 & 10.46$\times$ \\
Bread/Bakery & 0.987 & 0.987 & 19.390 & 18.403 & 45.88$\times$ \\
Frozen Foods & 0.993 & 0.993 & 1.208 & 0.215 & 2.71$\times$ \\
Beverages & 0.983 & 0.983 & 3.891 & 2.908 & 13.10$\times$ \\
Produce & 0.985 & 0.985 & 6.197 & 5.212 & 22.75$\times$ \\
Deli & 0.992 & 0.992 & 3.158 & 2.166 & 10.19$\times$ \\
Meats & 0.997 & 0.997 & 1.089 & 0.091 & 2.00$\times$ \\
Eggs & 0.994 & 0.994 & 1.151 & 0.157 & 2.54$\times$ \\
Global & 0.977 & 0.977 & 2.912 & 1.935 & 10.21$\times$ \\
\bottomrule
\end{tabular}
\end{table}

Figure~\ref{fig:edmd_pseudospectra_favorita} shows the pseudospectra for the four largest categories. The $\epsilon = 10^{-1}$ contour crosses the unit circle for all operators, confirming that arbitrarily small perturbations can destabilize the operator. Figure~\ref{fig:edmd_transient_favorita} shows the transient growth curves: the global operator reaches $10.2\times$ above the spectral prediction at lag 15, and the most non-normal category (BREAD/BAKERY) reaches $45.9\times$.

\begin{figure}[ht]
\centering
\includegraphics[width=0.95\columnwidth]{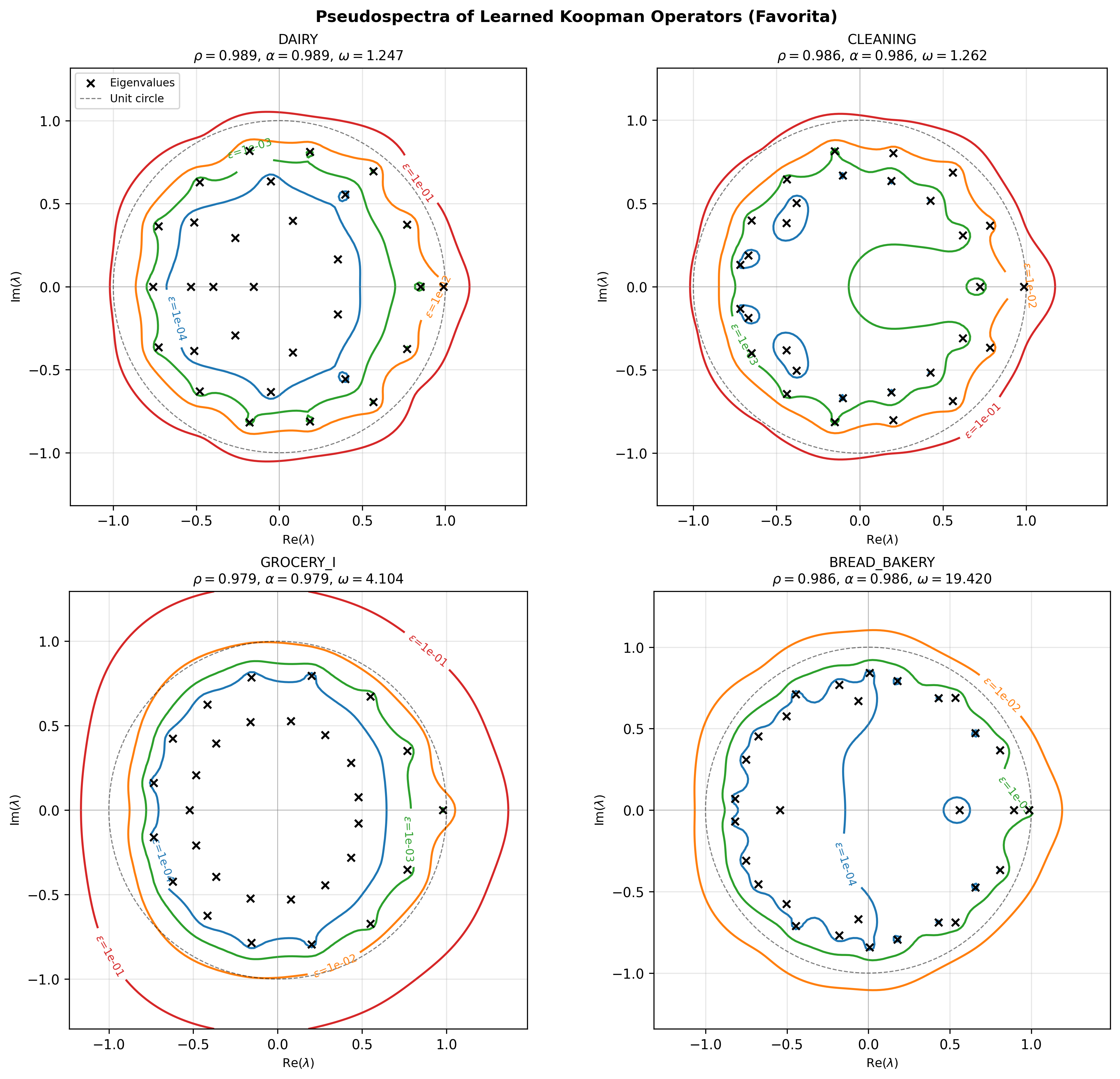}
\caption{Pseudospectra of learned Koopman operators on Favorita ($\epsilon = 10^{-4}, 10^{-3}, 10^{-2}, 10^{-1}$). Black $\times$ marks eigenvalues; dashed gray circle is the unit circle. All four shown operators are highly non-normal: the $\epsilon=10^{-1}$ contour extends well outside the unit circle.}
\label{fig:edmd_pseudospectra_favorita}
\Description{Four-panel figure showing pseudospectral contours of EDMD operators for Dairy, Grocery I, Cleaning, and Bread/Bakery on Favorita.}
\end{figure}

\begin{figure}[ht]
\centering
\includegraphics[width=0.95\columnwidth]{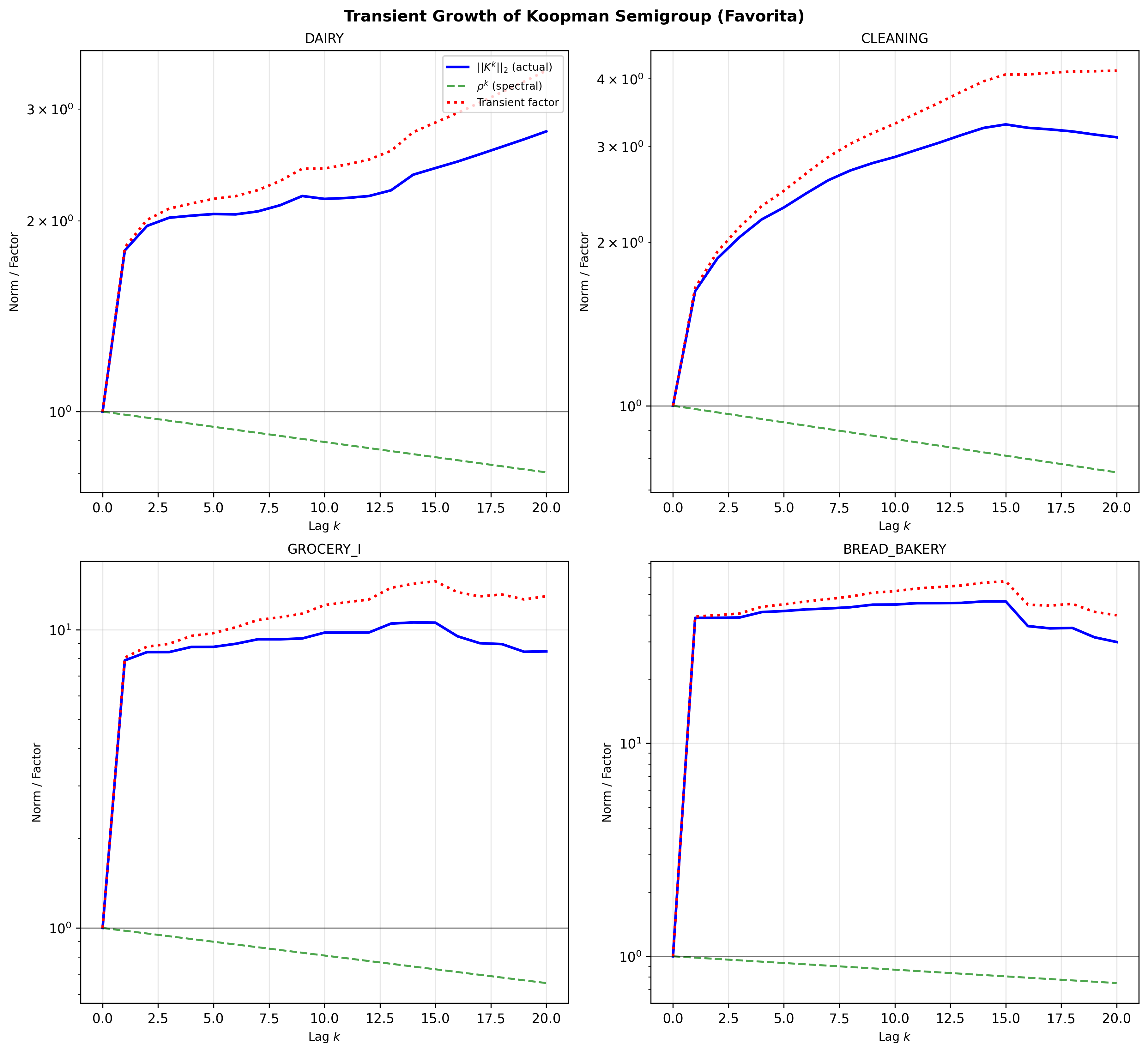}
\caption{Transient growth of the Koopman semigroup $||K^k||_2$ on Favorita (blue), spectral prediction $\rho^k$ (green dashed), and transient factor $||K^k||_2 / \rho^k$ (red dotted). The global operator exhibits transient growth $10.2\times$ above the spectral prediction.}
\label{fig:edmd_transient_favorita}
\Description{Four-panel figure showing transient growth curves for Dairy, Grocery I, Cleaning, and Bread/Bakery EDMD operators on Favorita.}
\end{figure}

\textbf{Operational metrics.} Substituting category-level EDMD kernels into the Stage-3 evaluation pipeline on 476 Favorita items yields median VR $= 0.0072$ and wMAPE $= 0.870$ (global-EDMD: VR $= 0.0073$, wMAPE $= 0.868$), versus the parametric kernel's VR $= 0.0053$ and wMAPE $= 0.886$. Safety stock is comparable (parametric: 0.256; category-EDMD: 0.265; global-EDMD: 0.271). The parametric kernel achieves slightly lower VR (Wilcoxon signed-rank $p = 0.004$ for category, $p < 0.001$ for global), but the absolute difference is small---one order of magnitude smaller than the gap between either method and the ML baselines (Tuned XGBoost: VR $0.043$ on Favorita, evaluated on a larger paired subset). The same qualitative pattern as M5 holds: EDMD is a genuine competitor on the same sample, but the parametric kernel offers comparable accuracy with superior interpretability and avoids the non-normality-induced diffusion that flattens the impulse response.

\textbf{Cross-dataset comparison.} Table~\ref{tab:edmd_cross_dataset} summarizes the spectral and operational findings across both datasets. The central result is consistent: data-driven Koopman operators are spectrally stable ($\rho < 1$) but highly non-normal ($\omega \gg \alpha$) on both M5 and Favorita. The non-normality is stronger on Favorita (global $\omega - \alpha = 1.93$ vs.~M5 global $0.68$), likely because Favorita's 24-category structure creates more heterogeneous dynamics that the pooled operator must compress. Transient growth is larger on Favorita (global $10.2\times$ vs.~M5 global $7.4\times$), and the most extreme category reaches $45.9\times$ (BREAD/BAKERY) versus $11.6\times$ (hobbies) on M5. Despite this larger non-normality, the operational penalty is small: the parametric kernel still achieves lower VR than EDMD on both datasets, and the gap between EDMD and the ML baselines is far larger than the gap between EDMD and the parametric kernel.

\begin{table}[ht]
\centering
\footnotesize
\setlength{\tabcolsep}{4pt}
\caption{Cross-dataset comparison of Koopman spectral properties and operational metrics.}
\label{tab:edmd_cross_dataset}
\resizebox{\columnwidth}{!}{%
\begin{tabular}{@{}lcccccc@{}}
\toprule
Dataset & Operator & $\rho(K)$ & $\omega - \alpha$ & Max Transient & VR (median) & wMAPE (median) \\
\midrule
M5 & Foods & 0.983 & 0.957 & 7.43$\times$ & --- & --- \\
M5 & Household & 0.978 & 0.594 & 8.74$\times$ & --- & --- \\
M5 & Hobbies & 0.970 & 1.395 & 11.58$\times$ & --- & --- \\
M5 & Global & 0.979 & 0.677 & 7.43$\times$ & 0.0022 & 1.034 \\
\midrule
Favorita & Dairy & 0.990 & 0.257 & 2.83$\times$ & --- & --- \\
Favorita & Grocery I & 0.981 & 3.097 & 10.46$\times$ & --- & --- \\
Favorita & Bread/Bakery & 0.987 & 18.403 & 45.88$\times$ & --- & --- \\
Favorita & Global & 0.977 & 1.935 & 10.21$\times$ & 0.0073 & 0.868 \\
\bottomrule
\end{tabular}
}
\end{table}

\textbf{Takeaway.} Non-normal Koopman dynamics are not an M5 artifact. They are a structural property of learned demand operators across heterogeneous retail catalogs. The parametric kernel's advantage---compact, interpretable, and non-normality-free---generalizes from Walmart (M5) to Ecuador (Favorita). The practical case for the parametric family is strengthened by this cross-dataset replication: it regularizes non-normal operator diffusion into a physically meaningful form regardless of geographic market or category structure.

\end{document}